\title{End-to-end Learning of a Convolutional Neural Network\\via Deep Tensor Decomposition}
\author{}
\providecommand*{\boxast}{%
  \mathbin{% as \boxplus and \boxtimes
    \mathpalette\@boxit{*}%
  }%
}
\newcommand*{\@boxit}[2]{%
  % #1: math style (\displaystyle, \textstyle, ...)
  % #2: symbol to be boxed that is centered around the math axis
  \sbox0{$\m@th#1\Box$}%
  % Manual correction for font bounding boxes:
  \ifx#1\displaystyle \ht0=\dimexpr\ht0+.05ex\relax \fi
  \ifx#1\textstyle \ht0=\dimexpr\ht0+.05ex\relax \fi
  \ifx#1\scriptstyle \ht0=\dimexpr\ht0+.04ex\relax \fi
  \ifx#1\scriptscriptstyle \ht0=\dimexpr\ht0+.065ex\relax \fi
  \sbox2{$#1\vcenter{}$}% \ht2 is positionn of math axis
  \rlap{%
    \hbox to \wd0{%
      \hfill
      \raisebox{%
        \dimexpr.5\dimexpr\ht0+\dp0\relax-\ht2\relax
      }{$\m@th#1#2$}%
      \hfill
    }%
  }%
  \Box
}
\def\BState{\State\hskip-\ALG@thistlm}
\newcommand{\tsn}[1]{{\left\vert\kern-0.25ex\left\vert\kern-0.25ex\left\vert #1 
    \right\vert\kern-0.25ex\right\vert\kern-0.25ex\right\vert}}
\definecolor{darkred}{RGB}{150,0,0}
\definecolor{darkgreen}{RGB}{0,150,0}
\definecolor{darkblue}{RGB}{0,0,200}
\newtheorem{theorem}{Theorem}[section]
\newtheorem{lemma}[theorem]{Lemma}
\newtheorem{definition}[theorem]{Definition}
\newcommand{\eps}{\varepsilon}
\newcommand{\KB}{\bar{K}}
\newcommand{\lay}[1]{{\vct{k}}^{(#1)}}
\newcommand{\lah}[1]{{\vct{\hat{k}}}^{(#1)}}
\newcommand{\lan}{\vct{k}}
\newcommand{\rest}{{\bf{r}}}
\newcommand{\fcp}{\vct{g}^{\text{CNN}}}
\newcommand{\beq}{\begin{equation}}
\newcommand{\eeq}{\end{equation}}
\newcommand{\modu}{{{\text{{mod}}}}}
\newcommand{\func}[1]{{f_{\text{CNN}}}(#1)}
\newcommand{\funh}[1]{{{\hat{f}}_{\text{CNN}}}(#1)}
\newcommand{\nn}{\nonumber}
\newcommand{\bt}{\bigotimes}
\newcommand{\robt}[1]{\bigotimes_{\ell=1}^D{#1}_{\ell}}
\newcommand{\robtu}[1]{\bigotimes_{\ell=1}^D{#1}^{(\ell)}}
\newcommand{\ten}{(\{\vb_\ell\}_{\ell=1}^D)}
\newcommand{\uten}{(\{\ub_\ell\}_{\ell=1}^D)}
\newcommand{\yavg}{{{y}_{\text{avg}}}}
\newcommand{\ravg}[1]{{{r}_{{\text{avg}}}(#1)}}
\newcommand{\Lc}{{\cal{L}}}
\newcommand{\btd}{{\bt_{\ell=1}^D d_\ell}}
\newcommand{\Ro}{{\cal{RO}}}
\newcommand{\Tb}{{\mtx{T}}}
\newcommand{\Eb}{{\mtx{E}}}
\newcommand{\Hb}{{\mtx{H}}}
\newcommand{\sigmap}{\phi'}
\newcommand{\Iden}{{\mtx{I}}}
\newcommand{\order}[1]{{\cal{O}}(#1)}
\newcommand{\el}{{\ell}}
\newcommand{\tn}[1]{\|{#1}\|_{\ell_2}}
\newcommand{\tf}[1]{\|{#1}\|_{F}}
\newcommand{\te}[1]{\|{#1}\|_{\psi_1}}
\newcommand{\tsub}[1]{\|{#1}\|_{\psi_2}}
\newcommand{\tsup}[1]{\|{#1}\|_{\psi_a}}
\newcommand{\Cc}{\mathcal{C}}
\newcommand{\Bc}{\mathcal{B}}
\newcommand{\pa}{{\partial}}
\newcommand{\Nn}{\mathcal{N}}
\newcommand{\vb}{\vct{v}}
\newcommand{\fb}{\vct{f}}
\newcommand{\ib}{{\bf{i}}}
\newcommand{\Ic}{{\mathcal{I}}}
\newcommand{\li}{\left<}
\newcommand{\ri}{\right>}
\newcommand{\ub}{{\vct{u}}}
\newcommand{\h}{\vct{h}}
\newcommand{\hh}{{\vct{h}}}
\newcommand{\zm}[1]{{\text{{\bf{zm}}}(#1)}}
\newcommand{\Fc}{\mathcal{F}}
\newcommand{\cgain}{\alpha_{\text{CNN}}}
\newcommand{\xbr}{\bar{\h}}
\newcommand{\mapp}{\mu}
\newcommand{\lcnn}{{\mtx{L}}_{\text{CNN}}}
\newcommand{\smo}{S}
\newcommand{\fronorm}[1]{\left\|#1\right\|_{F}}
\newcommand{\twonorm}[1]{\left\|#1\right\|_{\ell_2}}
\newcommand{\infnorm}[1]{\left\|#1\right\|_{\ell_\infty}}
\newcommand{\abs}[1]{\left|#1\right|}
\newcommand{\lipp}{\prod_{\ell=1}^D \twonorm{\vct{k}^{(\ell)}}}
\newcommand{\nntd}{\text{DeepTD}}
\newcommand{\x}{\vct{x}}
\newcommand{\y}{\vct{y}}
\newcommand{\bgl}{{~\big |~}}
\definecolor{emmanuel}{RGB}{255,127,0}
\newcommand{\Kb}{{\mtx{K}}}
\newcommand{\R}{\mathbb{R}}
\newcommand{\Pro}{\mathbb{P}}
\newcommand{\E}{\operatorname{\mathbb{E}}}
\newcommand{\vct}[1]{\bm{#1}}
\newcommand{\mtx}[1]{\bm{#1}}
\newcommand{\X}{{\mtx{X}}}
\newcommand{\Y}{{\mtx{Y}}}
\numberwithin{equation}{section} 
\def \endprf{\hfill {\vrule height6pt width6pt depth0pt}\medskip}
\newenvironment{proof}{\noindent {\bf Proof} }{\endprf\par}
\author{Samet Oymak\thanks{{Department of Electrical and Computer Engineering, University of California, Riverside, CA}}\quad and\quad Mahdi Soltanolkotabi\thanks{Ming Hsieh Department of Electrical Engineering, University of Southern California, Los Angeles, CA}}
\begin{document}
\maketitle
\begin{abstract} 
In this paper we study the problem of learning the weights of a deep convolutional neural network. We consider a network where convolutions are carried out over non-overlapping patches with a single kernel in each layer. We develop an algorithm for simultaneously learning all the kernels from the training data. Our approach dubbed Deep Tensor Decomposition (DeepTD\footnote{Pun intended.}) is based on a rank-$1$ tensor decomposition. We theoretically investigate DeepTD under a realizable model for the training data where the inputs are chosen i.i.d.~from a Gaussian distribution and the labels are generated according to planted convolutional kernels. We show that DeepTD is data--efficient and provably works as soon as the sample size exceeds the total number of convolutional weights in the network. We carry out a variety of numerical experiments to investigate the effectiveness of DeepTD and verify our theoretical findings.
\end{abstract}
\section{Introduction}
Deep neural network (DNN) architectures have led to state of the art performance in many domains including image recognition, natural language processing, recommendation systems, and video analysis \cite{he2016deep,krizhevsky2012imagenet,van2013deep,collobert2008unified,KJcell}. Convolutional neural networks (CNNs) are a class of deep, feed-forward neural networks with a specialized DNN architecture. CNNs are responsible for some of the most significant performance gains of DNN architectures. In particular, CNN architectures have led to striking performance improvements for image/object recognition tasks. Convolutional neural networks, loosely inspired by the visual cortex of animals, construct increasingly higher level features (such as mouth and nose) from lower level features such as pixels. An added advantage of CNNs which makes them extremely attractive for large-scale applications is their remarkable efficiency which can be attributed to: (1) intelligent utilization of parameters via weight-sharing, (2) their convolutional nature which exploits the local spatial structure of images/videos effectively, and (3) highly efficient matrix/vector multiplication involved in CNNs compared to fully-connected neural network architectures.

Despite the wide empirical success of CNNs the reasons for the effectiveness of neural networks and CNNs in particular is still a mystery. Recently there has been a surge of interest in developing more rigorous foundations for neural networks \cite{sol2017,zhong2017recovery,brutzkus2017globally, soltanolkotabi2017learning, oymak2018learning,zhong2017learning,janzamin2015beating,li2017convergence,mei2018mean}. Most of this existing literature however focus on learning shallow neural networks typically consisting of zero or one hidden layer. In practical applications, depth seems to play a crucial role in constructing progressively higher-level features from pixels. Indeed, state of the art Resnet models typically have hundreds of layers. Furthermore, recent results suggest that increasing depth may substantially boost the expressive power of neural networks \cite{raghu2016expressive,cohen2016expressive}.

In this paper, we propose an algorithm for approximately learning an arbitrarily deep CNN model with rigorous guarantees. Our goal is to provide theoretical insights towards better understanding when training deep CNN architectures is computationally tractable and how much data is required for successful training. We focus on a realizable model where the inputs are chosen i.i.d.~from a Gaussian distribution and the labels are generated according to planted convolutional kernels. We use both labels and features in the training data to construct a tensor. Our first insight is that, in the limit of infinite data this tensor converges to a \emph{population} tensor which is approximately rank one and whose factors reveal the direction of the kernels. Our second insight is that even with finite data this \emph{empirical} tensor is still approximately rank one. We show that the gap between the population and empirical tensors provably decreases with the increase in the size of the training data set and becomes negligible as soon as the size of the training data becomes proportional to the total numbers of the parameters in the planted CNN model. Combining these insights we provide a tensor decomposition algorithm to learn the kernels from training data. We show that our algorithm approximately learns the kernels (up to sign/scale ambiguities) as soon as the size of the training data is proportional to the total number of parameters of the planted CNN model. Our results can be viewed as a first step towards provable end-to-end learning of practical deep CNN models. Extending the connections between neural networks and tensors \cite{janzamin2015beating,cohen2016expressive,zhong2017learning}, we show how tensor decomposition can be utilized to approximately learn deep networks despite the presence of nonlinearities and growing depth. While our focus in this work is limited to tensors, we believe that our proposed algorithm may provide valuable insights for initializing local search methods (such as stochastic gradient descent) to enhance the quality and/or speed of CNN training.
\section{Problem formulation and models}
In this section we discuss the CNN model which is the focus of this paper. Our exposition is directly adapted from the companion paper \cite{first_layer}. A fully connected artificial neural network is composed of computational units called neurons. The neurons are decomposed into layers consisting of one input layer, one output layer and a few hidden layers with the output of each layer is fed in (as input) to the next layer. In a CNN model the output of each layer is related to the input of the next layer by a convolution operation. In this paper we focus on CNN model where the stride length is equal to the length of the kernel. This is sometimes referred to as a non-overlapping convolution operation. We now formally define this operation for future reference. 
\begin{definition}[Non-overlapping convolution]
For two vectors $\vct{k}\in\R^d$ and $\vct{h}\in\R^{p=d\bar{p}}$ their non-overlapping convolution, denoted by $\vct{k}\boxast\vct{h}$ yields a vector $\vct{u}\in\R^{\bar{p}=\frac{p}{d}}$ whose entries are given by
\begin{align*}
\vct{u}_i=\langle \vct{k}^{(\ell)},\vct{h}[i]\rangle\quad\text{where}\quad \vct{h}[i]:=\begin{bmatrix}\vct{h}_{(i-1)d+1}\\\vct{h}_{(i-1)d+2}\\\vdots\\\vct{h}_{id}\end{bmatrix}.
\end{align*}
\end{definition}
As mentioned earlier the non-overlapping convolution resembles the standard convolution between two vectors $\vct{k}\in\R^d$ and $\vct{h}\in\R^{p}$, denoted by $\vct{k}\circledast \vct{h}$, which yields a vector in $\R^{d+p}$. The main difference is that rather than sliding the kernel one entry at a time across overlapping patches of $\vct{h}$ in the convolution summation we use a stride length equal to the size of the kernel (i.e.~slide the kernel by the length of the kernel) so that the summation is carried out over non-overlapping patches of $\vct{h}$. In this paper it is often convenient to view convolutions as matrix/vector multiplications. This leads us to the definition of the kernel matrix below.
\begin{definition}[Kernel matrix]\label{kermat} Consider a kernel $\vct{k}\in\R^d$ and any vector $\vct{h}\in\R^{p=\bar{p}d}$. Corresponding to the non-overlapping convolution $\vct{k}\boxast\vct{h}$, we associate a kernel matrix $\mtx{K}_{\boxast}\in\R^{\bar{p}\times p}$ defined as
\begin{align*}
\mtx{K}_{\boxast}:=\mtx{I}_{\bar{p}}\otimes \vct{k}^T=\begin{bmatrix}\vct{k}^T & \vct{0} & \vct{0}&\ldots &\vct{0}\\\vct{0} &\vct{k}^T &\vct{0}&\ldots &\vct{0}\\\vdots & \vdots &\vdots & \ddots &\vdots \\
\vct{0} & \vct{0} & \vct{0} &\ldots &\vct{k}^T \end{bmatrix}.
\end{align*}
Here, $\mtx{A}\otimes \mtx{B}$ denotes the Kronecker product between the two matrices $\mtx{A}$ and $\mtx{B}$ and $\mtx{I}_{\bar{p}}$ denotes the $\bar{p}\times \bar{p}$ identity matrix. We note that based on this definition $\vct{k}\boxast \vct{h}=\mtx{K}_{\boxast}\vct{h}$. Throughout the paper we shall use $\mtx{K}$ interchangeably with $\mtx{K}_{\boxast}$ to denote this kernel matrix with the dependence on the underlying kernel and its non-overlapping form implied.
\end{definition}
 With the definition of the non-overlapping convolution and the corresponding kernel matrix in hand we are now ready to define the CNN model which is the focus of this paper. For the convenience of the reader the CNN input-output relationship along with the corresponding notation is depicted in Figure \ref{NCNNmodel}.  
  \begin{figure}
\centering
\begin{tikzpicture}
\node at (-2.5,0) {\includegraphics[scale=0.35]{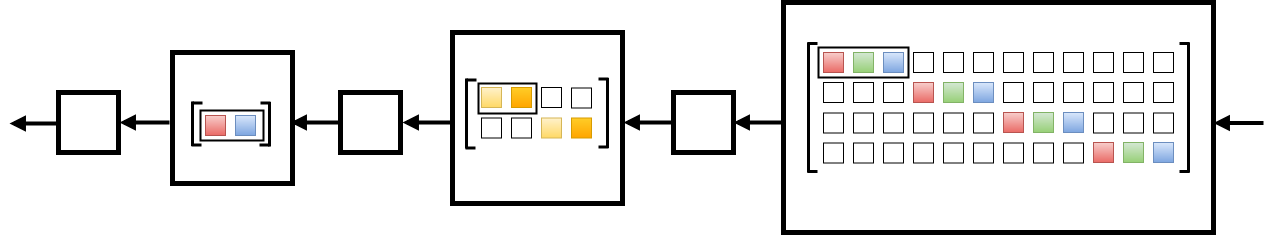}};
\node[black] at (0.6,1.125) {$\vct{k}^{(1)}\in\R^{d_1}$};
\node[black] at (-3.7,0.8) {$\vct{k}^{(2)}\in\R^{d_2}$};
\node[black] at (-7.45,0.55) {\small$\vct{k}^{(3)}\in\R^{d_3}$};
\node[black] at (2,-0.9) {$\vct{K}^{(1)}\in\R^{p_1\times p}$};
\node[black] at (-3.72,-0.6) {${\small\vct{K}^{(2)}\in\R^{p_2\times p_1}}$};
\node[black] at (-7.5,-0.5) {\tiny{$\vct{K}^{(3)}\in\R^{1\times p_2}$}};
\node[black] at (-1.65,-0.05) {\Large{$\phi_1$}};
\node[black] at (-5.75,-0.05) {\Large{$\phi_2$}};
\node[black] at (-9.25,-0.05) {\Large{$\phi_3$}};
\node[black] at (5.2,0.25){\small{$\vct{x}\in\R^p$}};
\node[black] at (5.2,-0.29) {\small{$\vct{h}^{(0)}$}}; 
\node[black] at (-0.975,0.25) {\scriptsize{$\bar{\vct{h}}^{(1)}$}};
\node[black] at (-2.35,0.25) {\scriptsize{$\vct{h}^{(1)}$}};
%\node[black] at (-1.525,-0.35) {\small{$\vct{x}_{\vct{w}}$}}; 
\node[black] at (-5.075,0.25) {\scriptsize{$\bar{\vct{h}}^{(2)}$}};
\node[black] at (-6.45,0.25) {\scriptsize{$\vct{h}^{(2)}$}};
\node[black] at (-10.35,0.25) {\small{$\func{\vct{x}}~$}};
\node[black] at (-10.45,-0.35) {\small{$\vct{h}^{(D)}~~$}};
\node[black] at (-10.35,-0.75) {\small{$=\vct{h}^{(3)}$}};
\node[black] at (-8.55,0.25) {\scriptsize{$\bar{\vct{h}}^{(3)}$}};
\end{tikzpicture}
\caption{Depiction of the input-output relationship of a non-overlapping Convolutional Neural Network (CNN) model along with the various notations and symbols.} %In this example, the output activation $\phi_3$ is chosen to be identity which implies $\func{\x}=\hh^{(D)}=\xbr^{(D)}$.}
\label{NCNNmodel}
\end{figure}
 \begin{itemize}
\item \textbf{Depth and numbering of the layers.} We consider a network of depth $D$ where we number the input as layer $0$ and the output as layer $D$ and the hidden layers $1$ to $D-1$. 
\item \textbf{Layer dimensions and representations.} We assume the input of the CNN, denoted by $\vct{x}\in\R^p$, consists of $p$ features and the output is a one dimensional label. We also assume the hidden layers (numbered by $\ell=1,2,\ldots, D-1$) consists of  $p_\ell$ units with $\bar{\vct{h}}^{(\ell)}\in\R^{p_\ell}$ and $\vct{h}^{(\ell)}\in\R^{p_\ell}$ denoting the input and output values of the units in the $\ell$th hidden layer. For consistency of our notation we shall also define $\vct{h}^{(0)}:=\vct{x}\in\R^p$ and note that the output of the CNN is $\vct{h}^{(D)}\in\R$. Furthermore, $p_0=p$ and $p_D=1$.
\item \textbf{Kernel dimensions and representation.} For $\ell=1,\ldots D$ we assume the kernel relating the output of layer $(\ell-1)$ to the input of layer $\ell$ is of dimension $d_\ell$ and is denoted by $\vct{k}^{(\ell)}\in\R^{d_\ell}$. %We note that based on this notation $d_0=d$ is the length of the input kernel. Since in this paper we are interested in learning the input kernel $\vct{k}^{(0)}\in\R^d$ we shall also use the variable $\vct{w}\in\R^d$ to denote this input kernel.
\item \textbf{Inter-layer relationship.} We assume the inputs of layer $\ell$ (denoted by $\bar{\vct{h}}^{(\ell)}\in\R^{p_{\ell}}$) are related to the outputs of layer $(\ell-1)$ (denoted by $\vct{h}^{(\ell-1)}\in\R^{p_{\ell-1}}$) via a non-overlapping convolution
\begin{align*}
\bar{\vct{h}}^{(\ell)}=\vct{k}^{(\ell)}\boxast \vct{h}^{(\ell-1)}=\mtx{K}^{(\ell)}\vct{h}^{(\ell-1)}\quad\text{for}\quad \el=1,\ldots, D.
\end{align*}
In the latter equality we have used the representation of non-overlapping convolution as a matrix/vector product involving the kernel matrix $\mtx{K}^{(\ell)}\in\R^{p_{\ell}\times p_{\ell-1}}$ associated with the kernel $\vct{k}^{(\ell)}\in\R^{d_\ell}$ per Definition \ref{kermat}. We note that by the nature of the non-overlapping convolution we have that $p_{\ell}=p_{\ell-1}/d_{\ell}$.
\item \textbf{Activation functions and intra-layer relationship.} We assume the input of each hidden unit is related to its output by applying an activation function $\phi_\ell:\R\rightarrow \R$. More precisely, $\vct{h}^{(\ell)}:=\phi_\ell(\bar{\vct{h}}^{(\ell)})$ where for a vector $\vct{u}\in\R^p$, $\phi_\ell(\vct{u})\in\R^p$ is a vector obtained by applying the activation function $\phi_\ell$ to each of the entries of $\vct{u}$. We allow for using distinct activation functions $\{\phi_\el\}_{\el=1}^D$ at every layer. Throughout, we also assume all activations are $1$-Lipschitz functions (i.e.~$\abs{\phi_\ell(a)-\phi_\ell(b)}\le |a-b|$). %focus on regression problems and thus use the identity activation for the output layer i.e.~$\vct{h}^{(D)}=\bar{\vct{h}}^{(D)}$. 
\item \textbf{Final output.} To summarize the input-output relation of our CNN model with an input $\vct{x}\in\R^p$ and input kernel $\vct{k}^{(1)}\in\R^{d_1}$ is given by
\begin{align}
\label{CNNform}
\vct{x}\mapsto f_{CNN}(\vct{x}):=\hh^{(D)},
\end{align}
with
\begin{align}
\label{hiddenform}
&\hh^{(\el)}=\phi_\ell(\xbr^{(\el)})\quad \text{and}\quad \xbr^{(\el)}=\Kb^{(\ell)}\hh^{(\el-1)}=\Kb^{(\ell)}\phi_{\ell-1}\left(\Kb^{(\ell-1)}\phi_{\ell-2}\left(\dots\left(\Kb^{(2)}\phi_1\left(\mtx{K}^{(1)}\vct{x}\right)\right)\right)\right).
\end{align}
\end{itemize}
\section{Algorithm: Deep Tensor Decomposition (DeepTD)}
This paper introduces an approach to approximating the convolutional kernels from training data based on tensor decompositions dubbed DeepTD, which consists of a carefully designed tensor decomposition. To connect these two problems, we begin by stating how we intend to construct the tensor from the training data. To this aim given any input data $\x\in\R^p$, we form a $D$-way tensor $\X\in\R^{\btd}$ as follows. First, we convert $\x$ into a matrix by placing every $d_1$ consecutive entries of $\vct{x}$ as a row of a matrix of size $p_1\times d_1$. From this matrix we then create a 3-way tensor of size $p_2\times d_2\times d_1$ by grouping $d_2$ consecutive entries of each of the $d_1$ columns and so on. We repeat this procedure $D$ times to arrive at the $D$-way tensor $\X\in\R^{\btd}$. We define $\mathcal{T}:\R^p\mapsto \R^{\btd}$ as the corresponding  tensor operation that maps $\vct{x}\in\R^p$ to $\mtx{X}\in\R^{\btd}$.

Given a set of training data consisting of $n$ input/output pairs $(\vct{x}_i,y_i)\in\R^p\times \R$ we construct a tensor $\mtx{T}_n$ by tensorizing the input vectors as discussed above and calculating a weighted combination of these tensorized inputs where the weights depends on the output labels fo the training data. More precisely,
\begin{align}
\label{defTn}
\mtx{T}_n:=\frac{1}{n}\sum_{i=1}^n \left(y_i-y_{avg}\right)\mtx{X}_i\quad \text{where}\quad y_{avg}=\frac{1}{n}\sum_{i=1}^ny_i\quad\text{and}\quad \mtx{X}_i=\mathcal{T}(\vct{x}_i).
\end{align}
We then perform a rank-$1$ tensor decomposition on this $T$ tensor to approximate the convolutional kernels. Specifically we solve
\begin{align}
\label{deeptdalg}
\hat{\vct{k}}^{(1)},\ldots,\hat{\vct{k}}^{(D)}=\underset{\vct{v}_1\in\R^{d_1},\vct{v}_2\in\R^{d_2},\ldots,\vct{v}_D\in\R^{d_D}}{\arg\max} \li\Tb_n,\robt{\vb}\ri\quad\text{subject to}\quad \twonorm{\vct{v}_1}=\twonorm{\vct{v}_2}=\ldots=\twonorm{\vct{v}_D}=1.
\end{align}
In the above $\robt{\vb}$ denotes the tensor resulting from the outer product of the vectors $\vct{v}_1,\vct{v}_2,\ldots,\vct{v}_D$. This tensor rank decomposition is also known as CANDECOMP/PARAFAC (CP) decomposition \cite{bro1997parafac} and can be solved efficiently using Alternating Least Squares (ALS) and a variety of other algorithms \cite{anandkumar2014tensor,ge2015escaping,anandkumar2014guaranteed}.\footnote{We would like to note that we believe that the assumptions under which the above stated algorithms solve the problem \eqref{deeptdalg} holds for the empirical tensor $\mtx{T}_n$ under our planted model. We do not however pursue this here and leave it to future work.}

At this point it is completely unclear why the tensor $T_n$ or its rank-$1$ decomposition can yield anything useful. The main intuition is that as the data set grows ($n\rightarrow\infty$) the \emph{empirical} tensor $\Tb_n$ converges close to a \emph{population} tensor $\mtx{T}$ whose rank-$1$ decomposition reveals useful information about the factors. Specifically, we will show that
\begin{align}
\label{approxiden}
%\frac{n}{n-1}\E[\mtx{T}_n]=\Tb:=\E_{\vct{x}\sim\mathcal{N}(\vct{0},\mtx{I}_p)}[\func{\vct{x}}\mathcal{T}(\vct{x})]\approx \alpha \robtu{\vct{k}},
\underset{n\rightarrow\infty}{\lim}\mtx{T}_n=\Tb:=\E_{\vct{x}\sim\mathcal{N}(\vct{0},\mtx{I}_p)}[\func{\vct{x}}\mathcal{T}(\vct{x})]\approx \alpha \robtu{\vct{k}},
\end{align}
with $\alpha$ a scalar whose value shall be discussed later on. Here, $\vct{x}$ is a Gaussian random vector with i.i.d.~$\mathcal{N}(0,1)$ entries and represents a typical input with $\func{\vct{x}}$ the corresponding output and $\mathcal{T}(\vct{x})$ the tensorized input. We will also utilize a concentration argument to show that when the training data set originates from an i.i.d.~distribution, for a sufficiently large training data $n$, $\Tb_n$ yields a good approximation of the population tensor $\Tb$.

Another perhaps perplexing aspect of the construction of $\Tb_n$ in \eqref{defTn} the subtraction by $y_{avg}$ in the weights. The reason this may be a source of confusion is that based on the intuition above
\begin{align*}
\E[\Tb]=\E\Big[\frac{1}{n}\sum_{i=1}^n y_i\mtx{X}_i\Big]=\E\Big[\frac{1}{n-1}\sum_{i=1}^n \left(y_i-y_{avg}\right)\mtx{X}_i\Big]=\frac{n}{n-1}\E[\Tb_n]\approx \alpha \robtu{\vct{k}},
\end{align*}
so that the subtraction by the average seems completely redundant. The main purpose of this subtraction is to ensure the weights $y_i-y_{avg}$ are centered (have mean zero). This centering allows for a much better concentration of the empirical tensor around its population counter part and is crucial to the success of our approach. We would like to point out that such a centering procedure is reminiscent of batch-normalization heuristics deployed when training deep neural networks.

Finally, we note that based on \eqref{approxiden}, the rank-$1$ tensor decomposition step can recover the convolutional kernels $\{\lay{\el}\}_{\el=1}^D$ up to sign and scaling ambiguities. Unfortunately, depending on the activation function, it may be impossible to overcome these ambiguities. For instance, if the activations are homogeneous (i.e. $\phi_\ell(ax)=a\phi_\ell(x)$), then scaling up one layer and scaling down the other layer by the same amount does not change the overall function $\func{\cdot}$. Similarly, if the activations are odd functions, negating two of the layers at the same time preserves the overall function. In Section \ref{resolve} we discuss some heuristics ands theoretical guarantees for overcoming these sign/scale ambiguities.

\section{Main results}\label{sec main res}
In this section we introduce our theoretical results for DeepTD. We will discuss these results in three sections. In Section \ref{sec1} we show that the empirical tensor concentrates around its population counterpart. Then in Section \ref{sec2} we show that the population tensor is well-approximated by a rank-1 tensor whose factors reveal the convolutional kernels. Finally, in Section \ref{sec3} we combine these results to show DeepTD can approximately learn the convolutional kernels up to sign/scale ambiguities.

\subsection{Concentration of the empirical tensor}
\label{sec1}
Our first result shows that the empirical tensor concentrations around the population tensor. We measure the quality of this concentration via the tensor spectral norm defined below.
\begin{definition}
Let $\Ro$ be the set of rank-one tensors characterized by
\[
\Ro=\bigg\{\mtx{V}\in \R^{\btd}\bgl\mtx{V}=\robt{\vb},~\tn{\vb_i}\leq1~\text{for all}~1\leq i\leq D\bigg\}.
\]
The spectral norm of a tensor $\X\in\R^{\btd}$ is given by the supremum,
\[
\tsn{\Tb}=\sup_{\mtx{V}\in \Ro} \li\mtx{V},\Tb\ri.
\]

\end{definition}
%
%We first provide a finite sample analysis of the gap between empirical and population tensors. This bound is in terms of the tensor spectral norm and will be utilized to show robustness of $\nntd$ via Lemma \ref{nntd robust}. To state our result, we introduce $\lipp$ which is an upper bound on the {\em{Lipschitz constant}} of the network,
%\[
%\lipp=\prod_{\el=1}^D\tn{\lay{\el}}.
%\]
%With this definition, our finite sample bound is a direct conclusion of a more general result Theorem \ref{thm dev}. We use the fact that $\func{\cdot}$ is $\lipp$ Lipschitz function of its input $\x$.
\begin{theorem} \label{finite sample} Consider a CNN model $\vct{x}\mapsto f_{CNN}(\vct{x})$ of the form \eqref{CNNform} consisting of $D\ge 2$ layers with convolutional kernels $\vct{k}^{(1)}, \vct{k}^{(2)}, \ldots, \vct{k}^{(D)}$ of lengths $d_1, d_2,\ldots, d_D$. Let $\vct{x}\in\R^p$ be a Gaussian random vector distributed as $\mathcal{N}(\vct{0},\mtx{I}_p)$ with the corresponding labels $y=f_{CNN}(\vct{x})$ generated by the CNN model and $\mtx{X}:=\mathcal{T}(\vct{x})$ the corresponding tensorized input. Suppose the data set consists of $n$ training samples where the feature vectors $\vct{x}_i\in\R^p$ are distributed i.i.d.~$\mathcal{N}(\vct{0},\mtx{I}_p)$ with the corresponding labels $y_i=f_{CNN}(\vct{x}_i)$ generated by the same CNN model and $\mtx{X}_i:=\mathcal{T}(\vct{x}_i)$ the corresponding tensorized input. Then the empirical tensor $\mtx{T}_n$ and population tensor $\mtx{T}$ defined based on this dataset obey
\begin{align}
\label{concenguaran}
\tsn{\mtx{T}_n-\mtx{T}}:=\tsn{\frac{1}{n}\sum_{i=1}^n(y_i-\yavg)\X_i-\E[y\X]}\leq c\lipp\frac{\sqrt{\left(\sum_{\ell=1}^D d_\ell\right)\log D}+t}{\sqrt{n}},
\end{align}
with probability at least $1-5e^{-\min\left(t^2,t\sqrt{n},n\right)}$, where $c>0$ is an absolute constant.
\end{theorem}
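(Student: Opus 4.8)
The plan is to reduce the tensor spectral norm to the supremum of a one-dimensional empirical process indexed by rank-one tensors, and then control this process by combining a point-wise sub-exponential tail bound with a covering argument over $\Ro$. The first and most important structural observation is that the tensorization $\mathcal{T}$ is built precisely so that pairing a rank-one tensor against it collapses to an ordinary inner product: for any $\vb_1,\ldots,\vb_D$ one has $\li \robt{\vb}, \mathcal{T}(\x)\ri = \li \vb_{\text{all}},\x\ri$, where $\vb_{\text{all}}\in\R^p$ is the flattening of $\robt{\vb}$ and satisfies $\tn{\vb_{\text{all}}} = \prod_{\ell=1}^D \tn{\vb_\ell}$. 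Using the definition of the spectral norm and of $\Tb_n,\Tb$, this gives
\[
\tsn{\Tb_n - \Tb} = \sup_{\robt{\vb}\in\Ro}\left(\frac1n\sum_{i=1}^n (y_i - \yavg)\li\vb_{\text{all}},\x_i\ri - \E\big[y\li\vb_{\text{all}},\x\ri\big]\right).
\]
Since $\li\vb_{\text{all}},\x\ri$ is a centered Gaussian, $\E[y\li\vb_{\text{all}},\x\ri] = \cov(y,\li\vb_{\text{all}},\x\ri)$, and the $\yavg$-centering lets me rewrite the empirical term as an empirical covariance $\frac1n\sum_i (y_i-\yavg)(g_i - \bar g)$ with $g_i = \li\vb_{\text{all}},\x_i\ri$; I would isolate the oracle version $\frac1n\sum_i (y_i - \E y)g_i$ and treat the difference as a lower-order correction.

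Second, I would establish the point-wise sub-exponential concentration that produces the factor $\lipp$. Because every $\phi_\ell$ is $1$-Lipschitz and each kernel matrix satisfies $\|\Kb^{(\ell)}\| = \tn{\lay{\ell}}$, the composition $\func{\cdot}$ is Lipschitz with constant $\lipp$. Gaussian Lipschitz concentration then yields that $y - \E y$ is sub-Gaussian with $\tsub{y - \E y}\lesssim \lipp$, while $g = \li\vb_{\text{all}},\x\ri$ is sub-Gaussian with $\tsub{g}\le 1$; by the Orlicz Cauchy--Schwarz inequality the product $(y-\E y)g$ is sub-exponential with $\te{(y-\E y)g}\lesssim \lipp$ regardless of the dependence between $y$ and $g$. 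For a fixed $\vb_{\text{all}}$, Bernstein's inequality for sums of i.i.d.\ sub-exponential variables then gives
\[
\Pro\left(\left|\frac1n\sum_{i=1}^n (y_i - \E y)g_i - \E[y g]\right| > s\right)\le 2\exp\left(-c\,n\min\left(\frac{s^2}{\lipp^2},\frac{s}{\lipp}\right)\right).
\]

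Third, I would upgrade the point-wise bound to a uniform bound over $\Ro$ by a net argument tailored to the multilinear structure. Choosing $\eps$-nets $\mathcal{N}_\ell$ of each sphere $S^{d_\ell-1}$ at scale $\eps\asymp 1/D$ and invoking the standard tensor-spectral-norm discretization lemma (the supremum over $\Ro$ is controlled up to a constant factor by the maximum over $\prod_\ell \mathcal{N}_\ell$, since perturbing one factor perturbs $\vb_{\text{all}}$ by at most $\eps$ and the $D$ factors add up), the relevant net has cardinality $\exp\!\big(c(\sum_{\ell=1}^D d_\ell)\log D\big)$. A union bound of the Bernstein estimate over this net, with $s \asymp \lipp (\sqrt{(\sum_{\ell=1}^D d_\ell)\log D} + t)/\sqrt n$, reproduces the claimed bound: in the small-deviation (Gaussian) regime the exponent $n s^2/\lipp^2$ absorbs $\log(\text{net size})=(\sum_{\ell=1}^D d_\ell)\log D$ and leaves $t^2$, while in the large-deviation regime the exponent $n s/\lipp$ leaves $t\sqrt n$; the saturation at $n$ and the combined prefactor come from collecting these regimes together with the correction terms. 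Finally I would dispatch the $\yavg$-centering correction $(\yavg - \E y)\bar g$ and the net discretization error: both are products of quantities of order $1/\sqrt n$ (hence $\order{1/n}$) and are absorbed into the stated bound with overwhelming probability.

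The main obstacle I anticipate is the uniform control over $\Ro$ at the correct scaling. The difficulty is that sub-exponential tails are too heavy for a crude single-scale union bound to be lossless, and the multilinear (rather than linear) dependence of $\vb_{\text{all}}$ on its factors must be exploited to obtain only a $\log D$ overhead rather than a polynomial-in-$D$ loss; getting the dimension factor to be exactly $\sum_{\ell=1}^D d_\ell$ (the total number of convolutional weights) with a clean $\sqrt{\log D}$ multiplier, and verifying that the empirical-mean centering genuinely improves rather than merely preserves the concentration, is where the careful bookkeeping lies.
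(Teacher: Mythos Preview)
Your proposal is correct and uses the same two-term decomposition as the paper: a main i.i.d.\ sum $\frac1n\sum_i \zm{\zm{y_i}\X_i}$ plus the centering correction $(\yavg-\E y)\cdot\frac1n\sum_i\X_i$, with the $\lipp$ factor arising from Gaussian Lipschitz concentration of $\func{\cdot}$ exactly as you describe. The only genuine difference is how the supremum over $\Ro$ is controlled for the main term. The paper does \emph{not} use a single-scale net plus union bound; instead it shows the process $\Tb\mapsto \sum_i\li\zm{\zm{y_i}\X_i},\Tb\ri$ has mixed sub-Gaussian/sub-exponential increments, applies Dirksen's generic-chaining bound for mixed-tail processes, and upper-bounds $\gamma_2(\Ro,\tf{\cdot})$ and $\gamma_1(\Ro,\tf{\cdot})$ by Dudley's entropy integral to get $\sqrt{(\sum_\ell d_\ell)\log D}$ and $(\sum_\ell d_\ell)\log D$ respectively; the extra hypothesis $n\ge(\sum_\ell d_\ell)\log D$ (stated in the paper's Lemma for this term) is then used to absorb the $\gamma_1$ contribution into the $\gamma_2$ one. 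Your route---net each sphere at scale $\eps\asymp 1/D$, telescope to get $\tsn{\cdot}\le(1-D\eps)^{-1}\max_{\text{net}}$, and union-bound Bernstein over $\exp\big((\sum_\ell d_\ell)\log D\big)$ points---also delivers the stated bound under the same sample-size condition, because in the sub-exponential branch $\sqrt{n(\sum_\ell d_\ell)\log D}\ge(\sum_\ell d_\ell)\log D$ swallows the log-cardinality of the net. So the concern you flag at the end, while legitimate for general sub-exponential suprema, does not bite here thanks to the simple product-of-spheres structure of $\Ro$. The chaining machinery buys a cleaner packaging (no explicit case split on the Bernstein regimes); your argument is more elementary and makes transparent that the $\sqrt{\log D}$ overhead comes precisely from the $1/D$ net resolution needed for the $D$-fold telescoping.
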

The theorem above shows that the empirical tensor approximates the population tensor with high probability. This theorem also shows that the quality of this approximation is proportional to $\lipp$. This is natural as $\lipp$ is an upper-bound on the Lipschitz constant of the network and shows how much the CNN output fluctuates with changes in the input. The more fluctuations, the less concentrated the empirical tensor is, translating into a worse approximation guarantee. Furthermore, the quality of this approximation grows with the square root of the parameters in the model ($\sum_{\ell=1}^D d_\ell$) and is inversely proportional to the square root of the number of samples ($n$) which are typical scalings in statistical learning. We would also like to note that as we will see in the forthcoming sections, in many cases $\tsn{\mtx{T}}$ is roughly on the order of $\lipp$ so that \eqref{concenguaran} guarantees that $\tsn{\mtx{T}_n-\mtx{T}}/\tsn{\mtx{T}}\le c'\sqrt{\frac{\left(\sum_{\ell=1}^D d_\ell\right)\log D}{n}}$. Therefore, the relative error in the approximation is less than $\epsilon$ as soon as the number of observations exceeds the number of parameters in the model by a logarithmic factor in depth i.e.~$n\gtrsim (\sum_{\ell=1}^D d_\ell)\frac{\log D}{\epsilon^2}$.
\subsection{Rank one approximation of the population tensor}
\label{sec2}
Our second result shows that the population tensor can be approximated by a rank one tensor. To explain the structure of this rank one tensor and quantify the quality of this approximation we require a few definitions. The first quantity roughly captures the average amount by which the nonlinear activations amplify or attenuate the size of an input feature at the output. This quantity is formally defined below.
\begin{definition}[CNN gain] Let $\x\sim\Nn(0,\Iden_p)$ and define the hidden unit/output values of the CNN based on this random input per equations \eqref{hiddenform} and \eqref{CNNform}. We define the CNN gain as
\[
\cgain=\prod_{\el=1}^D\E[\sigmap_\el(\xbr^{(\el)}_1)].
\]
In words, this is the product of expectations of the activations evaluated at the first entry of each layer.
\end{definition}
This quantity is the product of the average slopes of the activations evaluated along a path connecting the first input feature to the first hidden units across the layers all the way to the output. We note that this quantity is the same when calculated along any path connecting an input feature to the output passing through the hidden units. Therefore, this quantity can be thought of as the average gain (amplification or attenuation) of a given input feature due to the nonlinear activations in the network. To gain some intuition consider a ReLU network which is mostly inactive. Then the network is dead and $\cgain\approx0$. On the other extreme if all ReLU units are active the network operates in the linear regime and $\cgain=1$. We would like to point out that $\cgain$ can in many cases be bounded from below by a constant. For instance for ReLU activations as long as the kernels obey
\begin{align}
\label{diffusenesstt}
\left(\vct{1}^T\vct{k}^{(\ell)}\right)\ge 4\twonorm{\vct{k}^{(\ell)}},
\end{align}
then $\cgain\ge \gamma$ with $\gamma$ a fixed numerical constant (formally proved in our companion paper \cite{first_layer}). We note that an assumption similar to \eqref{diffusenesstt} is needed for the network to be active. This is because if the kernel sums are negative one can show that with high probability, all the ReLUs after the first layer will be inactive and the network will be dead. In \cite{first_layer} we also show that similar results hold for other popular activations including softplus. With this definition in hand, we are now ready to describe the form of the rank one tensor that approximates the population tensor.
%Observe that for increasing activations obeying $\sigmap_\el\geq \beta>0$, we immediately have $\cgain\geq\beta^D$. We will show that population tensor $\E[y\X]$ is close to the rank $1$ CNN tensor defined as follows.
\begin{definition} [Rank one CNN tensor] \label{tcnn def}We define the rank one CNN tensor $\lcnn\in\R^{\btd}$ as
\[
\lcnn=\cgain \bt_{\el=1}^D\lay{\el}.
\]
In words, this is the product of the kernels $\{\lay{\el}\}_{\el=1}^D$ scaled by the CNN gain $\cgain$.
\end{definition}
To quantify how well the rank one CNN tensor approximates the population tensor we need two definitions. The first definition concerns the activation functions.
\begin{definition}[Activation smoothness] \label{assume smooth} We assume the activations are differentiable everywhere and $S$-smooth (i.e.~$|\sigmap_\el(x)-\sigmap_\el(y)|\leq \smo|x-y|$ for all $x,y\in\R$) for some $\smo\geq 0$.
\end{definition}
The reason smoothness of the activations play a role in the quality of the rank one approximation is that smoother activations translate into smoother variations in the entries of the population tensor. Therefore, the population tensor can be better approximated by a low-rank tensor. The second definition captures how diffused the kernels are.
\begin{definition}[Kernel diffuseness parameter] \label{assume diff} Given kernels $\{\lay{\el}\}_{\el=1}^D$ with dimensions $\{d_\ell\}_{\ell=1}^D$, the kernel diffuseness parameter $\mu$ is defined as
\[
\mapp=\sup_{1\leq \el\leq D}\frac{\sqrt{d_\ell}\infnorm{\lay{\el}}}{\twonorm{\lay{\el}}}.%\prod_{i=1}^{\el-1}{\tn{\lay{i}}}.
\]

\end{definition}
%The distance between the population tensor and $\lcnn$ will be upper bounded by the incoherence parameter introduced below.
%\begin{lemma} MAI satisfies $\mapp\leq \mupp\mipp$ where $\mupp$ is the maximum incoherence and $\mipp$ is the maximum Lipschitzness that are defined as%There exists $\gamma>0$ so that every layer obeys $\|\lay{\el}\|_\infty\leq \gamma % \sqrt{d_\el}
%\[
%\mupp=\sup_{1\leq \el\leq D}\frac{\|\lay{\el}\|_\infty}{\tn{\lay{\el}}},~~~\mipp=\sup_{1\leq \el\leq D}\prod_{i=1}^\el\tn{\lay{i}}.
%\]
%\end{lemma}
%\begin{proof} The proof follows from writing ${\|\lay{\el}\|_\infty}\prod_{i=1}^{\el-1}{\tn{\lay{i}}}=\frac{\|\lay{\el}\|_\infty}{\tn{\lay{\el}}}\prod_{i=1}^{\el}{\tn{\lay{i}}}\leq \mupp\mipp$.
%\end{proof}
The less diffused (or more spiky) the kernels are, the more the population tensor fluctuates and thus the quality of the approximation to a rank one tensor decreases. With these definitions in place, we are now ready to state our theorem on approximating a population tensor with a rank one tensor.
\begin{theorem}\label{thm pop} Consider the setup of Theorem \ref{finite sample}. Furthermore, assume the activations are $S$-smooth per Definition \ref{assume smooth} and the convolutional kernels are $\mu$-diffused per Definition \ref{assume diff}. Then, the population tensor $\mtx{T}:=\E[y\X]$ can be approximated by the rank-$1$ tensor $\lcnn:=\cgain \bt_{\el=1}^D\lay{\el}$ as follows
\[
\tsn{{\mtx{T}-\lcnn}}\leq \tf{\mtx{T}-\lcnn}\leq \sqrt{8\pi}\mu\smo\cdot \prod_{i=1}^D \twonorm{\vct{k}^{(i)}}\cdot\underset{\ell}{\sup} \prod_{i=1}^\ell \twonorm{\vct{k}^{(i)}}\frac{D}{\sqrt{\underset{ \ell }{\min}\text{ }d_\ell}}.
\]
%Furthermore, applying Lemma \ref{nntd robust}, \nntd~estimate satisfies
%\[
%\frac{\prod_{\el=1}^D|\li\lay{\el},\lah{\el}\ri|}{\lipp}\geq 1-\frac{\sqrt{32\pi}\smo D\mapp}{\cgain}
%\]
%Observe that the left hand side is product of layerwise correlations and is always upper bounded by $1$.
\end{theorem}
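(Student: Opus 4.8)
The plan is to evaluate the entries of $\mtx{T}=\E[\func{\x}\Tc(\x)]$ exactly, recognize $\lcnn$ as a ``decoupled'' version of $\mtx{T}$, and bound the entrywise discrepancy by a telescoping argument that exploits the tree structure of non-overlapping convolutions. The bound $\tsn{\mtx{T}-\lcnn}\le\tf{\mtx{T}-\lcnn}$ is free: for any unit rank-one $\mtx{V}=\robt{\vb}\in\Ro$ one has $\tf{\mtx{V}}=\prod_{\el=1}^D\twonorm{\vb_\el}\le1$, so $\li\mtx{V},\mtx{T}-\lcnn\ri\le\tf{\mtx{T}-\lcnn}$ by Cauchy--Schwarz, and it therefore suffices to bound the Frobenius norm.

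First I would compute the entries. The cell of $\Tc(\x)$ indexed by $(i_1,\dots,i_D)$ holds a single input coordinate $x_j$, so the corresponding entry of $\mtx{T}$ is $\E[\func{\x}\,x_j]$. Since $\x\sim\Nn(\vct{0},\Iden_p)$ and $\func{\cdot}$ is differentiable and Lipschitz, Gaussian integration by parts (Stein's identity) gives $\E[\func{\x}x_j]=\E[\partial\func{\x}/\partial x_j]$. The non-overlapping convolutions make the computation graph a tree: $x_j$ influences exactly one unit $m_\el$ in each layer $\el$, and sits at position $i_\el$ inside its layer-$\el$ patch (precisely how $\Tc$ lays out coordinates). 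The chain rule then factorizes the gradient along this unique path,
\[
\frac{\partial\func{\x}}{\partial x_j}=\Big(\prod_{\el=1}^D k^{(\el)}_{i_\el}\Big)\Big(\prod_{\el=1}^D\sigmap_\el(\xbr^{(\el)}_{m_\el})\Big).
\]
Writing $Z_\el:=\sigmap_\el(\xbr^{(\el)}_{m_\el})$ and noting by symmetry that $\E[Z_\el]=\E[\sigmap_\el(\xbr^{(\el)}_1)]$, the $(i_1,\dots,i_D)$ entry of $\mtx{T}$ equals $(\prod_\el k^{(\el)}_{i_\el})\,\E[\prod_\el Z_\el]$, while the same entry of $\lcnn=\cgain\bt_{\el=1}^D\lay{\el}$ equals $(\prod_\el k^{(\el)}_{i_\el})\prod_\el\E[Z_\el]$. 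The whole problem thus reduces to bounding the correlation defect $\Delta:=\E[\prod_\el Z_\el]-\prod_\el\E[Z_\el]$.

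Next I would bound $\Delta$ by swapping one factor at a time: $\Delta=\sum_{\el=1}^D(\prod_{k>\el}\E[Z_k])\,\E[(\prod_{k<\el}Z_k)(Z_\el-\E[Z_\el])]$, where the $\el=1$ term vanishes. Because every activation is $1$-Lipschitz we have $|Z_k|\le1$ and $|\E[Z_k]|\le1$, so the outer products are harmless and only the covariance $\E[(\prod_{k<\el}Z_k)(Z_\el-\E[Z_\el])]$ needs control. The key structural fact is decoupling: writing $\xbr^{(\el)}_{m_\el}=k^{(\el)}_{i_\el}\,h^{(\el-1)}_{m_{\el-1}}+W_\el$, where $h^{(\el-1)}_{m_{\el-1}}$ is the on-path hidden unit and $W_\el$ collects the off-path terms of the patch, the non-overlapping structure guarantees $W_\el$ depends only on input coordinates disjoint from those feeding the earlier path; hence $\widehat Z_\el:=\sigmap_\el(W_\el+k^{(\el)}_{i_\el}c_{\el-1})$ (with $c_{\el-1}$ a fixed constant) is independent of $Z_1,\dots,Z_{\el-1}$. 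Replacing $Z_\el$ by $\widehat Z_\el$ and using $S$-smoothness, $|Z_\el-\widehat Z_\el|\le \smo\,|k^{(\el)}_{i_\el}|\,|h^{(\el-1)}_{m_{\el-1}}-c_{\el-1}|$, one obtains $|\E[(\prod_{k<\el}Z_k)(Z_\el-\E[Z_\el])]|\le 2\smo\,|k^{(\el)}_{i_\el}|\,\E|h^{(\el-1)}_{m_{\el-1}}-c_{\el-1}|$.

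To finish, I would bound the residual moment by a variance recursion. Taking $c_{\el-1}=\phi_{\el-1}(\E\xbr^{(\el-1)}_{m_{\el-1}})$ and using $1$-Lipschitzness gives $\E|h^{(\el-1)}_{m_{\el-1}}-c_{\el-1}|\le\sqrt{\Var(\xbr^{(\el-1)}_{m_{\el-1}})}$. Since units within a patch depend on disjoint Gaussian blocks, they are independent, so $\Var(\xbr^{(\el)}_{\cdot})=\twonorm{\lay{\el}}^2\Var(h^{(\el-1)}_{\cdot})$ and $\Var(h^{(\el)}_{\cdot})\le\Var(\xbr^{(\el)}_{\cdot})$; with $\Var(h^{(0)})=1$ this unrolls to $\sqrt{\Var(\xbr^{(\el-1)}_{m_{\el-1}})}\le\prod_{i=1}^{\el-1}\twonorm{\lay{i}}$, giving the entrywise estimate $|\Delta|\le 2\smo\sum_{\el=2}^D|k^{(\el)}_{i_\el}|\prod_{i<\el}\twonorm{\lay{i}}$. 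Writing $\mtx{T}-\lcnn=\sum_{\el=2}^D\mtx{E}^{(\el)}$ according to this telescoping, applying the triangle inequality, and summing $\tf{\mtx{E}^{(\el)}}^2$ over all indices, the diffuseness bound $\sum_{i_\el}(k^{(\el)}_{i_\el})^4\le\infnorm{\lay{\el}}^2\twonorm{\lay{\el}}^2\le(\mapp^2/d_\el)\twonorm{\lay{\el}}^4$ produces the factor $\mapp/\sqrt{d_\el}$, and bounding each $\prod_{i\le\el}\twonorm{\lay{i}}$ by $\sup_\ell\prod_{i\le\ell}\twonorm{\lay{i}}$ and each $1/\sqrt{d_\el}$ by $1/\sqrt{\min_\el d_\el}$ gives the claimed form (the stated $\sqrt{8\pi}$ follows from a slightly looser moment estimate than the $\Var$ bound above). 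The hard part is the decoupling step: recognizing that the only channel of dependence between consecutive layers is the single on-path coefficient $k^{(\el)}_{i_\el}$, whose smallness is exactly quantified by the diffuseness $\mapp$, and that $\smo$-smoothness converts this small coupling into a small correlation defect. Everything else is bookkeeping.
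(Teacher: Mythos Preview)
Your proof is correct and follows essentially the same route as the paper: Stein's identity to obtain path-factorized gradient entries, the same telescoping of the correlation defect $\E[\prod_\el Z_\el]-\prod_\el\E[Z_\el]$, and the same decoupling step that replaces the on-path hidden unit by a constant (the paper packages this as Lemma~\ref{xyz}). The only minor differences are that the paper bounds $\E|\zm{Y}|$ via Gaussian Lipschitz concentration (which is where the stated $\sqrt{8\pi}$ comes from) rather than your variance recursion, and assembles the Frobenius norm via the H\"older-type split $\tf{\mtx{K}\odot(\mtx{V}-\cgain)}\le\tn{\vct{k}}\,\infnorm{\vct{v}-\cgain}$ rather than your direct fourth-moment computation $\sum_{i_\el}(k^{(\el)}_{i_\el})^4$; both variations are slightly sharper than the paper's but do not change the argument.
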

The theorem above states that the quality of the rank one approximation deteriorates with increase in the smoothness of the activations and the diffuseness of the convolutional kernels. As mentioned earlier increase in these parameters leads to more fluctuations in the population tensor making it less likely that it can be well approximated by a rank one tensor. We also note that $\tsn{\lcnn}=\cgain\prod_{\ell=1}^D \twonorm{\vct{k}^{(\ell)}}$ and therefore the relative error in this approximation is bounded by
\[
\frac{\tsn{\mtx{T}-\lcnn}}{\tsn{\lcnn}}\le \sqrt{8\pi}\frac{\mu\smo}{\cgain}\underset{\ell}{\sup} \prod_{i=1}^\ell \twonorm{\vct{k}^{(i)}}\frac{ D}{\sqrt{\underset{ \ell }{\min}\text{ }d_\ell}}.
\]%in some cases
We would like to note that for many activations the smoothness is bounded by a constant. For instance, for the softplus activation ($\phi(x)=\log(1+e^x)$) and one can show that $S\le 1$. As stated earlier, under appropriate assumptions on the kernels and activations, the CNN gain $\cgain$ is also bounded from below by a constant. Assuming the convolutional kernels have unit norm and are sufficiently diffused so that the diffuseness parameter is bounded by a constant we thus arrive at the conclusion
\[
\frac{\tsn{\mtx{T}-\lcnn}}{\tsn{\lcnn}}\le \frac{c}{\cgain}\frac{D}{\sqrt{\underset{ \ell }{\min}\text{ }d_\ell}}.
\]
This implies that as soon as the length of the convolutional patches scale with the square of depth of the network by a constant factor the rank one approximation is sufficiently good. Our back-of-the-envelope calculations suggest that the correct scaling is linear in $D$ versus the quadratic result we have established here. Improving our result to achieve the correct scaling is an interesting future research direction. Finally, we would like to note that while we have assumed differentiable and smooth activations we expect our results to apply to popular non-differentiable activations such as ReLU activations. Indeed, as it will become clear in our proofs (e.g. see Lemma \ref{xyz}) an average notion of smoothness appears to be sufficient for our results to apply.

\subsection{Learning the convolutional kernels}
\label{sec3}
We demonstrated in the previous two sections that the empirical tensor concentrates around its population counter part and that the population tensor is well-approximated by a rank one tensor. We combine these two results along with a perturbation argument to provide guarantees for the DeepTD algorithm.
\begin{theorem}[Main theorem]\label{main thm} Consider a CNN model $\vct{x}\mapsto f_{CNN}(\vct{x})$ of the form \eqref{CNNform} consisting of $D\ge 2$ layers with convolutional kernels $\vct{k}^{(1)}, \vct{k}^{(2)}, \ldots, \vct{k}^{(D)}$ of lengths $d_1, d_2,\ldots, d_D$. Let $\vct{x}\in\R^p$ be a Gaussian random vector distributed as $\mathcal{N}(\vct{0},\mtx{I}_p)$ with the corresponding labels $y=f_{CNN}(\vct{x})$ generated by the CNN model and $\mtx{X}:=\mathcal{T}(\vct{x})$ the corresponding tensorized input. Suppose the data set consists of $n$ training samples where the feature vectors $\vct{x}_i\in\R^p$ are distributed i.i.d.~$\mathcal{N}(\vct{0},\mtx{I}_p)$ with the corresponding labels $y_i=f_{CNN}(\vct{x}_i)$ generated by the same CNN model and $\mtx{X}_i:=\mathcal{T}(\vct{x}_i)$ the corresponding tensorized input. Also, assume the activations are $1$-Lipschitz (i.e.~$\abs{\phi_\ell'(z)}\le 1$), are $S$-smooth per Definition \ref{assume smooth} and the convolutional kernels are $\mu$-diffused per Definition \ref{assume diff}. Then the empirical tensor $\mtx{T}_n:=\frac{1}{n}\sum_{i=1}^n(y_i-\yavg)\X_i$ and the rank one approximation $\lcnn=\cgain \bt_{\el=1}^D\lay{\el}$ to the population tensor $\mtx{T}=\E[y\mtx{X}]$ obey
\[
\tsn{\mtx{T}_n-\lcnn}\leq \lipp\left(c\frac{\sqrt{\left(\sum_{\ell=1}^D d_\ell\right)\log D}+t}{\sqrt{n}}+\sqrt{8\pi}\mu\smo\cdot\underset{\ell}{\sup} \prod_{i=1}^\ell \twonorm{\vct{k}^{(i)}}\frac{D}{\sqrt{\underset{ \ell }{\min}\text{ }d_\ell}}\right),
\]
with probability at least $1-5e^{-\min\left(t^2,t\sqrt{n},n\right)}$, where $c>0$ is an absolute constant.
Furthermore, the \nntd~estimates of the convolutional kernels given by \eqref{deeptdalg} using the empirical tensor $\mtx{T}_n$ obeys
\[
\frac{\prod_{\el=1}^D|\li\lay{\el},\lah{\el}\ri|}{\lipp}\geq 1-\frac{2}{\cgain}\left(c\frac{\sqrt{\left(\sum_{\ell=1}^D d_\ell\right)\log D}+t}{\sqrt{n}}+\sqrt{8\pi}\mu\smo\cdot\underset{\ell}{\sup} \prod_{i=1}^\ell \twonorm{\vct{k}^{(i)}}\frac{D}{\sqrt{\underset{ \ell }{\min}\text{ }d_\ell}}\right),
\]
with probability at least $1-5e^{-\min\left(t^2,t\sqrt{n},n\right)}$, where $c>0$ is an absolute constant.
\end{theorem}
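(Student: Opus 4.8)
The plan is to obtain the two displayed bounds in sequence: the first by a triangle inequality that splices together the two previous theorems, and the second by a perturbation argument that exploits the optimality of the DeepTD solution in \eqref{deeptdalg} together with the supremum definition of the tensor spectral norm. No new estimates are needed beyond Theorems~\ref{finite sample} and~\ref{thm pop}; the work is in combining them correctly.

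First I would bound $\tsn{\mtx{T}_n-\lcnn}$. Writing $\mtx{T}_n-\lcnn=(\mtx{T}_n-\mtx{T})+(\mtx{T}-\lcnn)$ and using the triangle inequality for $\tsn{\cdot}$ gives $\tsn{\mtx{T}_n-\lcnn}\le\tsn{\mtx{T}_n-\mtx{T}}+\tsn{\mtx{T}-\lcnn}$. The first summand is controlled by Theorem~\ref{finite sample} on the event of probability at least $1-5e^{-\min(t^2,t\sqrt n,n)}$, and the second by the deterministic estimate of Theorem~\ref{thm pop} (recalling $\tsn{\cdot}\le\tf{\cdot}$). Since $\prod_{i=1}^D\twonorm{\vct{k}^{(i)}}=\lipp$, I can factor $\lipp$ out of both terms, which reproduces exactly the first displayed inequality on the same high-probability event.

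Next I would prove the correlation guarantee. Set $\epsilon:=\tsn{\mtx{T}_n-\lcnn}$ and write $\bar{\vct{k}}^{(\el)}:=\lay{\el}/\twonorm{\lay{\el}}$, so that $\lcnn=\cgain\,\lipp\,\bt_{\el=1}^D\bar{\vct{k}}^{(\el)}$. Because each $\bar{\vct{k}}^{(\el)}$ is a unit vector, the tensor $\bt_{\el=1}^D\bar{\vct{k}}^{(\el)}$ lies in $\Ro$ and is therefore feasible for \eqref{deeptdalg}; evaluating the objective there and using $\li\lcnn,\bt_{\el=1}^D\bar{\vct{k}}^{(\el)}\ri=\cgain\lipp$ together with $|\li\mtx{T}_n-\lcnn,\mtx{V}\ri|\le\epsilon$ for every $\mtx{V}\in\Ro$ yields $\li\mtx{T}_n,\bt_{\el=1}^D\bar{\vct{k}}^{(\el)}\ri\ge\cgain\lipp-\epsilon$. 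Since $\widehat{\mtx{V}}:=\bt_{\el=1}^D\lah{\el}$ maximizes the objective, $\li\mtx{T}_n,\widehat{\mtx{V}}\ri\ge\cgain\lipp-\epsilon$, and a second use of the spectral-norm bound converts this into $\li\lcnn,\widehat{\mtx{V}}\ri\ge\cgain\lipp-2\epsilon$. Expanding $\li\lcnn,\widehat{\mtx{V}}\ri=\cgain\lipp\prod_{\el=1}^D\li\bar{\vct{k}}^{(\el)},\lah{\el}\ri$ and dividing by $\cgain\lipp$ gives $\prod_{\el=1}^D\li\bar{\vct{k}}^{(\el)},\lah{\el}\ri\ge1-\tfrac{2\epsilon}{\cgain\lipp}$, and substituting the first displayed bound on $\epsilon$ produces the stated right-hand side.

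The main obstacle I anticipate is bookkeeping around the sign/scale ambiguity rather than any hard estimate. The optimization \eqref{deeptdalg} determines each $\lah{\el}$ only up to sign, so I must pass from the signed product to the product of absolute values appearing in the statement. This is legitimate because $\prod_{\el}|\li\lay{\el},\lah{\el}\ri|/\lipp=\prod_\el|\li\bar{\vct{k}}^{(\el)},\lah{\el}\ri|\ge\prod_\el\li\bar{\vct{k}}^{(\el)},\lah{\el}\ri$, the last inequality holding since $\prod|a_\el|\ge\prod a_\el$ always (it is an equality when the signed product is positive and otherwise the claim is vacuous, the left side being nonnegative). I should also record that division by $\cgain$ requires $\cgain>0$, which is precisely the active-network regime guaranteed by the lower bounds on $\cgain$ discussed after the CNN-gain definition.
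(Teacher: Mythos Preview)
Your proposal is correct and follows essentially the same route as the paper: the first bound is obtained by the triangle inequality combining Theorems~\ref{finite sample} and~\ref{thm pop}, and the second by exactly the rank-one perturbation argument you describe, which the paper packages as a separate lemma (Lemma~\ref{nntd robust}) rather than inlining. The only cosmetic difference is that the paper reaches the absolute-value product directly via $\li\mtx{T}_n,\widehat{\mtx{V}}\ri\le|\li\lcnn,\widehat{\mtx{V}}\ri|+\epsilon=\cgain\lipp\prod_\el|\li\bar{\vct{k}}^{(\el)},\lah{\el}\ri|+\epsilon$, whereas you first obtain the signed inequality and then use $\prod|a_\el|=|\prod a_\el|\ge\prod a_\el$; both are valid.
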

The above theorem is our main result on learning a non-overlapping CNN with a single kernel at each layer. It demonstrates that estimates $\lah{\el}$ obtained by DeepTD have significant inner product with the ground truth kernels $\lay{\el}$ with high probability, using only few samples. Indeed, similar to the discussion after Theorem \ref{thm pop} assuming the activations are sufficiently smooth and the convolutional kernels are unit norm and sufficiently diffused, the theorem above can be simplified as follows
\begin{align*}
\frac{\prod_{\el=1}^D|\li\lay{\el},\lah{\el}\ri|}{\lipp}\ge 1-c \left(\frac{\sqrt{\left(\sum_{\ell=1}^D d_\ell\right)\log D}}{\sqrt{n}}+\frac{D}{\sqrt{\underset{ \ell }{\min}\text{ }d_\ell}}\right).
\end{align*}
Thus the kernel estimates obtained via DeepTD are well aligned with the true kernels as soon as the number of samples scales with the total number of parameters in the model and the length of the convolutional kernels (i.e.~the size of the batches) scales quadratically with the depth of the network.
\section{Resolving sign and scaling ambiguities} \label{resolve}
% via centered ERM
%an algorithm which aims to resolve sign and scale ambiguities when possible. 
We note that DeepTD operates by accurately approximating the rank one tensor $\bt_{\el=1}^D\lay{\el}$ from data. Therefore, DeepTD can only recover the convolutional kernels up to Sign/Scale Ambiguities (SSA). In general, it may not be possible to recover the ground truth kernels from the training data. For instance, when activations are ReLU, the norms of the kernels cannot be estimated from data as multiplying a kernel and dividing another by the same positive scalar leads to the same training data. However, we can try to learn a good approximation $\funh{}$ of the network $\func{}$ to minimize the risk $\E[(\func{\x}-\funh{\x})^2]$.

%Empirical Risk Minimization (ERM) %an approach called 
To this aim, we introduce Centered Empirical Risk Minimization (CERM) which is a slight modification of Empirical Risk Minimization (ERM). Let us first describe how finding a good $\funh{}$ can be formulated with CERM. Given $n$ i.i.d.~data points $\{(\x_i,y_i)\}_{i=1}^n\sim(\x,y)$, and a function class $\Fc$, CERM applies ERM after centering the residuals. Given $f\in\Fc$, define the average residual function $\ravg{f}=\frac{1}{n}\sum_{i=1}^ny_i-f(\x_i)$. We define the Centered Empirical Risk Minimizer as
\begin{align}\label{cerm}
\hat{f}&=\min_{f\in \Fc} \frac{1}{n}\sum_{i=1}^n (y_i-f(\x_i)-\ravg{f})^2,\nonumber\\
&=\min_{f\in \Fc} \frac{1}{n}\sum_{i=1}^n \left(y_i-f(\x_i)-\E[(y_i-f(\x_i))]\right)^2-\frac{1}{n^2}\left(\sum_{i=1}^n y_i-f(\x_i)-\E[(y_i-f(\x_i))]\right)^2.
\end{align}
The remarkable benefit of CERM over ERM is the fact that, the learning rate doesn't suffer from the label or function bias. This is in similar nature to the \nntd~algorithm that applies label centering. In the proofs (in particular Section \ref{sec cerm}, Theorem \ref{cerm thm}) we provide a generalization bound on the CERM solution \eqref{cerm} in terms of the Lipschitz covering number of the function space. While \eqref{cerm} can be used to learn all kernels, it does not provide an efficient algorithm. Instead, we will use CERM to resolve SSA after estimating the kernels via \nntd. Interestingly, this approach only requires a few ($\order{D}$) extra training samples. Inspired from CERM, in Section \ref{greedy sec}, we propose a greedy algorithm to address SSA. We will apply CERM to the following function class with bounded kernels,
\begin{align}
\small{\Fc_{\hat{\lan},B}:=\big\{f:\R^p\mapsto\R\text{ }\big|\text{ }\text{f is a CNN function of the form } \eqref{CNNform} \text{ with kernels }\{\beta_\ell\hat{\vct{k}}^{(\ell)}\}}_{\ell=1}^D\text{ with }|{\beta}_\ell|\le B^{\frac{1}{D}}\big\}.\label{cnn form 1}
\end{align}
In words this is the function class of all CNN functions with kernels the same as those obtained by DeepTD up to sign/scale ambiguities $\{\beta_\ell\}_{\ell=1}^D$ where the maximum scale ambiguity is $B$.
\begin{theorem} \label{ssa}Let $\func{}$ be defined via \eqref{CNNform} with convolutional kernels $\{\vct{k}^{(\ell)}\}_{\ell=1}^D$ obeying $\tn{\lay{\el}}\leq B^{1/D}$ for some $B>0$ and consider the function class $\Fc_{\hat{\lan},B}$ above with the same choice of $B$. Assume we have $n$ i.i.d. samples $(\x_i,y_i)\sim (\x,y)$ where $\x\sim\Nn(0,\Iden_p)$ and $y=\func{\x}$. Suppose for some $\eps\leq B$,
\begin{align*}
n\ge c B^2D \log\left(\frac{CDB p}{\epsilon}\right)\max\left(\frac{1}{\epsilon},\frac{1}{\epsilon^2}\right),
\end{align*}
holds for fixed numerical constants $c,C>0$. Then the solution $\hat{f}$ to the CERM problem \eqref{cerm} obeys
\begin{align}
\E\Big[\left(\hat{f}(\x)-\func{\x}\right)^2\Big]\leq \min_{f\in \Fc_{\hat{\lan},B}}\E[(f(\x)-\func{\x})^2]+\eps.\label{sign scale eq}
\end{align}
on a new sample $\x\in\Nn(0,\Iden_p)$ with probability at least $1-e^{-\gamma n}-4n\exp(-p)$ with $\gamma>0$ an absolute constant.
\end{theorem}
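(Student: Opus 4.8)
The plan is to derive Theorem~\ref{ssa} as a specialization of the general CERM generalization bound of Theorem~\ref{cerm thm} to the concrete class $\Fc_{\hat{\lan},B}$, so that essentially all of the work reduces to (i) computing the Lipschitz covering number of this class and (ii) verifying the sub-Gaussian/sub-exponential tail behaviour of the residuals. First I would reparametrize $\Fc_{\hat{\lan},B}$ by the scale vector $\beta=(\beta_1,\dots,\beta_D)$ living in the box $\{\beta:|\beta_\ell|\le B^{1/D}\}$, write $f_\beta$ for the CNN \eqref{CNNform}--\eqref{hiddenform} with kernels $\{\beta_\ell\lah{\ell}\}$, and set $r_\beta(\x)=\func{\x}-f_\beta(\x)$ (using $y=\func{\x}$). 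Since $\frac1n\sum_i(r_\beta(\x_i)-\ravg{f_\beta})^2=\min_c\frac1n\sum_i(r_\beta(\x_i)-c)^2$, CERM is exactly ERM over $\Fc_{\hat{\lan},B}$ augmented with a free additive offset, and in population it targets the centered risk $\Var(r_\beta)\le\E[r_\beta^2]$. The centering is precisely the device that lets the empirical process concentrate without paying for the (possibly large) mean of the residual, which is the source of the improved rate advertised in the text.

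Next I would quantify the complexity of the class through a Lipschitz-in-parameters estimate. Using that the activations are $1$-Lipschitz and unrolling the recursion \eqref{hiddenform}, a telescoping argument over layers gives $|f_\beta(\x)-f_{\beta'}(\x)|\lesssim\big(\sup_{\ell}\prod_{j\neq\ell}|\beta_j|\,\twonorm{\lah{j}}\big)\twonorm{\x}\,\|\beta-\beta'\|_\infty\lesssim B^{(D-1)/D}\twonorm{\x}\,\|\beta-\beta'\|_\infty$, so the CNN output is Lipschitz in the scales with a constant linear in $\twonorm{\x}$. Covering the $D$-dimensional box at scale $\eta$ costs $\log N\lesssim D\log(B^{1/D}/\eta)$ points; choosing $\eta\asymp\epsilon/(B^{(D-1)/D}\sqrt{p})$ so that the induced net approximates every $f_\beta$ to accuracy $\epsilon$ on the bulk event $\{\twonorm{\x}\lesssim\sqrt p\}$ yields a log-covering number of order $D\log(CDBp/\epsilon)$, which is the metric-entropy factor appearing in the sample complexity.

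I would then establish concentration of the centered squared-residual process. Since $\func{\x}$ and each $f_\beta$ are Lipschitz functions of $\x$ with Lipschitz constant at most $\prod_\ell|\beta_\ell|\twonorm{\lah{\ell}}\le B$, Gaussian concentration makes $r_\beta(\x)$ sub-Gaussian with proxy $\lesssim B$ and hence $r_\beta(\x)^2$ sub-exponential with $\psi_1$-norm $\lesssim B^2$. A sub-exponential Bernstein inequality then controls $|\frac1n\sum_i r_\beta(\x_i)^2-\E[r_\beta^2]|$ (and the scalar $\ravg{f_\beta}$) at the $B^2(\sqrt{1/n}+1/n)$ rate for each fixed $\beta$, and the squared loss satisfies a Bernstein condition---its variance is at most $\order{B^2}$ times its mean---so a localized version yields the fast $1/\epsilon$ regime, the two combining into the $\max(1/\epsilon,1/\epsilon^2)$ factor. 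A union bound of this per-$\beta$ concentration over the $\eta$-net fails with probability at most $e^{-\gamma n}$ provided $n\gtrsim B^2D\log(CDBp/\epsilon)$. Passing from the net back to all of $\Fc_{\hat{\lan},B}$ uses the Lipschitz-in-$\beta$ bound, which requires controlling $\twonorm{\x_i}$ uniformly; the event $\{\max_{1\le i\le n}\twonorm{\x_i}^2\lesssim p\}$ holds with probability at least $1-4n\exp(-p)$ by Gaussian norm concentration and a union bound, and on it the net approximates every $f_\beta$ to accuracy $\epsilon$. Comparing $\hat f$ to the in-class minimizer then produces the oracle inequality \eqref{sign scale eq}.

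I expect the main obstacle to be twofold. The delicate analytic point is securing the fast ($1/\epsilon$) rate for an unbounded, merely sub-exponential squared-loss process: this demands a careful localized Bernstein argument together with a balancing of the truncation radius $\sqrt p$ against the covering scale $\eta$, so that the input dimension $p$ enters only logarithmically. The second, more conceptual, subtlety is reconciling that CERM minimizes the centered risk $\Var(r_\beta)$ whereas \eqref{sign scale eq} is stated for the uncentered second moment $\E[(\hat f(\x)-\func{\x})^2]$; I would close this gap by treating the residual mean $\E[r_\beta]$ as a single scalar that is estimated cheaply by $\ravg{f_\beta}$ and is small in the realizable regime, where nearly-correct scales make the residual nearly vanish, so that the variance oracle inequality transfers to the claimed second-moment bound.
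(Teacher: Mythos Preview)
Your plan is essentially the paper's: compute the Lipschitz covering number of $\Fc_{\hat{\lan},B}$ via a telescoping bound in the scale parameters $\beta$, and then invoke Theorem~\ref{cerm thm}. The paper's telescoping step and choice of cover scale match yours almost exactly (they get $\|f-f'\|_{L_\infty}\le D\delta B'^{D-1}$ and hence $\log N_\eps\le D\log(2DB/\eps)$); the factor $p$ enters only inside the proof of Theorem~\ref{cerm thm} through the truncation $\twonorm{\x_i}\lesssim\sqrt p$, just as you anticipate.

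Two remarks on the obstacles you flag. First, no localization is needed for the $\max(1/\eps,1/\eps^2)$ rate: the paper obtains it directly from the one-shot sub-exponential Bernstein bound $\Pro(|T_1|\ge\eps)\le\exp(-cn\min\{\eps^2/\bar K^4,\eps/\bar K^2\})$ applied to the centered squared residuals, so the ``fast'' regime is just the linear tail of Bernstein, not a localized empirical-process argument. Second, your concern about centered versus uncentered risk is legitimate: Theorem~\ref{cerm thm} literally delivers a bound on $\E[(\zm{\hat f(\x)-y})^2]$, and the paper's proof of Theorem~\ref{ssa} ends with ``apply Theorem~\ref{cerm thm} to conclude'' without bridging to the uncentered statement \eqref{sign scale eq}. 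So the discrepancy you identify is present in the paper as written; your proposed patch (treat $\E[r_\beta]$ as an extra scalar estimated by $\ravg{f_\beta}$) is a reasonable way to close it, but the paper does not carry it out.
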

The above theorem states that CERM finds the sign/scale ambiguity that accurate estimates the labels on new data as long as the number of samples which are used in CERM exceeds the depth of the network by constant/log factors. In the next section we present a greedy heuristic for finding the CERM estimate.

\subsection{Greedy algorithm for resolving sign and scale ambiguities}\label{greedy sec}

In order to resolve SSA, inspired from CERM, we propose Algorithm \ref{algo 1} which operates over the function class,
\begin{align}
\small{\Fc_{\hat{\lan}}:=\big\{\gamma f:\R^p\mapsto\R\text{ }\big|\text{ }\text{f is a CNN of the form } \eqref{CNNform} \text{ with kernels }\{\beta_\ell\hat{\vct{k}}^{(\ell)}\}}_{\ell=1}^D\text{ with }{\beta}_\ell\in \{1,-1\},~\gamma\geq 0\big\}.\label{cnn form 2}
\end{align}%. In the first phase it determines the signs and later on it picks a global constant 
It first determines the signs $\beta_\el$ by locally optimizing the kernels and then finds a global scaling $\gamma>0$. In the first phase, the algorithm attempts to maximize the correlation between the centered labels $y_{c,i}=y_i-n^{-1}\sum_{i=1}^ny_i$ and the $\funh{}$ predictions given by $\hat{y}_{c,i}=\hat{y}_i-n^{-1}\sum_{i=1}^n\hat{y}_i$. It goes over all kernels one by one and it flips a kernel ($\lah{\el}\rightarrow -\lah{\el}$) if flipping increases the correlation. This process goes on as long as there is an improvement. Afterwards, we use a simple linear regression to get the best scaling $\gamma$ by minimizing the centered empirical loss $\sum_{i=1}^n (y_{c,i}-\gamma\hat{y}_{c,i})^2$. While our approach is applicable to arbitrary activations, it is tailored towards homogeneous activations ($\phi(cx)=c\phi(x)$). The reason is that for homogeneous activations, function classes \eqref{cnn form 1} and \eqref{cnn form 2} coincide and a single global scaling $\gamma$ is sufficient. Note that ReLU and the identity activation (i.e.~no activation) are both homegeneous, in fact they are elements of a larger homogeneous activation family named Leaky ReLU. Leaky ReLU is parametrized by some scalar $0\leq \beta\leq 1$ and defined as follows
\[
LReLU(x)=\begin{cases}x~\text{if}~x\geq 0,\\\beta x~\text{if}~x<0.\end{cases}
\]
%We remark that one can use alternative approaches to optimize over $\Fc_{\hat{\lan},B}$ of \eqref{cnn form 1}, such as line searching over $\beta_\el$'s and minimizing least-squares loss. In our experiments (with ReLU activation), 
As we shall discuss in our numerical experiments in Section \ref{numeric sec}, Algorithm \ref{algo 1} works well for different depths ($D=4,8,12$) and kernel widths $2\leq d_\el\leq 10$. We would like to point out however that it appears that this local search heuristic can indeed get stuck in a suboptimal local minima even for a depth $4$ convolutional network (see Figure \ref{fig perf} and the associated discussion). Developing efficient algorithms that can reliably construct a good $\funh{}$ from $\{\lah{\el}\}_{\el=1}^D$ is an interesting direction for future research.

%We also observed that along with  however it can 

\begin{algorithm} \caption{Greedily algorithm for resolving sign/scale ambiguities for Leaky ReLU activations.}\label{algo 1}
\begin{algorithmic}[1]
\Procedure{MaxCorr}{}
\item {\bf{Inputs:}} Data $(y_i,\x_i)_{i=1}^n$, estimates $\{\lah{\el}\}_{\el=1}^D$.
%\State 
\State $\rho_{\max}\gets|\text{Corr}(\{\lah{\el}\}_{\el=1}^D,0)|$,~FLIP$\gets${TRUE}.
{\While {FLIP}
\State FLIP$\gets${FALSE}.
{\For {$1\leq\el\leq D$}

\State $\rho\gets|\text{Corr}(\{\lah{1},~\dots,~-\lah{\el},~\dots,~\lah{D}\},0)|$.
{\If {$\rho>\rho_{\max}$}
\State $\rho_{\max}\gets\rho$
\State $\lah{\el}\gets-\lah{\el}$
\State FLIP$\gets$TRUE
\EndIf}
\EndFor}
\EndWhile}

\item \hspace{17pt}$\gamma\gets \text{Corr}(\{\lah{\el}\}_{\el=1}^D,1)$.\\
\Return kernels $\{\lah{\el}\}_{\el=1}^D$, scaling~$\gamma$.
\EndProcedure
\end{algorithmic}
\end{algorithm}
\begin{algorithm} \caption{Return the correlation between centered labels.}
\begin{algorithmic}[1]
\Procedure{Corr}{$\{\lah{\el}\}_{\el=1}^D$,~\text{opt}}
\State $\hat{y}_{i}\gets \func{\{\lah{\el}\}_{\el=1}^D;\x_i}$.
\State $y_{\text{c},i}\gets y_i-\frac{1}{n}\sum_{i=1}^ny_i$~~~\text{and}~~~$\hat{y}_{\text{c},i}\gets \hat{y}_i-\frac{1}{n}\sum_{i=1}^n\hat{y}_i$.
%\State 
\State $\rho\gets \sum_{i=1}^n {y}_{\text{c},i}\hat{y}_{\text{c},i}$.\\
\Return $\rho~\text{{\bf{if}} opt}=0,~\rho/(\sum_{i=1}^n \hat{y}^2_{\text{c},i})~\text{{\bf{if}} opt}=1$.
\EndProcedure
\end{algorithmic}
\end{algorithm}
%\begin{algorithm} \caption{Return the correlation between centered labels.}
%\begin{algorithmic}[1]
%\Procedure{Corr}{$\{\lah{\el}\}_{\el=1}^D$,~\text{opt}}
%\State $\hat{y}_{i}\gets \func{\{\lah{\el}\}_{\el=1}^D;\x_i}$.
%\State $y_{\text{c},i}\gets y_i-\frac{1}{n}\sum_{i=1}^ny_i$.
%\State $\hat{y}_{\text{c},i}\gets \hat{y}_i-\frac{1}{n}\sum_{i=1}^n\hat{y}_i$.
%\State $\rho\gets \sum_{i=1}^n {y}_{\text{c},i}\hat{y}_{\text{c},i}$.\\
%\Return $\rho~\text{{\bf{if}} opt}=0,~\rho/(\sum_{i=1}^n \hat{y}^2_{\text{c},i})~\text{{\bf{if}} opt}=1$.
%\EndProcedure
%\end{algorithmic}
%\end{algorithm}

%Given a vector $\y$, its centering is defined The centerin

\section{Numerical experiments}\label{numeric sec}

\begin{figure}[t!]
 \begin{subfigure}[b]{0.5\textwidth}
        \includegraphics[width=\textwidth]{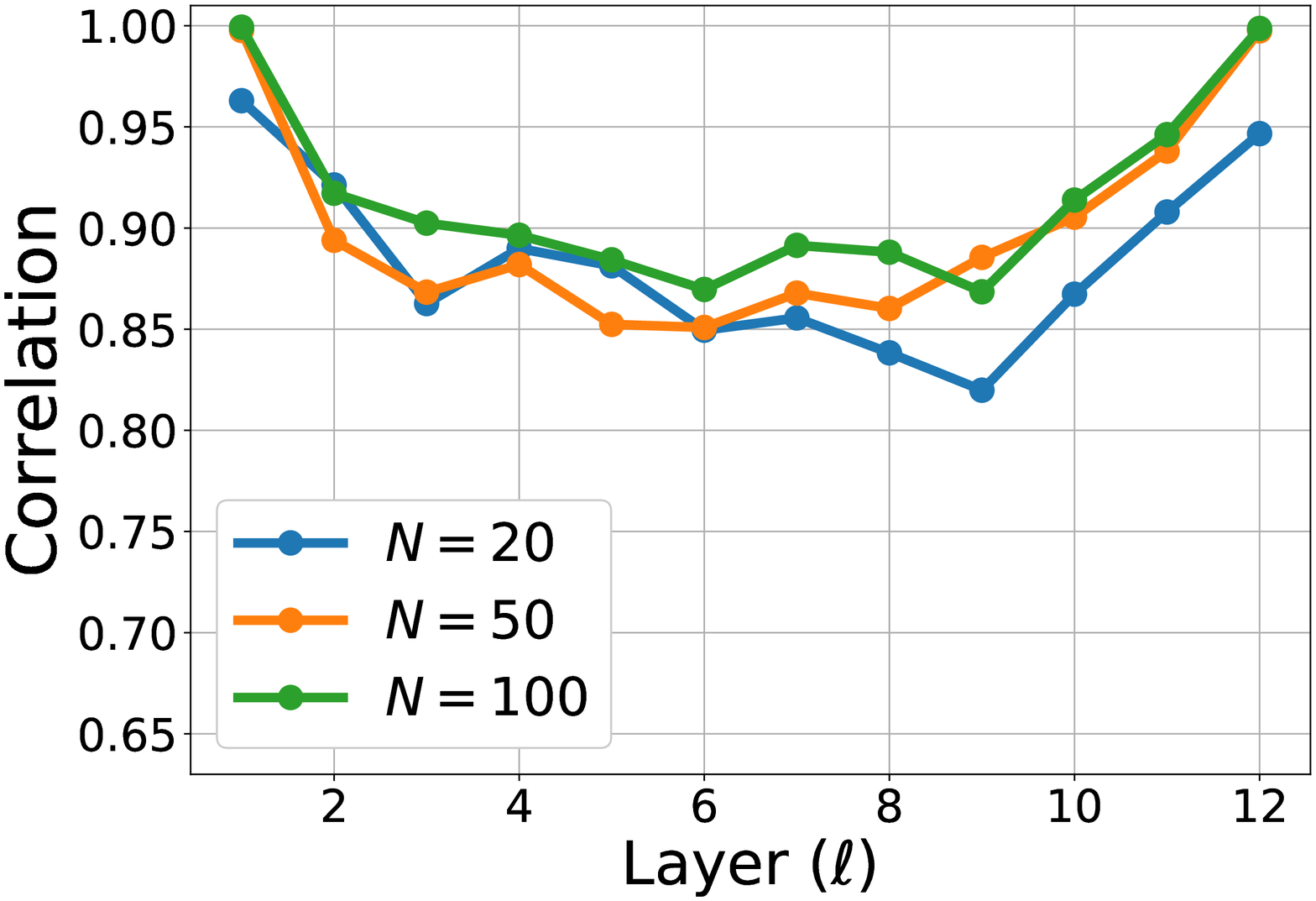}%triple.eps
        \caption{$d=2$, $D=12$.}
        \label{fig depth a}
    \end{subfigure} ~
    \begin{subfigure}[b]{0.5\textwidth}
        \includegraphics[width=\textwidth]{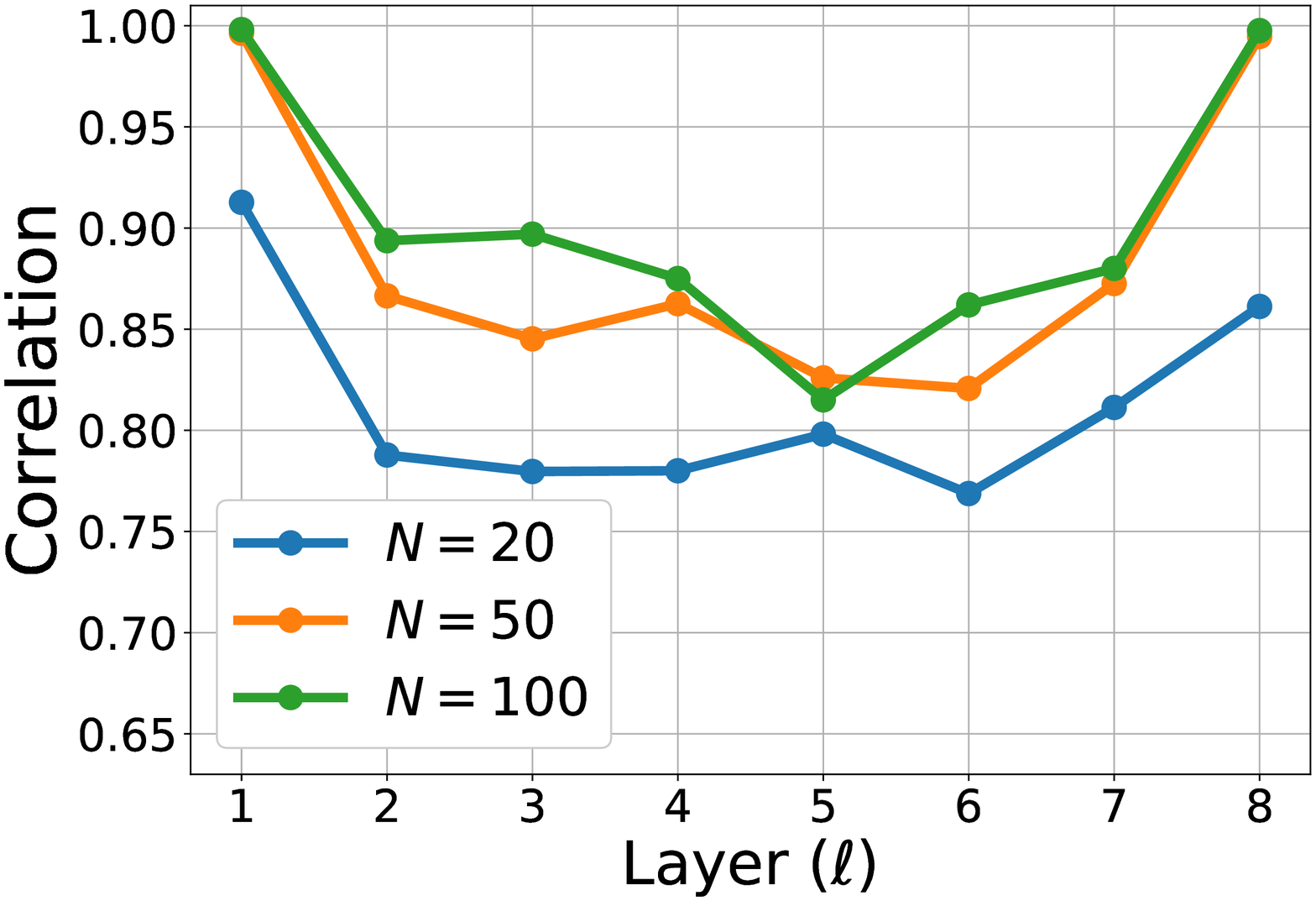}%triple.eps
        \caption{$d=3$, $D=8$.}
         \label{fig depth b}
    \end{subfigure}
    \caption{Correlations ($\text{corr}(\lah{\el},\lay{\el})=\abs{\li\lah{\el},\lay{\el}\ri}$) between the \nntd~estimate and the ground truth kernels for different layers and various over-sampling ratios ($N=\frac{n}{\sum_{\el=1}^Dd_\el}$).}
        \label{fig depth}
\end{figure}

\begin{table}
\begin{centering}
\begin{tabular}{c|c|c|c|c|c|cl}
\cline{2-7}
&  \multicolumn{3}{ c| }{$d=2,~D=12$}&\multicolumn{3}{ c| }{$d=3,~D=8$} \\ \cline{2-7}
&  \multicolumn{1}{ |c| }{$N=20$}&\multicolumn{1}{ c| }{$N=50$}&\multicolumn{1}{ c| }{$N=100$}&{$N=20$}&\multicolumn{1}{ c| }{$N=50$}&\multicolumn{1}{ c| }{$N=100$} \\ \cline{1-7}
%& & 2 & 3 & 5 & 7 \\ \cline{1-6}
\multicolumn{1}{ |c| }{\% Correct sign} & 0.83 & 0.95 & 0.93 & 0.65 & 0.87& \multicolumn{1}{ c| }{0.94}   \\ \cline{1-7}
%\multicolumn{1}{ |c  }{}                        &
\multicolumn{1}{ |c| }{Test loss (greedy)} & 0.53 & 0.45 & 0.43 & 0.62 & 0.48&   \multicolumn{1}{ c| }{0.42}   \\ \cline{1-7}
\multicolumn{1}{ |c| }{Test loss (oracle)} & 0.50 & 0.44 & 0.41 & 0.58& 0.44 & \multicolumn{1}{ c| }{0.40} \\ \cline{1-7 }
%\multicolumn{1}{ |c  }{}                        &
%\multicolumn{1}{ |c| }{lcm} & 3 & 3 & 1 & 1& a& \\ \cline{1-7}
\end{tabular}
\caption{Test performance of \nntd~estimates ($\text{Test MSE}=\E\Big[\left(\y-\hat{f}_{\text{CNN}}(\x)\right)^2\Big]/\E[\y^2]$). Greedy is the test loss when the signs of $\lah{\el}$ are determined by Algorithm \ref{algo 1}. Oracle is the test loss by picking the signs to ensure non-negative correlation with the ground truth kernels.}\label{table 1}
\end{centering}
\end{table}

\begin{figure}[t!]
 \begin{subfigure}[b]{0.5\textwidth}
        \includegraphics[width=\textwidth]{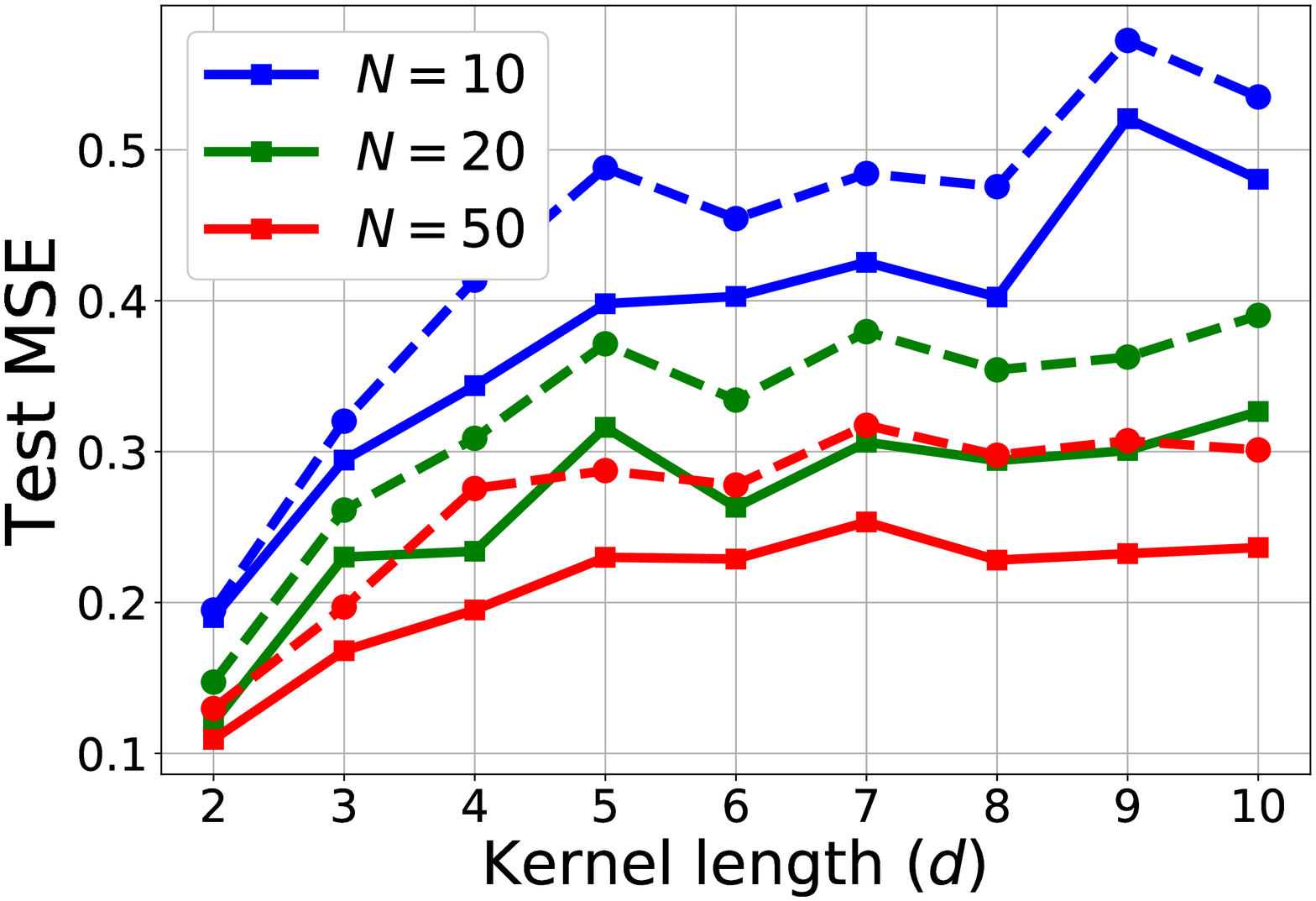}%triple.eps
        \caption{}
        \label{fig perf1}
    \end{subfigure} ~
    \begin{subfigure}[b]{0.5\textwidth}
        \includegraphics[width=\textwidth]{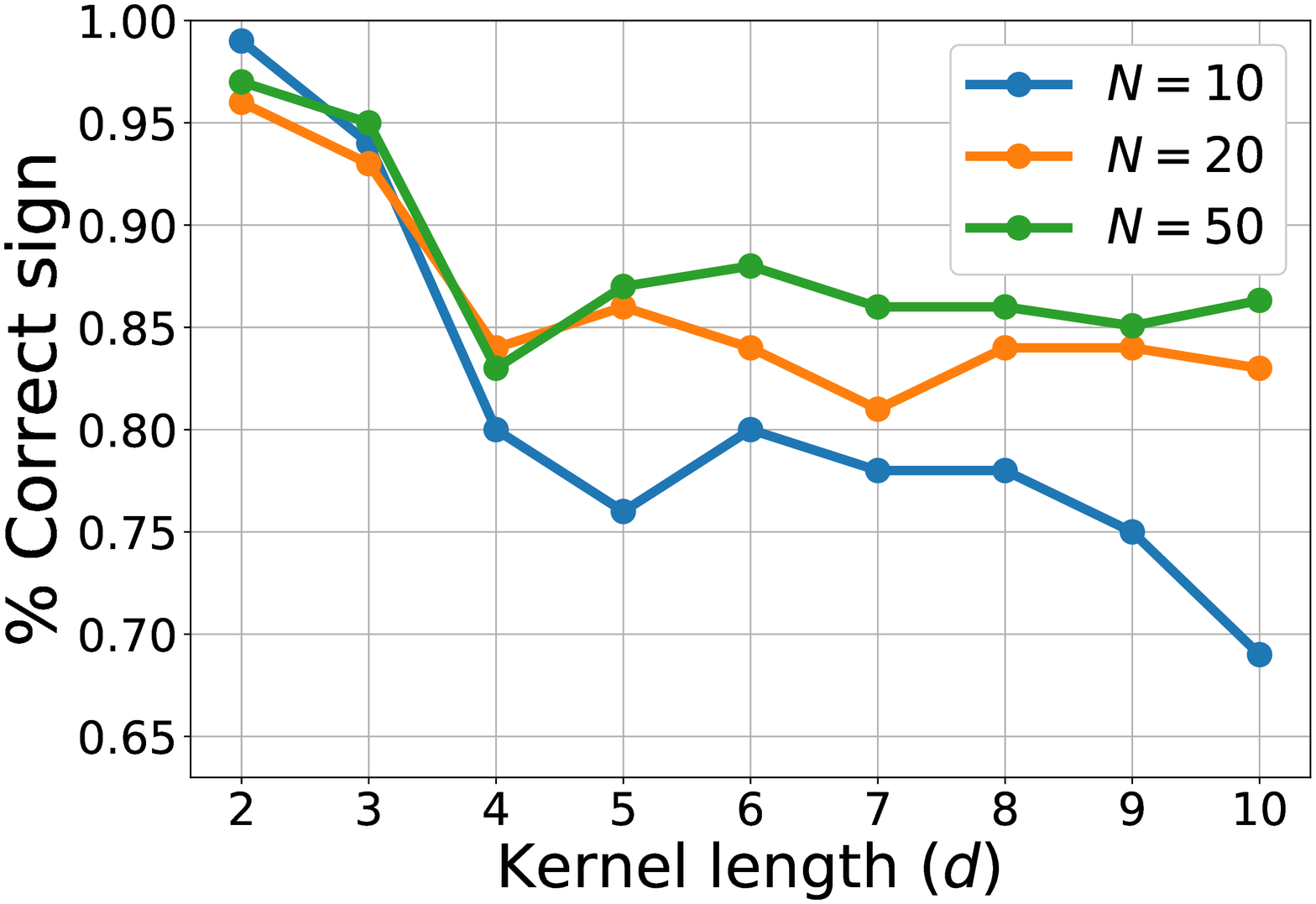}%triple.eps
        \caption{}
        \label{fig perf2}
    \end{subfigure}%Gaussian kernels Correlations between the \nntd~estimate and the ground truth kernels for different layers. Final activation is identity.
        \caption{a) The solid lines are the Test MSE ($\E\Big[\left(\y-\hat{f}_{\text{CNN}}(\x)\right)^2\Big]/\E[\y^2]$) error when using the correct kernel signs (oracle). The dashed lines are the Test MSE by determining the kernel signs with Algorithm \ref{algo 1}. b) Fraction of times Algorithm \ref{algo 1} successfully identified all signs.}
        \label{fig perf}
\end{figure}
%. Surprisingly, both approaches perform best for smaller values of $d$ (e.g.~$d=2,3$)

%The intention is ensuring that even the output kernel is nontrivial to estimate\footnote{With identity activation at the output layer, learning the output kernel boils down to a linear regression.}. 
Our goal in this section is to numerically corroborate the theoretical predictions of Section \ref{sec main res}. To this aim we use a CNN model of the form \eqref{CNNform} with $D$ layers and ReLU activations and set the kernel lengths to be all equal to each other i.e.~$d_4=\ldots=d_1=d$. We use the identity activation for the last layer (i.e.~$\phi_D(z)=z$) with the exception of the last experiment where we use a ReLU activation (i.e.~$\phi_D(z)=\max(0,z)$). We conducted our experiments in Python using the Tensorly library for the tensor decomposition in \nntd \cite{kossaifi2016tensorly}.  Each curve in every figure is obtained by averaging $100$ independent realizations of the same CNN learning procedure. Similar to our theory, we use Gaussian data points $\x$ and ground truth labels $\y=\func{\x}$.

We conduct two sets of experiments: The first set focuses on larger values of depth $D$ and the second set focuses on larger values of width $d$. In all experiments kernels are generated with random Gaussian entries and are normalized to have unit Euclidean norm. For the ReLU activation if one of the kernels have all negative entries, the output is trivially zero and learning is not feasible. To address this, we consider {\em{operational}} networks where at least $50\%$ of the training labels are nonzero. Here, the number $50\%$ is arbitrarily chosen and we verified that similar results hold for other values. To ensure the kernels do not have all negative entries and the network is operational we use a rejection sampling scheme. That is, if our generated network does not obey the $50\%$ assumption we discard that experiment and generate a new one. In practice, this is mainly an issue for small $d$ values only as the chance of an all-negative kernel is equal to $2^{-d}$. Finally, to study the effect of finite samples, we let the sample size grow proportional to the total degrees of freedom $\sum_{\el=1}^Dd_\el$. In particular, we set an oversampling factor $N=\frac{n}{\sum_{\el=1}^Dd_\el}$ and carry out the experiments for $N\in\{10,20,50,100\}$. While our theory requires $N\gtrsim\log D$, in our experiments, we typically observe that improvement is marginal after $N=50$.
% We also assume the number of samples $n$ exceed the total number of parameters in the model $\sum_{\ell=1}^D d_\ell$  by an oversampling factor $N$. That is,
%\[
%n=N\sum_{\ell=1}^Dd_\ell.
%\]
%for each trial and report the average over $500$ independent realizations for varying oversampling factors $N\in\{10,20,50\}$.
%This indicates that practical CNN models might benefit from tensor decomposition algorithms regardless of kernel width. 

In Figure \ref{fig depth}, we consider two networks with $d=2,D=12$ and $d=3,D=8$ configuration. We plot the absolute correlation between the ground truth and the estimates as a function of layer depth. For each hidden layer $1\leq \el\leq D$, our correlation measure ($y$-axis) is 
\[
\text{corr}(\lah{\el},\lay{\el})=\abs{\li\lah{\el},\lay{\el}\ri}.
\] 
This number is between $0$ and $1$ as the kernels and their estimates both have unit norm. We observe that for both $d=2$ and $d=3$, \nntd~consistently achieves correlation above 75\% for $N=20$. While our theory requires $d$ to scale quadratically with depth i.e.~$d\gtrsim D^2$, we find that even small $d$ values work well in our experiments. The effect of sample size becomes evident by comparing $N=20$ and $N=50$ for the input and output layers ($\ell=1,\ell=D$). In this case $N=50$ achieves perfect correlation. Interestingly, correlation values are smallest in the middle layers. In fact this even holds when $N$ is large suggesting that the rank one approximation of the population tensor provides worst estimates for the middle layers.

Table \ref{table 1} shows the test performance of \nntd~using an oracle and Algorithm \ref{algo 1}. Our test loss is defined as the normalized regression loss
\begin{align}
\text{Test MSE}=\frac{\E\Big[\left(\y-\hat{f}_{\text{CNN}}(\x)\right)^2\Big]}{\E[\y^2]},
\end{align}
where $\hat{f}_{\text{CNN}}$ is our estimate of $\func{}$. Here, the oracle knows the correct signs (i.e.~guaranteed to have $\li\lah{\el},\lay{\el}\ri\geq 0$ for all $1\leq \el\leq D$) but the scale is found by solving a regression problem in a similar fashion to Line $12$ of Algorithm \ref{algo 1}. We observe that for $N\geq 50$, signs of all kernels are correctly identified with $\gtrsim 90\%$ accuracy. This table also demonstrates that there is a minor difference in test loss between the oracle and our proposed greedy algorithm. We would like to note that a test loss of $\approx 40\%$ may appear large, especially as we achieve high kernel correlations $\gtrsim 85\%$ (for $N=100$). We believe that this behavior is due to the depth of the network which aggregates the small estimation errors of individual layers and results in a larger error in the overall function estimate $\hat{f}_{\text{CNN}}$.

\begin{figure}[t!]
 \begin{subfigure}[b]{0.5\textwidth}
        \includegraphics[width=\textwidth]{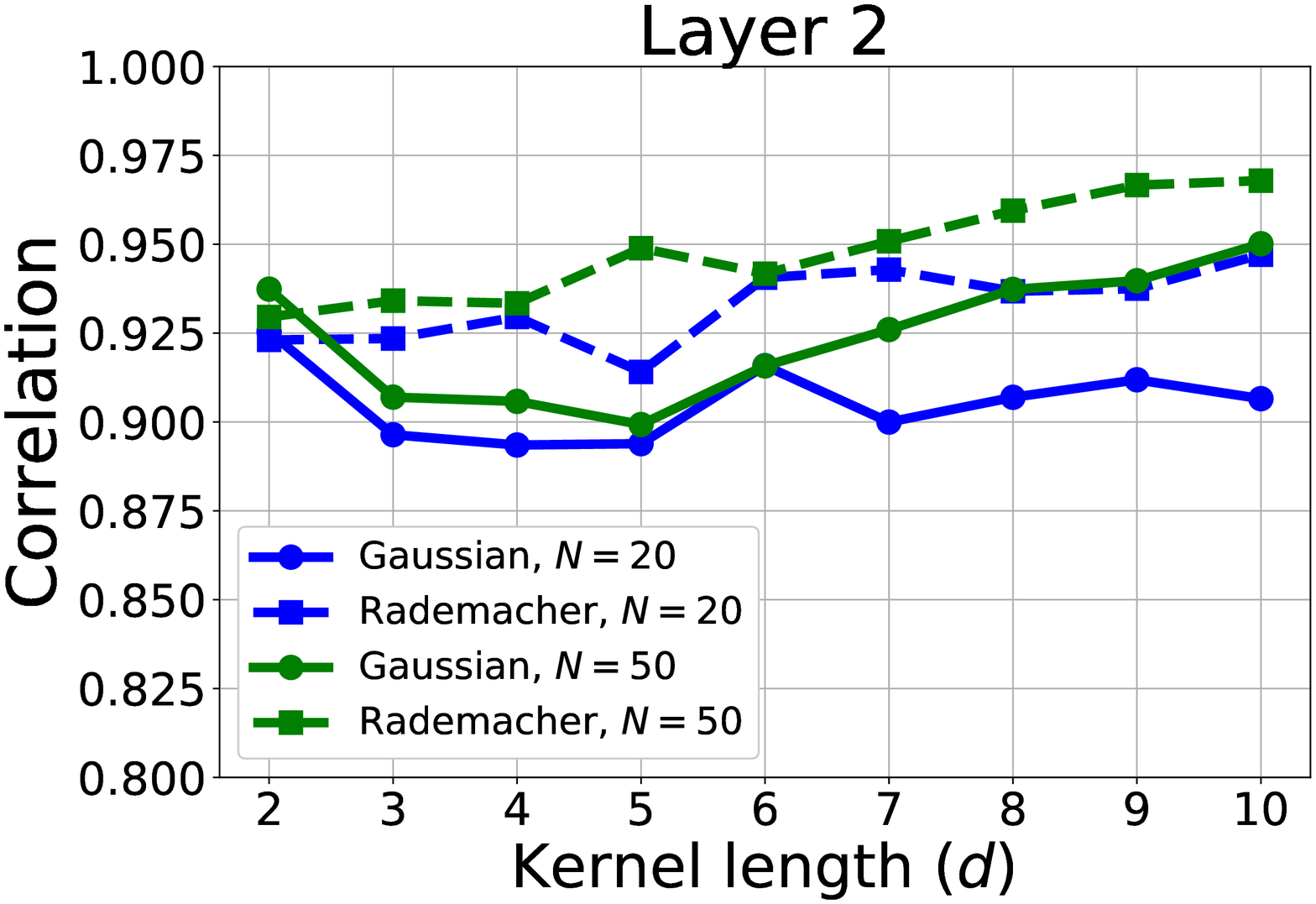}%triple.eps
        %\caption{}
    \end{subfigure} ~
    \begin{subfigure}[b]{0.5\textwidth}
        \includegraphics[width=\textwidth]{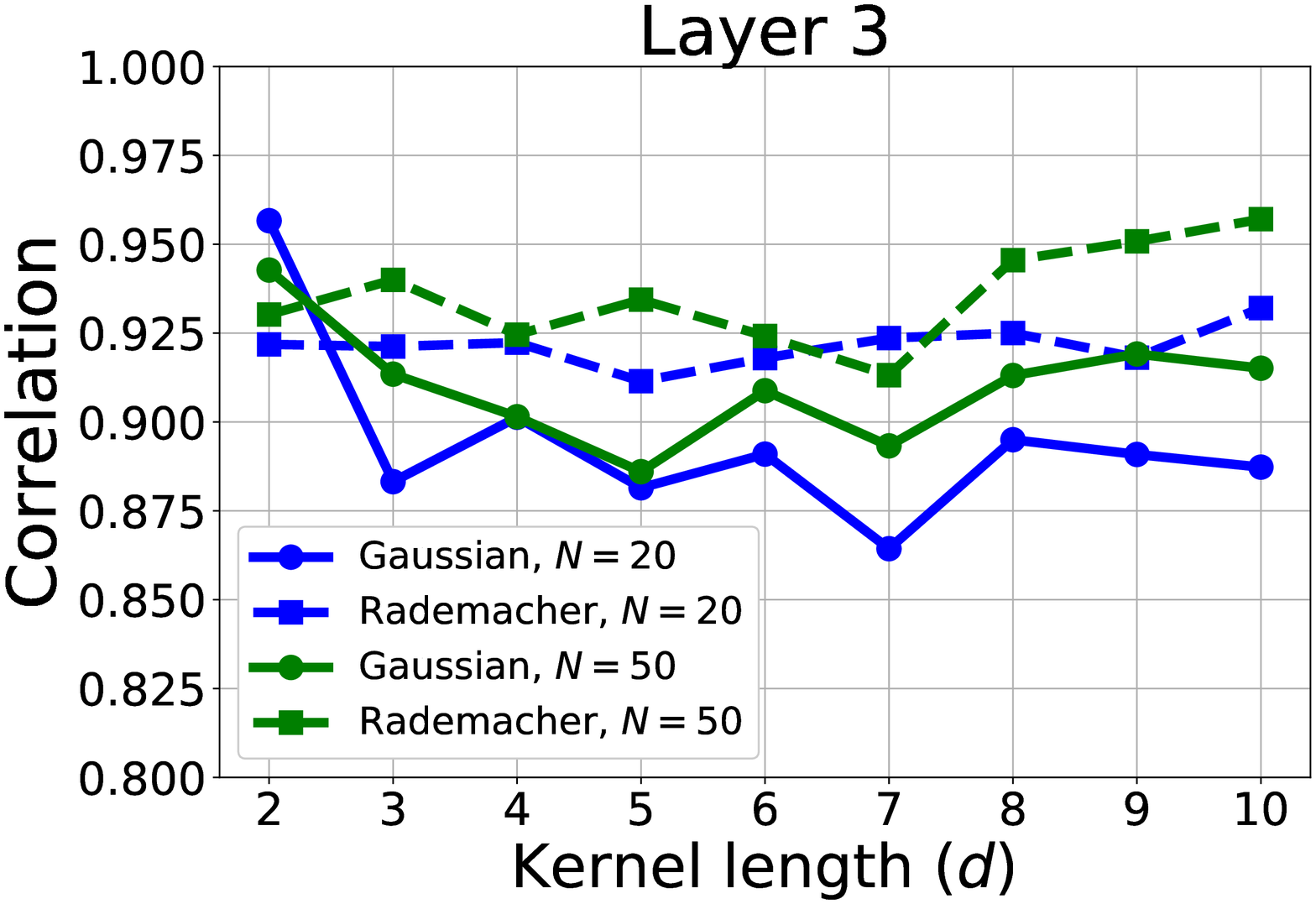}%triple.eps
        %\caption{}
    \end{subfigure}%Gaussian kernels Correlations between the \nntd~estimate and the ground truth kernels for different layers. Final activation is identity.
        \caption{Correlations ($\text{corr}(\lah{\el},\lay{\el})=\abs{\li\lah{\el},\lay{\el}\ri}$) when the ground truth kernels are Gaussian vs Rademacher. Gaussian is more spiky and results in lower correlation.}
        \label{fig comp1}
\end{figure}

%gets slightly worse after $d\geq 4$ and 
For the remaining experiments, we set $D=4$ and vary the width $d$ from $2$ to $10$. $N$ is varied between $10$ to $50$. Figure \ref{fig perf} summarizes the performance of the overall algorithm which is composed of \nntd~and Algorithm \ref{algo 1}. In Figure \ref{fig perf1}, we observe that small $d$ values $d=2,3$ achieve the best test performance. Performance appears to saturate after $d\geq 7$ both for greedy algorithm (dashed lines) and oracle (solid lines) which has the correct sign information. The greedy algorithm is consistently worse than the oracle which indicates that it may have reached a local minima (this claim is verified on several examples where the oracle and greedy approaches have differing signs). While signs are often correctly identified in Figure \ref{fig perf2}, the greedy algorithm fails in $15\%$ of the instances even for large $N$ which indicates the need for better heuristics heuristics to resolve sign and scale ambiguities.%that SSA has a nontrivial landscape.

Next we study our theoretical predictions regarding \nntd. In particular, Figure \ref{fig comp1} assesses the impact of diffuseness by plotting correlations for Layers $2$ and $3$. This is done by contrasting different kernel generation models, namely, kernels with i.i.d.~standard normal entries and Rademacher ($\pm1$) entries (normalized to have unit Euclidean norm). Rademacher kernels are strictly more diffused than Gaussian kernels as all entries have the same magnitude by construction. We observe that Rademacher kernels (dashed lines) achieve consistently higher correlations which is supported by our theory.

In Figure \ref{fig comp2}, we use a ReLU activation in the final layer and assess the impact of the centering procedure of the \nntd~algorithm which is a major theme throughout the paper. We define the NaiveTD algorithm which solves \eqref{deeptdalg} without centering in the empirical tensor i.e.
\begin{align}
\label{NaiveTD}
\hat{\vct{k}}^{(1)},\ldots,\hat{\vct{k}}^{(D)}=\underset{\vct{v}_1\in\R^{d_1},\vct{v}_2\in\R^{d_2},\ldots,\vct{v}_D\in\R^{d_D}}{\arg\max} \li\frac{1}{n}\sum_{i=1}^ny_i\mtx{X}_i,\robt{\vb}\ri\quad\text{subject to}\quad \twonorm{\vct{v}_1}=\ldots=\twonorm{\vct{v}_D}=1.
\end{align}
Since the activation of the final layer is ReLU, the output has a clear positive bias in expectation which will help demonstrating the importance of centering. We find that for smaller oversampling factors of $N=10$ or $N=20$, \nntd~has a visibly better performance compared with NaiveTD. The correlation difference is persistent among different layers (we plotted only Layers $1$ and $2$) and appears to grow with increase in the kernel size $d$.

Finally, in Figure \ref{fig comp3}, we assess the impact of activation nonlinearity by comparing the ReLU and identity activations in the final layer. We plot the first and final layer correlations for this setup. While the correlation performances of the first layer are essentially identical, the ReLU activation (dashed lines) achieves significantly lower correlation at the final layer. This is not surprising as the final layer passes through an additional nonlinearity.

\begin{figure}[t!]
 \begin{subfigure}[b]{0.5\textwidth}
        \includegraphics[width=\textwidth]{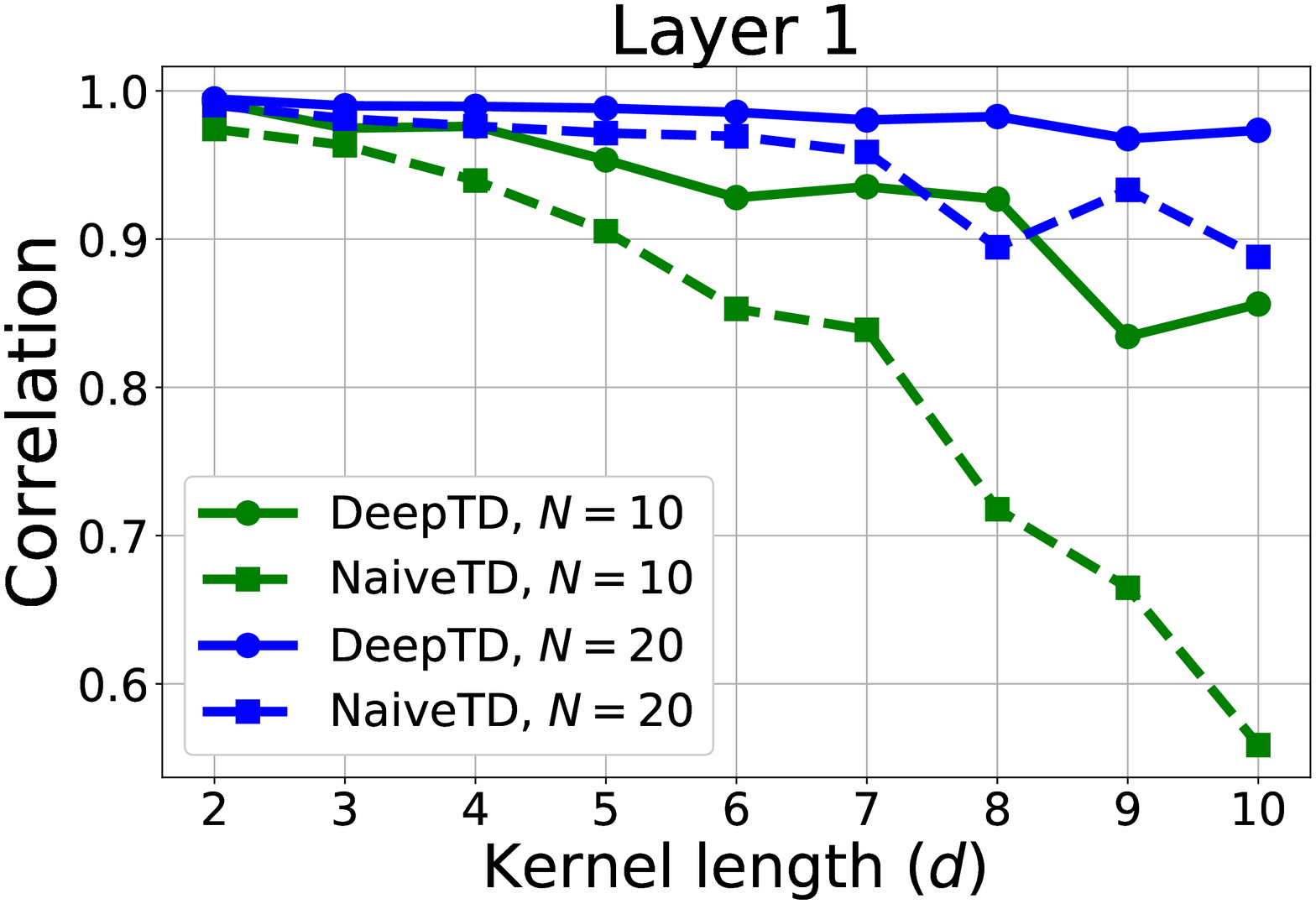}%triple.eps
        %\caption{}
    \end{subfigure} ~
    \begin{subfigure}[b]{0.5\textwidth}
        \includegraphics[width=\textwidth]{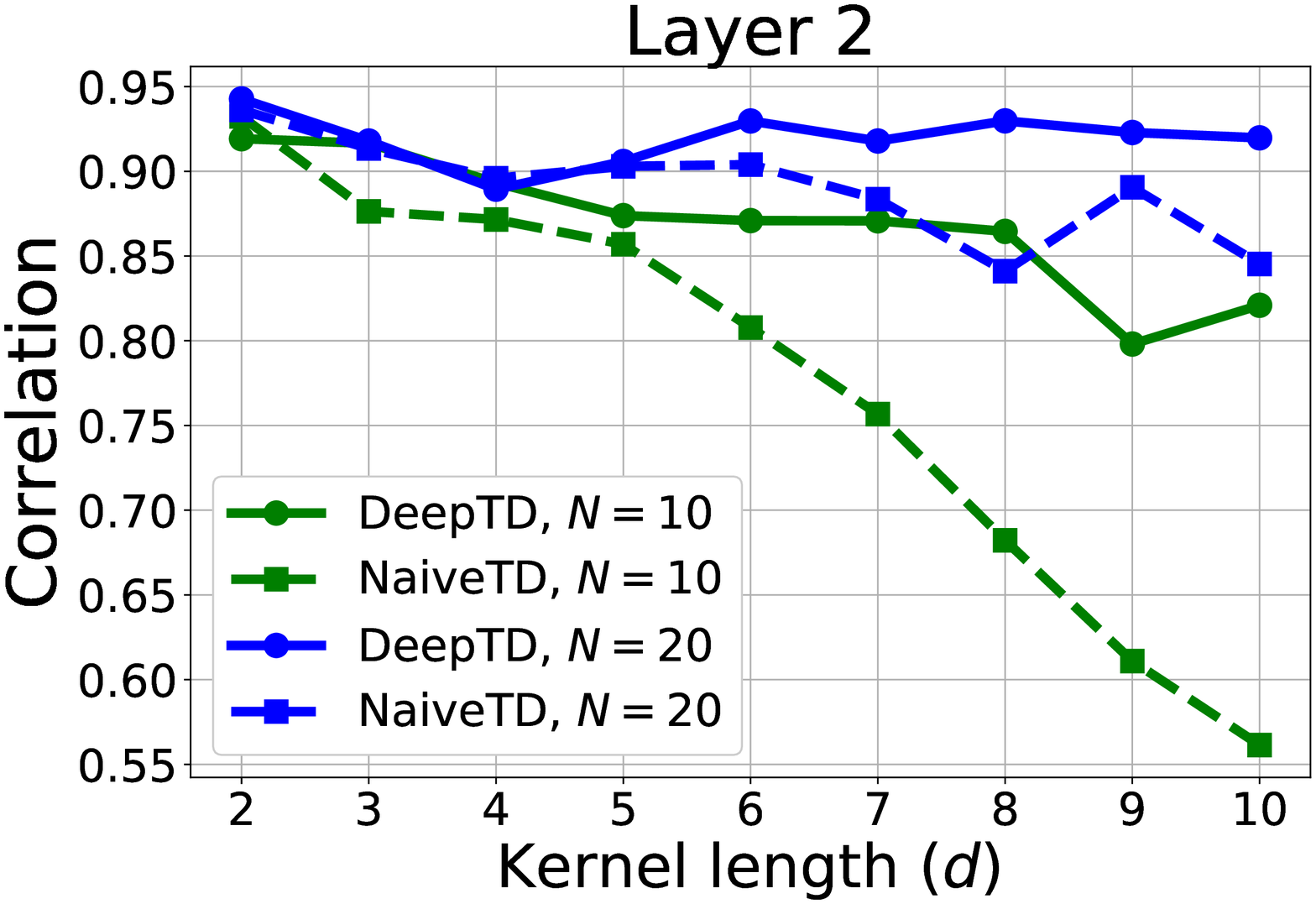}%triple.eps
        %\caption{}
    \end{subfigure}
        \caption{\nntd~estimate vs NaiveTD estimate when final activation is ReLU. Bias of NaiveTD results in significantly worse performance.}
        \label{fig comp2}
\end{figure}

\begin{figure}[t!]
 \begin{subfigure}[b]{0.5\textwidth}
        \includegraphics[width=\textwidth]{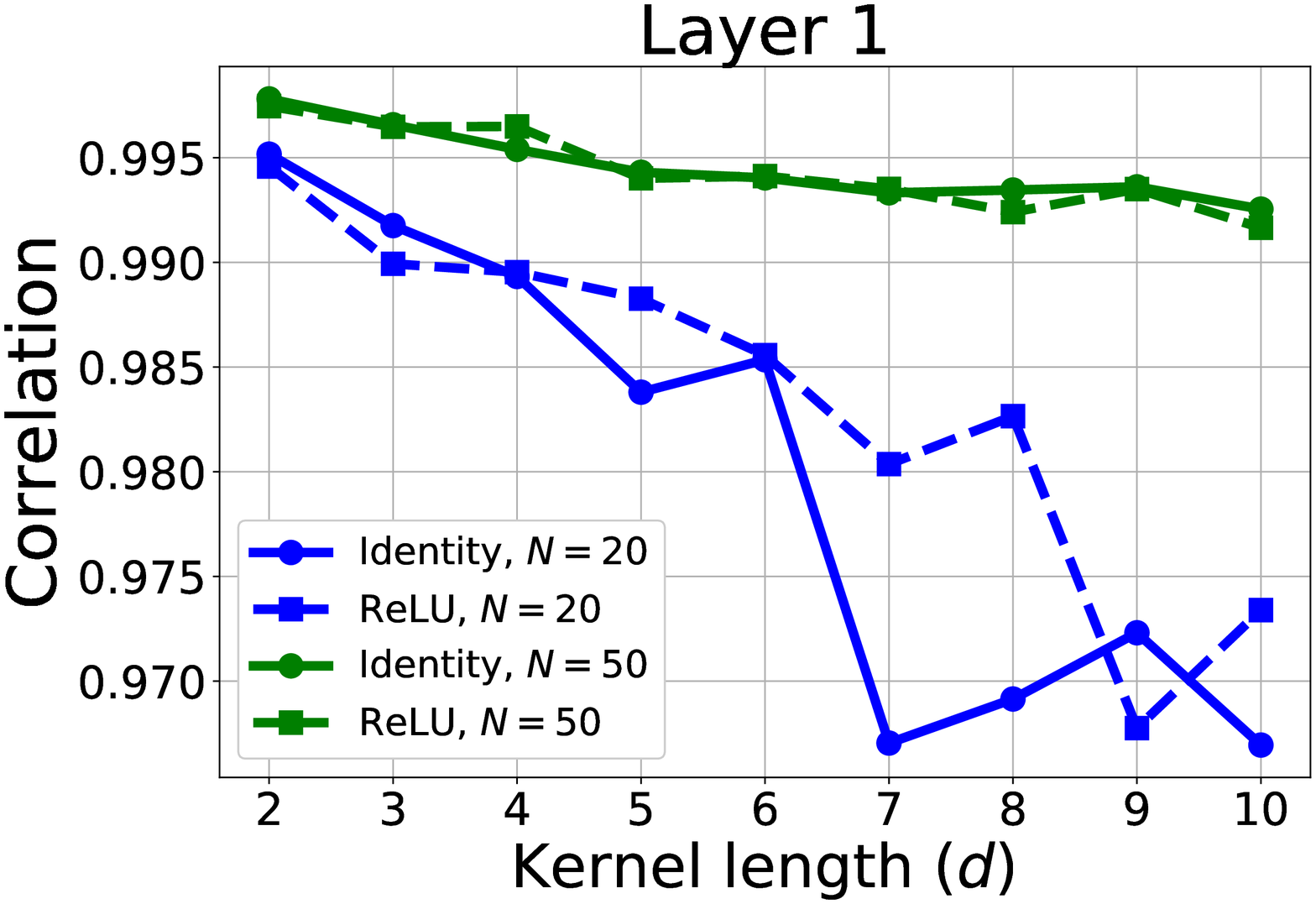}%triple.eps
        %\caption{}
    \end{subfigure} ~
    \begin{subfigure}[b]{0.5\textwidth}
        \includegraphics[width=\textwidth]{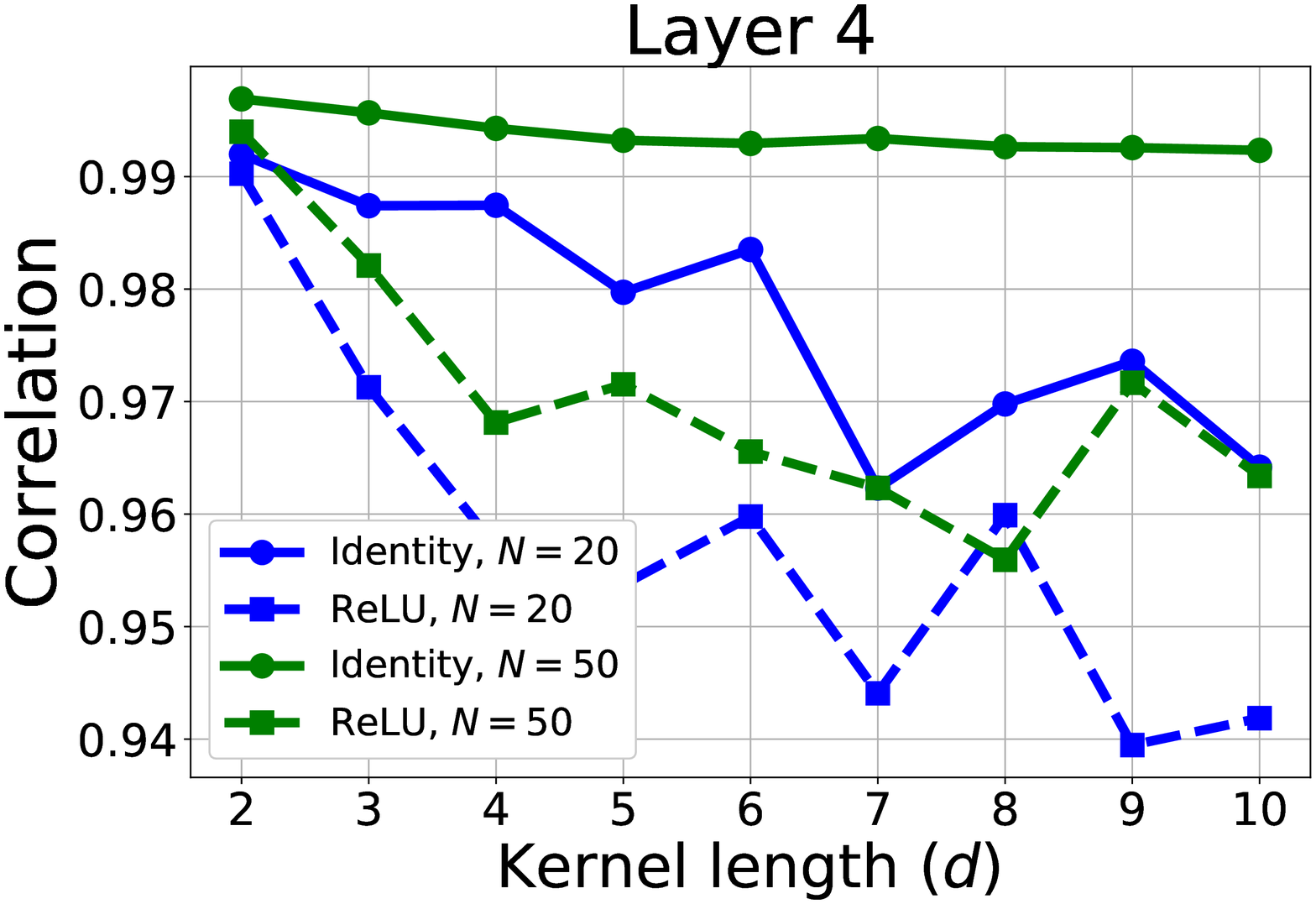}%triple.eps
        %\caption{}
    \end{subfigure}
        \caption{Comparison of performance of DeepTD when the final activation is ReLU in lieu of the identity activation. }
        \label{fig comp3}
\end{figure}

\section{Related work}
Our work is closely related to the recent line of papers on neural networks as well as tensor decompositions. We briefly discuss this related literature.

\noindent{\bf{Neural networks:}} Learning neural networks is a nontrivial task involving non-linearities and non-convexities. Consequently, existing theory works consider different algorithms, network structures and assumptions. A series of recent work focus on learning zero or one-hidden layer fully connected neural networks with random inputs and planted models \cite{soltanolkotabi2017learning, WF, soltanolkotabi2017structured, oymak2018learning, mei2018mean, zhong2017recovery, ge2017learning, fu2018local,jaganathan2012recovery}. Other recent publications \cite{brutzkus2017globally, du2017gradient, zhong2017learning,du2017convolutional,oymak2018learning} consider the problem of learning a convolutional neural network with $1$-hidden layer. In particular, \cite{brutzkus2017globally,du2017gradient,du2017convolutional} focuses on learning non-overlapping networks as in this paper (albeit in the limit of infinite training data). The papers above either focus on characterizing the optimization landscape, or population landscape or providing exact convergence guarantees for gradient descent. In comparison, in this paper we focus on approximate convergence guarantees using tensor decompositions for arbitrary deep networks. A few recent publications \cite{sol2017,sagun2017empirical,soudry2016no} consider the training problem when the network is over-parametrized and study the over-fitting ability of such networks. Closer to our work, \cite{malach2018provably} and \cite{arora2014provable} considers provable algorithms for deep networks using layer-wise algorithms. In comparison to our work, \cite{malach2018provably} applies to a very specific generative model that assumes a discrete data distribution and studies the population loss in lieu of the empirical loss. Arora et al. \cite{arora2014provable} studies deep models but uses activation functions that are not commonly used and assumes random network weights. In comparison, we work with practical activations and do not assume the weights are generated at random.

%du2018power,
%\noindent{\bf{Tensor methods:}}
%
%Among these, \cite{zhong2017recovery, janzamin2015beating} develop tensor methods 
%
%
%\cite{sol2017,du2017gradient,li2017convergence,}
%Tensor decomposition itself is a highly non-convex and nontrivial task. 

\noindent {\bf{Tensor decomposition:}} Tensors are powerful tools to model a wide variety of big-data problems \cite{sidiropoulos2017tensor,anandkumar2014tensor}. Recent years have witnessed a growing interest in tensor decomposition techniques to extract useful latent information from the data \cite{anandkumar2014tensor,ge2015escaping}. The connection between tensors and neural networks have been noticed by several papers \cite{mondelli2018connection,janzamin2015beating,cohen2016expressive,kossaifi2017tensor,cohen2016convolutional}. Cohen et al. \cite{cohen2016convolutional,cohen2016expressive} relate convolutional neural networks and tensor decompositions to provide insights on the expressivity of CNNs. Mondelli and Montanari \cite{mondelli2018connection} connects the hardness of learning shallow networks to tensor decomposition. Closer to this paper, Janzamin et al. \cite{janzamin2015beating} and Zhong et al. \cite{zhong2017recovery} apply tensor decomposition on $one$-hidden layer, fully connected networks to approximately learn the latent weight matrices. 

\section{Conclusion}
In this paper we studied a multilayer CNN model with depth $D$. We assumed a non-overlapping structure where each layer has a single convolutional kernel and has stride length equal to the dimension of its kernel. We establish a connection between approximating the CNN kernels and higher order tensor decompositions. Based on this connection we proposed an algorithm for simultaneously learning all the kernels called the Deep Tensor Decomposition (DeepTD) algorithm. This algorithm builds a $D$-way tensor based on the training data and applies a rank one tensor factorization algorithm to this tensor to simultaneously estimate all the convolutional kernels. Assuming the input data is distributed i.i.d.~according to a Gaussian model with corresponding output generated by a planted set of convolutional kernels we prove DeepTD can approximately learn all the kernels with a near minimal number of training data. A variety of numerical experiments complement our theoretical findings.

%a planted model 
%We demonstrated that the factors of a rank one approximation of this tensor approximate the kernel estimates

%This corresponds to a non-overlapping structure depicted in Figure \ref{NCNNmodel}. To establish a connection between CNNs and higher order tensors, in Section \ref{ten decomp}, we propose the \nntd~algorithm based on tensor decomposition. This algorithm first casts input data points into $D$-way tensors, and then creates a weighted linear combination of these tensors using {\em{centered}} output labels. These two operations return the CNN tensor $\tcnn$.
%
%\noindent{\bf{Population landscape:}} Investigating population landscape of CNNs, we derive the conditions under which $\E[\tcnn]$ can be rigorously approximated as a rank-$1$ tensor. We introduce a measure of ``network incoherence'' which can be used to bound the rank-$1$ approximation error. We rigorously show that as kernel sizes increase and kernels become diffused (i.e.~all entries have similar magnitude) $\E[\tcnn]$ becomes closer to a rank-$1$ tensor. 
%
%\noindent{\bf{Learning from finite data:}} We complement our findings on the population landscape with finite sample analysis. We find that the spectral norm gap between empirical and population tensors $\tcnn-\E[\tcnn]$ becomes manageable as soon as sample size grows proportional to total number of network parameters. Together with results on population landscape, this implies that, rank-$1$ tensor decomposition can provably learn kernels of each layer up to sign and scaling ambiguity.

\section{Proofs}
In this section we will prove our main results. Throughout, for a random variable $X$, we use $\zm{X}$ to denote $X-\E[X]$. Simply stated, $\zm{X}$ is the {\em{centered}} version of $X$. For a random vector/matrix/tensor $\mtx{X}$, $\zm{\mtx{X}}$ denotes the vector/matrix/tensor obtained by applying the $\zm{}$ operation to each entry. For a tensor $\mtx{T}$ we use $\fronorm{\mtx{T}}$ to denote the square root of the sum of squares of the entries of the tensor. Stated differently, this is the Euclidean norm of a vector obtained by rearranging the entries of the tensor. Throughout we use $c$, $c_1$, $c_2$, and $C$ to denote fixed numerical constants whose values may change from line to line. We begin with some useful definitions and lemmas.
\subsection{Useful concentration lemmas and definitions}
In this section we gather some useful definitions and well-known lemmas that will be used frequently throughout our concentration arguments.
\begin{definition}[Orlicz norms] \label{ornorm}For a scalar random variable Orlicz-$a$ norm is defined as
\[
\|X\|_{\psi_{a}}=\sup_{k\geq 1}k^{-1/a}(\E[|X|^k])^{1/k}
\]
Orlicz-$a$ norm of a vector $\x\in\R^p$ is defined as $\|\x\|_{\psi_{a}}=\sup_{\vb\in \Bc^{p}} \|\vb^T\x\|_{\psi_{a}}$ where $\Bc^p$ is the unit $\el_2$ ball.
The sub-exponential norm is the function $\te{\cdot}$ and the sub-gaussian norm the function $\tsub{\cdot}$.
\end{definition}
We now state a few well-known results that we will use throughout the proofs. This results are standard and are stated for the sake of completeness. The first lemma states that the product of sub-gaussian random variables are sub-exponential.
\begin{lemma} \label{exp obvious}Let $X,Y$ be subgaussian random variables. Then $\te{XY}\leq \tsub{X}\tsub{Y}$.
\end{lemma}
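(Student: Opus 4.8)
The plan is to work directly from the Orlicz-norm definition in Definition~\ref{ornorm} and reduce the claim to a moment comparison combined with the Cauchy--Schwarz inequality. By definition, $\te{XY}=\sup_{k\geq 1}k^{-1}\left(\E[|XY|^k]\right)^{1/k}$, so it suffices to bound $\left(\E[|XY|^k]\right)^{1/k}$ by a constant multiple of $k\,\tsub{X}\tsub{Y}$ uniformly in $k\ge 1$. The natural first step is to decouple the two factors: writing $|XY|^k=|X|^k|Y|^k$ and applying Cauchy--Schwarz gives $\E[|XY|^k]\le \left(\E[|X|^{2k}]\right)^{1/2}\left(\E[|Y|^{2k}]\right)^{1/2}$, hence $\left(\E[|XY|^k]\right)^{1/k}\le \left(\E[|X|^{2k}]\right)^{1/(2k)}\left(\E[|Y|^{2k}]\right)^{1/(2k)}$.

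The second step is to control each even moment through the sub-gaussian norm. Unwinding Definition~\ref{ornorm} for $\psi_2$, the identity $\tsub{X}=\sup_{m\geq 1}m^{-1/2}\left(\E[|X|^m]\right)^{1/m}$ yields the pointwise moment bound $\left(\E[|X|^m]\right)^{1/m}\le \sqrt{m}\,\tsub{X}$, valid for every integer $m\ge 1$. I would apply this at $m=2k$, which is always $\ge 2\ge 1$ so the bound stays in range, to obtain $\left(\E[|X|^{2k}]\right)^{1/(2k)}\le \sqrt{2k}\,\tsub{X}$ and the analogous estimate for $Y$. Substituting back gives $\left(\E[|XY|^k]\right)^{1/k}\le 2k\,\tsub{X}\tsub{Y}$, so that $k^{-1}\left(\E[|XY|^k]\right)^{1/k}\le 2\,\tsub{X}\tsub{Y}$ for all $k\ge 1$; taking the supremum over $k$ establishes the desired comparison (with the universal constant suppressed in the stated form).

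There is no genuine obstacle here: the only care needed is the bookkeeping in the exponents, namely matching the $\psi_1$ normalization (division by $k$) against the $\psi_2$ normalization (division by $\sqrt{m}$) after Cauchy--Schwarz forces the passage from the $k$-th to the $2k$-th moment, which is exactly what makes the two $\sqrt{2k}$ factors combine into a single $2k$ and cancel the leading $k^{-1}$. Since both variables are assumed sub-gaussian, all even moments are finite and the supremum defining $\tsub{\cdot}$ serves as a legitimate upper bound at each step, so every manipulation is justified. This is precisely the standard argument that a product of sub-gaussians is sub-exponential.
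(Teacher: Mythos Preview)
Your argument is the standard one and is correct. The paper does not actually prove this lemma: it is listed among ``a few well-known results that we will use throughout the proofs'' and ``stated for the sake of completeness,'' so there is no proof in the paper to compare against. Your Cauchy--Schwarz plus moment-bound derivation is exactly the canonical justification for this fact under the moment-based Orlicz norm of Definition~\ref{ornorm}.

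One minor point worth flagging explicitly (which you already note): with the paper's exact normalization the argument yields $\te{XY}\le 2\,\tsub{X}\tsub{Y}$, not $\te{XY}\le \tsub{X}\tsub{Y}$. The factor $2$ comes from the $\sqrt{2k}\cdot\sqrt{2k}=2k$ step and cannot be removed by this route. The lemma as stated in the paper simply suppresses the absolute constant, which is harmless for all downstream uses (every invocation feeds into bounds that already carry unspecified constants $c,C$), but your parenthetical remark about the suppressed constant is the honest way to record this.
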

The next lemma connects Orlicz norms of sum of random variables to the sum of the Orlicz norm of each random variable.
\begin{lemma}\label{subexp zm} Suppose $X,Y$ are random variables with bounded $\tsup{\cdot}$ norm. Then $\tsup{X+Y}\leq 2\max\{\tsup{X},\tsup{Y}\}$. In particular $\tsup{X-\E X]}\leq 2\tsup{X}$.
\end{lemma}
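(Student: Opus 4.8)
The plan is to reduce both statements to Minkowski's inequality applied at each fixed moment order, exploiting that the Orlicz-$a$ norm of Definition \ref{ornorm} is defined as a supremum of normalized $L^k$ moments over $k \geq 1$.

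First, for the subadditivity-type bound, I would set $M := \max\{\tsup{X}, \tsup{Y}\}$ and read off directly from the definition that $(\E[|X|^k])^{1/k} \leq k^{1/a}\tsup{X} \leq k^{1/a}M$ for every $k \geq 1$, and symmetrically $(\E[|Y|^k])^{1/k} \leq k^{1/a}M$. Since the map $Z \mapsto (\E[|Z|^k])^{1/k}$ is a genuine norm whenever $k \geq 1$, Minkowski's inequality gives $(\E[|X+Y|^k])^{1/k} \leq (\E[|X|^k])^{1/k} + (\E[|Y|^k])^{1/k} \leq 2k^{1/a}M$. Dividing through by $k^{1/a}$ and taking the supremum over $k \geq 1$ then produces $\tsup{X+Y} \leq 2M$, which is exactly the first claim.

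Next, for the centering bound, I would instantiate the first part with the deterministic choice $Y = -\E[X]$, so that $X + Y = X - \E[X]$. The only auxiliary facts needed are that a constant $c$ satisfies $\tsup{c} = |c|$ (the defining supremum being attained at $k = 1$ since $k^{-1/a} \leq 1$ for all $k \geq 1$), and that $|\E[X]| \leq \E[|X|] \leq \tsup{X}$ (again reading off the definition at $k = 1$). Together these yield $\max\{\tsup{X}, \tsup{-\E X}\} = \tsup{X}$, and the first part then delivers $\tsup{X - \E X} \leq 2\tsup{X}$.

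I do not anticipate a genuine obstacle here: the argument is essentially Minkowski plus bookkeeping. The one subtlety worth flagging is that Minkowski's inequality requires $k \geq 1$, which is precisely the range over which the supremum in the Orlicz norm is taken, so the reduction is legitimate; the remaining care is merely to verify that centering by $\E[X]$ contributes no more than $\tsup{X}$ and hence does not degrade the constant $2$ already forced by Minkowski.
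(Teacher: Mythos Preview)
Your argument is correct. The paper does not actually supply a proof of this lemma; it is listed among the ``well-known results'' stated without proof in the preliminaries. Your Minkowski-at-each-moment approach is the standard verification and matches exactly how this fact is derived from the moment-based definition of the Orlicz norm used here, so there is nothing to compare against.
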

The lemma below can be easily obtained by combining the previous two lemmas.
\begin{lemma} \label{exp zm mult}Let $X,Y$ be subgaussian random variables. Then $\te{\zm{XY}}\leq 2\tsub{X}\tsub{Y}$.
\end{lemma}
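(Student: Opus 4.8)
The plan is to obtain the bound by simply chaining the two preceding lemmas, exactly as the remark preceding the statement suggests. Observe first that $\zm{XY}=XY-\E[XY]$ is precisely the centered version of the product $XY$, so the quantity to be controlled is the sub-exponential (Orlicz-$1$) norm of a centered random variable. I would therefore want to apply the second conclusion of Lemma~\ref{subexp zm}, namely $\tsup{X-\E[X]}\le 2\tsup{X}$, specialized to the sub-exponential norm $\te{\cdot}=\tsup{\cdot}$ with $a=1$ and applied to the random variable $XY$ in place of $X$. This gives $\te{XY-\E[XY]}\le 2\te{XY}$.

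It then remains to control $\te{XY}$, which is exactly the content of Lemma~\ref{exp obvious}: since $X$ and $Y$ are sub-gaussian, their product is sub-exponential with $\te{XY}\le\tsub{X}\tsub{Y}$. Substituting this estimate into the previous display yields $\te{\zm{XY}}\le 2\tsub{X}\tsub{Y}$, which is the claimed inequality. There is essentially no obstacle here; the only point worth checking is that the hypothesis of Lemma~\ref{subexp zm} is met, i.e.\ that $XY$ has finite sub-exponential norm, but this is guaranteed in advance by Lemma~\ref{exp obvious}, so the two steps compose cleanly.
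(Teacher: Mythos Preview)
Your proposal is correct and is exactly the approach the paper indicates: the statement is obtained by combining Lemma~\ref{exp obvious} with the centering bound of Lemma~\ref{subexp zm}, precisely as you chain them.
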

Finally, we need a few standard chaining definitions.
%o understand where perturbed width arises from, we introduce Talagrand's $\gamma_a$ functionals and associated helper definitions.
\begin{definition} [Admissible sequence \cite{talagrand2014gaussian}] Given a set $T$ an admissible sequence is an increasing sequence $\{A_n\}_{n=0}^\infty$ of partitions of $T$ such that $|A_n| \leq N_n$ where $N_0=1$ and $N_n=2^{2^n}$ for all $n\geq 1$.
\end{definition}
For the following discussion $\Delta_d(A_n(t))$, will be the diameter of the set $S\in A_n$ that contains $t$, with respect to the $d$ metric.
\begin{definition} [$\gamma_a$ functional \cite{talagrand2014gaussian}] \label{gamma functional}Given $a > 0$, and a metric space $(T,d)$ we define \[\gamma_a(T,d) = \inf \sup_{t\in T} \sum_{n\geq 0} 2^{n/a} \Delta_d(A_n(t)),\] where the infimum is taken over all admissible sequences.
\end{definition}

The following lemma upper bounds $\gamma_\alpha$ functional with covering numbers of $T$. The reader is referred to Section $1.2$ of \cite{talagrand2006generic}, Equation $(2.3)$ of \cite{dirksen2013tail}, and Lemma $D.17$ of \cite{oymak2018learning}.
\begin{lemma} [Dudley's entropy integral]\label{dudley lem}Let $N(\eps)$ be the $\eps$ covering number of the set $T$ with respect to the $d$ metric. Then
\[
\gamma_\alpha(T,d)\leq C_\alpha \int_{0}^\infty \log^{1/\alpha} N(\eps) d\eps,
\]
where $C_\alpha>0$ depends only on $\alpha>0$.
\end{lemma}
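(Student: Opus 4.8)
The plan is to reconstruct the standard generic-chaining argument that realizes the $\gamma_\alpha$ functional through covering nets, specializing the admissible sequence to partitions induced by successively finer $\eps$-nets. I assume throughout that $T$ is totally bounded, since otherwise the right-hand side integral diverges and the bound is vacuous. The first ingredient is a choice of scales matched to the admissibility cardinalities $N_n=2^{2^n}$. For each integer $n\ge 0$ I would set
\[
e_n=\inf\{\eps>0:N(\eps)\le 2^{2^n}\}.
\]
Because $N(\eps)$ is non-increasing in $\eps$, the sequence $e_n$ is non-increasing, and by construction $N(e_n)\le 2^{2^n}=N_n$ while $N(\eps)>2^{2^n}$ for every $\eps<e_n$. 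The latter gives $(\log N(\eps))^{1/\alpha}\ge(\log 2)^{1/\alpha}2^{n/\alpha}$ on $\eps<e_n$, which is exactly what couples the discrete scales to the entropy integral at the end.

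Next I would build the admissible sequence. For each $n$ pick a minimal $e_n$-net $T_n$ of $T$, so that $|T_n|=N(e_n)\le 2^{2^n}=N_n$, and let $\Pi_n$ be the partition of $T$ obtained by assigning each point to one of its nearest net points; every cell of $\Pi_n$ then has $d$-diameter at most $2e_n$. To obtain an increasing (refining) sequence I would pass to the common refinements $A_n=\Pi_0\wedge\dots\wedge\Pi_n$. A cardinality count gives $|A_n|\le\prod_{k=0}^n|T_k|\le 2^{2^{n+1}-1}$, which exceeds $N_n$ but stays below $N_{n+1}$; this is repaired by the usual single-index shift, defining the genuine admissible sequence $B_0=\{T\}$ and $B_n=A_{n-1}$ for $n\ge1$, so that $|B_n|\le 2^{2^n-1}\le N_n$. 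Since each cell of $A_{n-1}$ is contained in a cell of $\Pi_{n-1}$, its diameter is at most $2e_{n-1}$, hence $\Delta_d(B_n(t))\le 2e_{n-1}$ for every $t\in T$.

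With this sequence in hand I would plug directly into the definition of the functional and then convert the resulting series into the integral:
\[
\gamma_\alpha(T,d)\le\sup_{t\in T}\sum_{n\ge0}2^{n/\alpha}\Delta_d(B_n(t))\le 2\sum_{n\ge0}2^{n/\alpha}e_{n-1}\le C_\alpha\sum_{m\ge-1}2^{m/\alpha}e_m.
\]
For the conversion, the scale estimate above yields $\int_{e_{n+1}}^{e_n}(\log N(\eps))^{1/\alpha}\,d\eps\ge c_\alpha\,2^{n/\alpha}(e_n-e_{n+1})$ on each dyadic band; summing over $n$ and applying Abel summation (summation by parts) bounds $\sum_m 2^{m/\alpha}e_m$ from above by a constant multiple of $\int_0^\infty(\log N(\eps))^{1/\alpha}\,d\eps$, where the telescoping factor $(1-2^{-1/\alpha})^{-1}$ together with $c_\alpha$, $(\log 2)^{1/\alpha}$ and the shift cost $2^{1/\alpha}$ are all absorbed into the final constant $C_\alpha$.

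The step I expect to require the most care is the construction of a genuinely admissible partition sequence from the individually optimal nets: refining the $\Pi_n$ to make the partitions nested inflates the cardinalities beyond $N_n$, and one must verify that the single-index shift $B_n=A_{n-1}$ simultaneously restores the constraint $|B_n|\le N_n$ and only degrades the diameter scale from $e_n$ to $e_{n-1}$, at the harmless price of a factor $2^{1/\alpha}$. Handling the boundary cell $B_0=\{T\}$ (equivalently the $m=-1$ summand) and the tail $\int_{e_0}^\infty$ of the entropy integral is then routine under the total-boundedness assumption.
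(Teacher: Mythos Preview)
Your proposal is a correct and carefully written outline of the standard generic-chaining proof of Dudley's entropy bound for the $\gamma_\alpha$ functional. The paper, however, does not prove this lemma at all: it is listed among the ``useful concentration lemmas and definitions'' as a well-known fact, and the text accompanying it merely refers the reader to Section~1.2 of Talagrand's monograph, Equation~(2.3) of Dirksen, and Lemma~D.17 of the companion paper. So there is no paper proof to compare against; your argument supplies what the paper simply cites.

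One very minor point: the paper's admissible-sequence convention sets $N_0=1$ (not $2^{2^0}=2$), whereas your scale $e_0$ is defined via $N(\eps)\le 2^{2^0}=2$. This is harmless because you already force $B_0=\{T\}$ explicitly, but it is worth noting that the boundary handling you flag at the end should use the paper's $N_0=1$ convention; nothing in your argument changes beyond the constant.
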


\subsection{Concentration of the empirical tensor (Proof of Theorem \ref{finite sample})}
%Given $D$ vectors $\{\vb_i\}_{i=1}^D$, their rank $1$ $D$-fold tensorization will be denoted by $\vbb_{1:D}=(\vb_1,\dots,\vb_D)$. 
To prove this theorem, first note that given labels $\{y_i\}_{i=1}^n\sim y$ and their empirical average $\yavg=n^{-1}\sum_{i=1}^ny_i$, we have $\E[\yavg]=\E[y]$. Hence $y-\yavg=\zm{y-\yavg}$ and we can rewrite the empirical tensor as follows
%given i.i.d.~Gaussian training data $(\X_i,y_i)_{i=1}^n$ as described in Theorem \ref{main thm}, observe that
\begin{align}\label{decompose}
\mtx{T}_n:=&\frac{1}{n}\sum_{i=1}^n (y_i-\yavg)\X_i\nonumber\\
=&\frac{1}{n}\sum_{i=1}^n \zm{y_i-\yavg}\X_i\nonumber\\
=&\frac{1}{n}\sum_{i=1}^n \zm{y_i}\X_i-\zm{\yavg}\left(\frac{1}{n}\sum_{i=1}^n\X_i\right).%\\
%&:=\Tb_{\text{main}}+\Tb_{\text{side}}
\end{align}
Recall that the population tensor is equal to $\mtx{T}:=\E[y_i\mtx{X}_i]$. Furthermore, $\E[\E[y_i]\mtx{X}_i]=\E[y_i]\E[\mtx{X}_i]=0$. Thus the population tensor can alternatively be written as $\mtx{T}=E[\zm{y_i}\mtx{X}_i]=\frac{1}{n}\sum_{i=1}^n\E[\zm{y_i}\mtx{X}_i]$. Combining the latter with \eqref{decompose} we conclude that
\begin{align*}
\mtx{T}_n-\mtx{T}=&\frac{1}{n}\sum_{i=1}^n \left(\zm{y_i}\X_i-\E[\zm{y_i}\mtx{X}_i]\right)-\zm{\yavg}\left(\frac{1}{n}\sum_{i=1}^n\X_i\right),\\
=&\frac{1}{n}\sum_{i=1}^n\zm{\zm{y_i}\mtx{X}_i}-\zm{\yavg}\left(\frac{1}{n}\sum_{i=1}^n\X_i\right),\\
=&\frac{1}{n}\sum_{i=1}^n\zm{\zm{y_i}\mtx{X}_i}-\left(\frac{1}{n}\sum_{i=1}^n\zm{y_i}\right)\left(\frac{1}{n}\sum_{i=1}^n\X_i\right).
\end{align*}
Now using the triangular inequality for tensor spectral norm we conclude that
\begin{align*}
\tsn{\mtx{T}_n-\mtx{T}}\le \tsn{\frac{1}{n}\sum_{i=1}^n\zm{\zm{y_i}\mtx{X}_i}}+\tsn{\left(\frac{1}{n}\sum_{i=1}^n\zm{y_i}\right)\left(\frac{1}{n}\sum_{i=1}^n\X_i\right)}.
\end{align*}
We now state two lemmas to bound each of these terms. The proofs of these lemmas are defered to Sections \ref{thmdevp} and \ref{lemavgp}.
\begin{lemma}\label{thm dev} For $i=1,2,\ldots,n$ let $\vct{x}_i\in\R^p$ be i.i.d.~random Gaussian vectors distributed as $\mathcal{N}(\vct{0},\mtx{I}_p)$. Also let $\X_i\in\R^{\btd}$ be the tensorized version of $\vct{x}_i$ i.e.~$\vct{X}_i=\mathcal{T}(\vct{x}_i)$. Finally, assume $f:\R^p\mapsto \R$ is an $L$ Lipschitz function. Furthermore, assume $n\ge \left(\sum_{\ell=1}^D d_\ell\right)\log D$ and $D\ge 2$. Then
\[
\Pro\Bigg\{\tsn{\frac{1}{n}\sum_{i=1}^n \zm{\zm{f(\vct{x}_i)}\X_i}}\leq  \frac{c_1L}{\sqrt{n}}\left(\sqrt{\left(\sum_{\ell=1}^Dd_\ell\right)\log D}+t\right)\Bigg\}\leq e^{-\min\left(t^2,t\sqrt{n}\right)},
%|\sup_{\Tb\in\Ro}\li \sum_{i=1}^n\zm{\Y_i},\Tb\ri|\leq \order{{L\sum_i\sqrt{nd_i\log \Dp}}}.
\]
holds with $c_1>0$ a fixed numerical constant.
\end{lemma}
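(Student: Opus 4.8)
The plan is to recognize $\tsn{\cdot}$ as the supremum of a centered empirical process indexed by the rank-one tensors $\Ro$ and to control it by generic chaining. First I would linearize the inner product. Since the tensorization map $\Tc:\R^p\to\R^{\btd}$ is a reshaping of coordinates it is an $\ell_2$-isometry, so for any $\mtx V=\robt{\vb}\in\Ro$ we have $\li\mtx V,\X_i\ri=\li\mtx V,\Tc(\x_i)\ri=\li\vct{w}_{\mtx V},\x_i\ri$, where $\vct{w}_{\mtx V}:=\Tc^{-1}(\mtx V)\in\R^p$ is the vectorization of $\mtx V$ and satisfies $\twonorm{\vct{w}_{\mtx V}}=\tf{\mtx V}=\prod_{\ell=1}^D\twonorm{\vb_\ell}\le 1$. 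By linearity of centering and of the inner product,
\[
\li\mtx V,\zm{\zm{f(\x_i)}\X_i}\ri=\zm{\zm{f(\x_i)}\li\vct{w}_{\mtx V},\x_i\ri}=:X_i(\mtx V),
\]
so that, by definition of the tensor spectral norm, $\tsn{\frac1n\sum_i\zm{\zm{f(\x_i)}\X_i}}=\sup_{\mtx V\in\Ro}\frac1n\sum_{i=1}^nX_i(\mtx V)$. Each $X_i(\mtx V)$ is centered, so this is a mean-zero empirical process indexed by $\Ro$ with the Frobenius metric.

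Next I would show the increments are subexponential. Because $f$ is $L$-Lipschitz and $\x_i\sim\Nn(0,\Iden_p)$, Gaussian Lipschitz concentration gives $\tsub{\zm{f(\x_i)}}\le cL$, while $\li\vct{w}_{\mtx V}-\vct{w}_{\mtx V'},\x_i\ri$ is Gaussian with $\tsub{\cdot}\le c\twonorm{\vct{w}_{\mtx V}-\vct{w}_{\mtx V'}}=c\,\tf{\mtx V-\mtx V'}$. Since $X_i(\mtx V)-X_i(\mtx V')=\zm{\zm{f(\x_i)}\li\vct{w}_{\mtx V}-\vct{w}_{\mtx V'},\x_i\ri}$, Lemma \ref{exp zm mult} yields
\[
\te{X_i(\mtx V)-X_i(\mtx V')}\le 2\,\tsub{\zm{f(\x_i)}}\,\tsub{\li\vct{w}_{\mtx V}-\vct{w}_{\mtx V'},\x_i\ri}\le cL\,\tf{\mtx V-\mtx V'}.
\]
By Bernstein's inequality the normalized increment $\frac1n\sum_i\big(X_i(\mtx V)-X_i(\mtx V')\big)$ then has a mixed tail with subgaussian parameter $\order{L\,\tf{\mtx V-\mtx V'}/\sqrt n}$ and subexponential parameter $\order{L\,\tf{\mtx V-\mtx V'}/n}$.

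I would then invoke a generic-chaining tail bound for processes with such mixed increments (e.g.~\cite{dirksen2013tail}), which controls the supremum by $\frac{cL}{\sqrt n}\gamma_2(\Ro,\tf{\cdot})+\frac{cL}{n}\gamma_1(\Ro,\tf{\cdot})$ together with a deviation term that produces the $e^{-\min(t^2,t\sqrt n)}$ tail once the extra slack is $\order{Lt/\sqrt n}$ (the subgaussian branch $t^2$ for $t\le\sqrt n$ and the subexponential branch $t\sqrt n$ for $t>\sqrt n$, matching the diameter scaling $\tf{\cdot}\le 2$ on $\Ro$). To bound the two functionals I would cover $\Ro$ by taking $(\epsilon/D)$-nets of each factor ball $\{\twonorm{\vb_\ell}\le 1\}$ and using $\tf{\robt{\vb}-\robt{\vb'}}\le\sum_\ell\twonorm{\vb_\ell-\vb'_\ell}$, which gives $\log N(\Ro,\tf{\cdot},\epsilon)\le(\sum_\ell d_\ell)\log(3D/\epsilon)$. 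Dudley's integral (Lemma \ref{dudley lem}) then yields $\gamma_2(\Ro,\tf{\cdot})\lesssim\sqrt{(\sum_\ell d_\ell)\log D}$ and $\gamma_1(\Ro,\tf{\cdot})\lesssim(\sum_\ell d_\ell)\log D$. Finally, the hypothesis $n\ge(\sum_\ell d_\ell)\log D$ makes $\frac{(\sum_\ell d_\ell)\log D}{n}\le\sqrt{\frac{(\sum_\ell d_\ell)\log D}{n}}$, so the $\gamma_1$ term is dominated by the $\gamma_2$ term and both collapse into $\frac{cL}{\sqrt n}\sqrt{(\sum_\ell d_\ell)\log D}$, giving the claimed bound.

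The main obstacle is the chaining step: the increments are genuinely subexponential (centered products of subgaussians), so the pure-subgaussian Dudley bound is insufficient and I must use a mixed-tail generic-chaining estimate tracking both $\gamma_1$ and $\gamma_2$. Getting the tail exponent exactly in the form $\min(t^2,t\sqrt n)$ requires carefully matching the subgaussian/subexponential crossover to the diameter of $\Ro$, and the absorption of the $\gamma_1$ term is precisely where the sample-size assumption $n\gtrsim(\sum_\ell d_\ell)\log D$ is essential.
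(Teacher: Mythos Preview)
Your proposal is correct and follows essentially the same route as the paper: express the spectral norm as a supremum over $\Ro$, show the increments are centered subexponentials via Gaussian Lipschitz concentration and Lemma \ref{exp zm mult}, apply Bernstein to get a mixed tail, invoke Dirksen's mixed-tail chaining bound, estimate $\gamma_1,\gamma_2$ via Dudley and the product-of-nets covering of $\Ro$, and finally absorb the $\gamma_1$ term using $n\ge(\sum_\ell d_\ell)\log D$. The only cosmetic differences are that the paper works with the unnormalized sum and rescales at the end, and obtains the $\min(t^2,t\sqrt n)$ exponent by the explicit change of variable $t\mapsto L\sqrt{n}\,t$ in Dirksen's bound rather than by the crossover discussion you give.
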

\begin{lemma} \label{lem avg}Consider the setup of Lemma \ref{thm dev}. Then
\[
\Pro\Bigg\{\tsn{\left(\frac{1}{n}\sum_{i=1}^n\zm{f(\vct{x}_i)}\right)\left(\frac{1}{n}\sum_{i=1}^n\X_i\right)}\geq \frac{c_2t_1L}{n}\left(\sqrt{\left(\sum_{\ell=1}^Dd_\ell\right)\log D}+t_2\right)\Bigg\}\leq 2\left(e^{-t_1^2}+e^{-t_2^2}\right),
\]
holds with $c_2>0$ a fixed numerical constant.
\end{lemma}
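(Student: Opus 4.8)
The plan is to exploit the fact that the first factor $\frac{1}{n}\sum_{i=1}^n\zm{f(\vct{x}_i)}$ is a \emph{scalar}, so the tensor spectral norm of the product factorizes. Setting $\bar{\x}:=\frac{1}{n}\sum_{i=1}^n\vct{x}_i$ and using linearity of the tensorization map $\mathcal{T}$ together with the absolute homogeneity of $\tsn{\cdot}$, I would first write
\[
\tsn{\left(\frac{1}{n}\sum_{i=1}^n\zm{f(\vct{x}_i)}\right)\left(\frac{1}{n}\sum_{i=1}^n\X_i\right)}=\abs{\frac{1}{n}\sum_{i=1}^n\zm{f(\vct{x}_i)}}\cdot\tsn{\mathcal{T}(\bar{\x})}.
\]
It then suffices to bound the scalar deviation and the tensor norm separately and to combine them by a union bound (no independence between the two events is needed).

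For the scalar factor, since $f$ is $L$-Lipschitz and $\vct{x}_i\sim\mathcal{N}(\vct{0},\mtx{I}_p)$, Gaussian concentration for Lipschitz functions gives $\tsub{\zm{f(\vct{x}_i)}}\lesssim L$, so that $\frac{1}{n}\sum_{i=1}^n\zm{f(\vct{x}_i)}$ is an average of i.i.d.~mean-zero sub-Gaussian variables. A standard sub-Gaussian tail bound then yields
\[
\Pro\left\{\abs{\frac{1}{n}\sum_{i=1}^n\zm{f(\vct{x}_i)}}\geq \frac{cLt_1}{\sqrt{n}}\right\}\leq 2e^{-t_1^2}
\]
for an absolute constant $c>0$.

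For the tensor factor, I would observe that $\bar{\x}\sim\mathcal{N}(\vct{0},\frac{1}{n}\mtx{I}_p)$ has the same law as $\frac{1}{\sqrt{n}}\vct{g}$ with $\vct{g}\sim\mathcal{N}(\vct{0},\mtx{I}_p)$, so $\tsn{\mathcal{T}(\bar{\x})}$ is distributed as $\frac{1}{\sqrt{n}}\tsn{\mathcal{T}(\vct{g})}$. Since $\mathcal{T}$ preserves Euclidean inner products, $\tsn{\mathcal{T}(\vct{g})}=\sup_{\mtx{V}\in\Ro}\li\mtx{V},\mathcal{T}(\vct{g})\ri$ is the supremum of a Gaussian process indexed by the rank-one set $\Ro$, whose increments have standard deviation $\tf{\mtx{V}-\mtx{V}'}$. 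I would control its mean by generic chaining (Dudley's integral, Lemma \ref{dudley lem}): the telescoping estimate $\tf{\robt{\vb}-\robt{\vb'}}\leq\sum_{\ell=1}^D\tn{\vb_\ell-\vb_\ell'}$ for unit factors shows an $\epsilon/D$-net of each factor ball induces an $\epsilon$-net of $\Ro$, whence $\log N(\epsilon)\lesssim(\sum_{\ell=1}^Dd_\ell)\log(CD/\epsilon)$ and the entropy integral gives $\E[\tsn{\mathcal{T}(\vct{g})}]\lesssim\sqrt{(\sum_{\ell=1}^Dd_\ell)\log D}$. As $\vct{g}\mapsto\tsn{\mathcal{T}(\vct{g})}$ is $1$-Lipschitz (each $\mtx{V}\in\Ro$ obeys $\tf{\mtx{V}}\leq1$), Gaussian concentration then produces, after rescaling $t_2$ by a constant,
\[
\Pro\left\{\tsn{\mathcal{T}(\bar{\x})}\geq\frac{c'}{\sqrt{n}}\left(\sqrt{\left(\sum_{\ell=1}^Dd_\ell\right)\log D}+t_2\right)\right\}\leq e^{-t_2^2}.
\]

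Intersecting the two events by a union bound, with probability at least $1-2e^{-t_1^2}-e^{-t_2^2}\geq 1-2(e^{-t_1^2}+e^{-t_2^2})$ the product is at most $\frac{cc'Lt_1}{n}\big(\sqrt{(\sum_{\ell}d_\ell)\log D}+t_2\big)$, which is the claimed bound with $c_2=cc'$. The only genuinely technical ingredient is the chaining estimate for $\tsn{\mathcal{T}(\vct{g})}$, but this is precisely the rank-one tensor covering argument already assembled for Lemma \ref{thm dev}; here it is merely applied to the single Gaussian vector $\bar{\x}$ rather than to a sum of dependent products, so this lemma is the lighter of the two. The main point to get right is the bookkeeping of constants and of the two $1/\sqrt{n}$ scalings so that they multiply to the overall $1/n$ rate in the statement.
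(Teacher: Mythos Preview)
Your proposal is correct and follows the same decomposition as the paper: factor the scalar out of the spectral norm, bound the scalar by sub-Gaussian concentration of the average of $\zm{f(\vct{x}_i)}$, bound the tensor norm of the Gaussian tensor $\frac{1}{\sqrt{n}}\sum_i\X_i$, and combine by a union bound. The only difference is that for the tensor factor the paper invokes an existing spectral-norm bound for Gaussian tensors (Tomioka--Suzuki) directly, whereas you rederive the same estimate via the rank-one covering/Dudley argument plus Gaussian Lipschitz concentration; both yield the identical $\frac{1}{\sqrt{n}}\big(\sqrt{(\sum_\ell d_\ell)\log D}+t_2\big)$ rate, so the distinction is purely cosmetic.
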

Combining Lemma \ref{thm dev} with $f=\func{}$, $L=\prod_{\ell=1}^D \twonorm{\vct{k}^{(\ell)}}$, and $c_1=c/2$ together with Lemma \ref{lem avg} with $t_1=\sqrt{n}$, $t_2=t$, and $c_2=c/2$ concludes the proof of Theorem \ref{finite sample}. All that remains is to prove Lemmas \ref{thm dev} and \ref{lem avg} which are the subject of the next two sections.
\subsubsection{Proof of Lemma \ref{thm dev}}
\label{thmdevp}
It is more convenient to carryout the steps of the proof on $\sum_{i=1}^n \zm{\zm{f(\vct{x}_i)}\X_i}$ in liue of $\frac{1}{n}\sum_{i=1}^n \zm{\zm{f(\vct{x}_i)}\X_i}$. The lemma trivally follows by a scaling by a factor $1/n$. We first write the tensor spectral norm as a supremum
\begin{align}
\tsn{\sum_{i=1}^n \zm{\zm{f(\vct{x}_i)}\X_i}}=\sup_{\Tb\in \Ro}\text{ }\abs{\sum_{i=1}^n\li \zm{\zm{f(\vct{x}_i)}\X_i},\Tb\ri}.\label{spec write}
\end{align}
Let $\Y_i=\zm{f(\vct{x}_i)}\X_i$. Define the random process $g(\Tb)=\sum_{i=1}^n\li \zm{\Y_i},\Tb\ri$. We claim that $g(\Tb)$ has a mixture of subgaussian and subexponential increments (see Definition \ref{ornorm} for subgaussian and subexponential random variables). Pick two tensors $\Tb,\Hb\in\R^{\btd}$. Increments of $g$ satisfy the linear relation
\[
g(\Tb)-g(\Hb)=\sum_{i=1}^n\li \zm{\Y_i},\Tb-\Hb\ri.
\]% (the vector obtained by arranging the entries of the tensor into a vector has unit Euclidean norm)
By construction $\E[g(\Tb)-g(\Hb)]=0$. We next claim that $\Y_i$ is a sub-exponential vector. Consider a tensor $\Tb$ with unit length $\tf{\Tb}=1$ i.e.~the sum of squares of entries are equal to one. We have $\li\Y_i,\Tb\ri=\zm{f(\vct{x}_i)}\li\X_i,\Tb\ri$. $f(\X_i)$ is a Lipschitz function of a Gaussian random vector. Thus, by the concentration of Lipschitz functions of Gaussians we have
\begin{align}
\Pro(|\zm{f(\X_i)}|\geq t)\leq 2\exp(-\frac{t^2}{2L^2}).\label{sg tail}
\end{align}
This immediately implies that $\tsub{\zm{f(\X_i)}}\leq cL$ for a fixed numerical constant $c$. Also note that $\li\X_i,\Tb\ri\sim\Nn(0,1)$ hence $\tsub{\li\X_i,\Tb\ri}\leq c$. These two identities combined with Lemma \ref{exp zm mult} implies a bound on the sub-exponential norm
\[
\te{\zm{\li\Y_i,\Tb\ri}}\leq c L.
\]
%for some constant $c>0$. 
Next, we observe that $g(\Tb)-g(\Hb)$ is sum of $n$ i.i.d. sub-exponentials each obeying $\te{\li \zm{\Y_i},\Tb-\Hb\ri}\leq cL\tf{\Tb-\Hb}$. Applying a standard sub-exponential Bernstein inequality, we conclude that
\begin{align}
\label{temptb}
\Pro(|g(\Tb)-g(\Hb)|\geq t)\leq 2\text{exp}\left(-c\cdot\min\left(\frac{t}{L\tf{\Tb-\Hb}},\frac{t^2}{nL^2\tf{\Tb-\Hb}^2}\right)\right),
\end{align}
holds with $\gamma$ a fixed numerical constant. This tail bound implies that $g$ is a mixed tail process that is studied by Talagrand and others \cite{talagrand2014gaussian,dirksen2013tail}. In particular, supremum of such processes are characterized in terms of a linear combination of Talagrand's $\gamma_1$ and $\gamma_2$ functionals (see Definition \ref{gamma functional} as well as \cite{talagrand2014gaussian,talagrand2006generic} for an exposition).
%$\gamma_\alpha$ functional of a set $S$ with respect to distance metric $d$ is given by,
%\[
%\gamma_\alpha(S,d)=
%\]
We pick the following distance metrics on tensors induced by the Frobenius norm: $d_1(\Tb,\Hb)=L\tf{\Hb-\Tb}/c$ and $d_2(\Tb,\Hb)=\tf{\Hb-\Tb}L\sqrt{n/c}$. We can thus rewrite \eqref{temptb} in the form
\[
\Pro\big\{|g(\Tb)-g(\Hb)|\geq t\big\}\leq 2\exp\left(-\min\left(\frac{t}{d_1(\Tb,\Hb)},\frac{t^2}{d_2^2(\Tb,\Hb)}\right)\right),
\]
which implies $\Pro\big\{|g(\Tb)-g(\Hb)|\geq \sqrt{t}d_2(\Tb,\Hb)+td_1(\Tb,\Hb)\big\}\leq 2\exp(-t)$. Observe that the radius of $\Ro$ with respect to $\tf{\cdot}$ norm is $1$ hence radius with respect to $d_1,d_2$ metrics are $L/c,L\sqrt{n/c}$ respectively. Applying Theorem $3.5$ of Dirksen \cite{dirksen2013tail}, we obtain
\[
\Pro\Bigg\{\sup_{\Tb\in\Ro}|g(\Tb)|\geq C\left(\gamma_2(\Ro, d_2)+\gamma_1(\Ro, d_1)+L\sqrt{un/c}+uL/c\right)\Bigg\}\leq e^{-u}.
\]
Observe that we can use the change of variable $t=L\cdot\max\left(\sqrt{un},u\right)$ to obtain
\begin{align}
\Pro\Bigg\{\sup_{\Tb\in\Ro}|g(\Tb)|\geq C\left(\gamma_2(\Ro, d_2)+\gamma_1(\Ro, d_1)+t\right)\Bigg\}\leq \exp\left(-\min\left(\frac{t^2}{L^2n},\frac{t}{L}\right)\right),\label{tail bound}
\end{align}
with some updated constant $C>0$. To conclude, we need to bound the $\gamma_2$ and $\gamma_1$ terms. To achieve this we will upper bound the $\gamma_\alpha$ functional in terms of Dudley's entropy integral which is stated in Lemma \ref{dudley lem}. First, let us find the $\eps$ covering number of $\Ro$. Pick $0<\delta\leq 1$ coverings $\Cc_\ell$ of the unit $\el_2$ balls $\Bc^{d_\ell}$. These covers have size at most $(1+2/\delta)^{d_\ell}$. Consider the set of rank $1$ tensors $\Cc=\Cc_1\bt\dots\bt\Cc_D$ with size $(1+2/\delta)^{\sum_{\ell=1}^Dd_\ell}$. For any $\robt{\vb}\in\Ro$, we can pick $\robt{\ub}\in\Cc$ satisfying $\tn{\vb_\ell-\ub_\ell}\leq\delta$ for all $1\leq \ell\leq D$. This implies
\begin{align}
\tf{\ten-\uten}&\leq \sum_{\ell=1}^D \tf{(\vb_1,\dots, \vb_\ell,\ub_{\ell+1},\dots,\ub_D)-(\vb_1,\dots, \vb_{\ell-1},\ub_{\ell},\dots,\ub_D)}\\
&=\sum_{\ell=1}^D \tn{\vb_\ell}\dots\tn{\vb_{\ell-1}}\tn{\vb_\ell-\ub_\ell}\tn{\ub_{\ell+1}}\dots\tn{\ub_D}\leq D\delta.
\end{align}
Denoting Frobenius norm covering number of $\Ro$ by $N(\eps)$, this implies that, for $0<\eps\leq 1$,
\[
N(\eps)\leq (1+2D/\eps)^{\sum_{\ell=1}^Dd_\ell}.
\]
Clearly, $N(\eps)=1$ for $\eps\geq 1$ by picking the cover $\{0\}$. Consequently,
\begin{align}
\gamma_1(\Ro,\tf{\cdot})\leq \int_{0}^1 \log N(\eps)d\eps\leq c\left(\sum_{\ell=1}^Dd_\ell\right)\log D,~\gamma_2(\Ro,\tf{\cdot})\leq \int_{0}^1 \sqrt{\log N(\eps)}d\eps\leq c\sqrt{\left(\sum_{\ell=1}^Dd_\ell\right)\log D}.\label{tf gamma}
\end{align}
Thus the metrics $d_1,d_2$ metrics are $\tf{\cdot}$ norm scaled by a constant. Hence, their $\gamma_\alpha$ functions are scaled versions of \eqref{tf gamma} given by
\[
\gamma_1(\Ro,d_1)\leq cL\left(\sum_{\ell=1}^Dd_\ell\right)\log D,~\gamma_2(\Ro,d_2)\leq cL\sqrt{n\left(\sum_{\ell=1}^Dd_\ell\right)\log D}.
\]
Now, observe that if $n\geq \left(\sum_{\ell=1}^Dd_\ell\right)\log D$, we have $\gamma_1(\Ro,d_1)\leq cL\sqrt{n\left(\sum_{\ell=1}^Dd_\ell\right)\log D}$. Substituting these in \eqref{tail bound} and using $n\geq \left(\sum_{\ell=1}^Dd_\ell\right)\log D$ we find
\[
\Pro\Bigg\{\sup_{\Tb\in\Ro}\abs{g(\Tb)}\geq C\left(L\sqrt{n\left(\sum_{\ell=1}^Dd_\ell\right)\log D}+t\right)\Bigg\}\leq e^{-\min\left(\frac{t^2}{L^2n},\frac{t}{L}\right)}.
\]
Substituting $t\rightarrow L\sqrt{n}t$ and recalling \eqref{spec write}, concludes the proof.

\subsubsection{Proof of Lemma \ref{lem avg}}
\label{lemavgp}
We first rewrite,
\begin{align}
\tsn{\left(\frac{1}{n}\sum_{i=1}^n\zm{f(\vct{x}_i)}\right)\left(\frac{1}{n}\sum_{i=1}^n\X_i\right)}= \abs{\frac{1}{n}\sum_{i=1}^n\zm{f(\vct{x}_i)}}\cdot\tsn{\frac{1}{n}\sum_{i=1}^n\X_i}.\label{rewrite}
\end{align}
As discussed in \eqref{sg tail}, $\tsub{\zm{f(\vct{x}_i)}}\leq cL$ for $c$ a fixed numerical constant. Since $\vct{x}_i$'s are i.i.d the empirical sum $\fb_{\text{avg}}=\frac{1}{n}\sum_{i=1}^n\zm{f(\vct{x}_i)}$ obeys the bound $\tsub{\fb_{\text{avg}}}\leq cL/\sqrt{n}$ as well. Hence, 
\begin{align}
\label{tt1}
\Pro\Big\{\abs{\fb_{\text{avg}}}\geq c'\frac{t_1L}{\sqrt{n}}\Big\}\leq 2e^{-t_1^2}. 
\end{align}
Also note that $\frac{1}{\sqrt{n}}\sum_{i=1}^n\X_i$ is a tensor with standard normal entries, applying \cite[Theorem 1]{tomioka2014spectral} we conclude that
\begin{align}
\label{tt2}
\tsn{\frac{1}{\sqrt{n}}\sum_{i=1}^n\X_i}\leq c''\left(\sqrt{\left(\sum_{\ell=1}^Dd_\ell\right) \log D}+t_2\right)\quad\Rightarrow\quad \tsn{\frac{1}{n}\sum_{i=1}^n\X_i}\leq c''\frac{\sqrt{\left(\sum_{\ell=1}^Dd_\ell\right) \log D}+t_2}{\sqrt{n}},
\end{align}
holds with probability $1-2e^{-t_2^2}$. Combining \eqref{tt1} and \eqref{tt2} via the union bound together with \eqref{rewrite} concludes the proof.

\subsection{Rank one approximation of the population tensor (Proof of Theorem \ref{thm pop})}
We begin the proof of this theorem by a few definitions regarding non-overlapping CNN models that simplify our exposition. For these definitions it is convenient to view non-overlapping CNNs as a tree with the root of the tree corresponding to the output of the CNN and the leaves corresponding to input features. In this visualization $D-\ell$th layer of the tree corresponds to the $\ell$th layer. Figure \ref{NCNNtree} depicts such a tree visualization along with the definitions discussed below.
\begin{figure}
\centering
\begin{tikzpicture}
\node at (-2.5,0) {\includegraphics[scale=0.5]{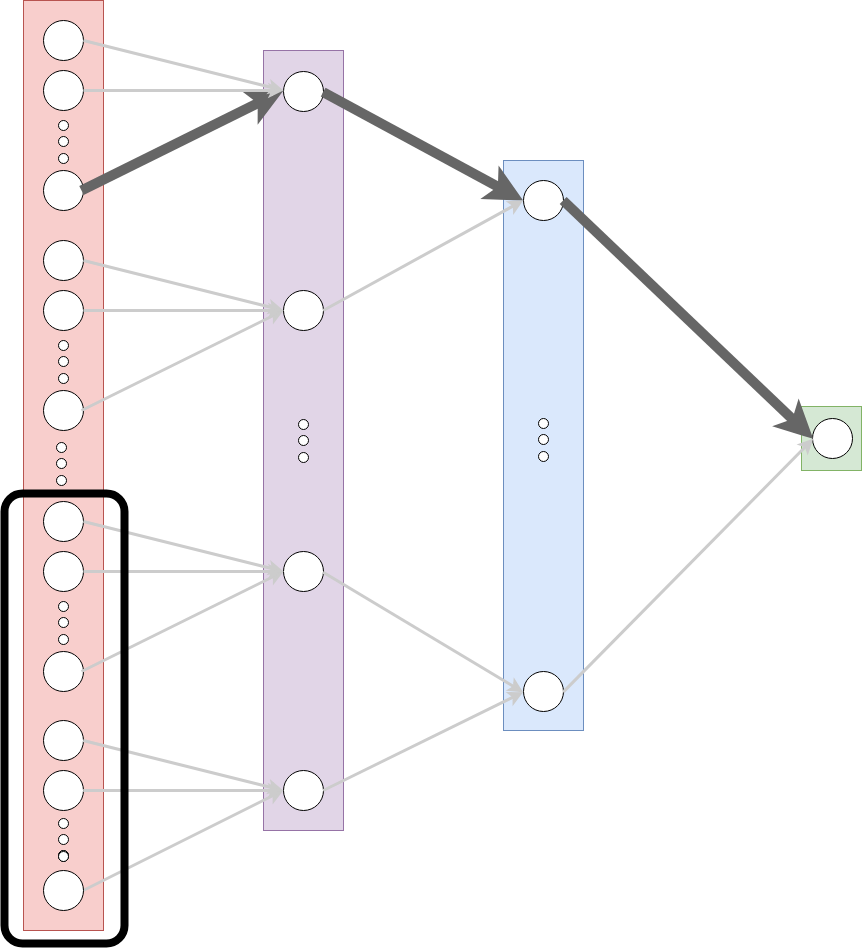}};
%\node[black] at (0.6,1.25) {$\vct{k}^{(1)}\in\R^{d_1}$};
%\node[black] at (-4.75,0.85) {$\vct{k}^{(2)}\in\R^{d_2}$};
%\node[black] at (-8.95,0.6) {$\vct{k}^{(3)}\in\R^{d_3}$};
%\node[black] at (2,-1.1) {$\vct{K}^{(1)}\in\R^{p_1\times p}$};
%\node[black] at (-4.65,-0.8) {$\vct{K}^{(2)}\in\R^{p_2\times p_1}$};
%\node[black] at (-8.95,-0.65) {\scriptsize{$\vct{K}^{(3)}\in\R^{1\times p_2}$}};
%\node[black] at (-2.35,-0.09) {\huge{$\phi_1$}};
%\node[black] at (-7.05,-0.09) {\huge{$\phi_2$}};
%\node[black] at (5.4,0.25){\small{$\vct{x}\in\R^p$}};
%\node[black] at (5.4,-0.29) {\small{$\vct{h}^{(0)}$}}; 
%\node[black] at (-1.525,0.25) {\small{$\bar{\vct{h}}^{(1)}$}};
%\node[black] at (-3.1,0.25) {\small{$\vct{h}^{(1)}$}};
%%\node[black] at (-1.525,-0.35) {\small{$\vct{x}_{\vct{w}}$}}; 
%\node[black] at (-6.215,0.25) {\small{$\bar{\vct{h}}^{(2)}$}};
%\node[black] at (-7.76,0.25) {\small{$\vct{h}^{(2)}$}};
%\node[black] at (-10.35,0.25) {\small{$\func{\vct{x}}~$}};
%\node[black] at (-10.45,-0.35) {\small{$\vct{h}^{(D)}~~$}};
\node[black] at (-8.9,9) {$\vct{x}=\vct{h}^{(0)}\in\R^{p}:=\R^{p_0}$};
\node[black] at (-4.7,8) {$\vct{h}^{(1)}\in\R^{p_1}$};
\node[black] at (-6.7,8) {$\vct{k}^{(1)}\in\R^{d_1}$};
\node[black] at (-6.5,3.7) {$\vct{k}^{(1)}$};
\node[black] at (-6.5,-0.9) {$\vct{k}^{(1)}$};
\node[black] at (-6.5,-4.8) {$\vct{k}^{(1)}$};
\node[black] at (-0.4,6) {$\vct{h}^{(2)}\in\R^{p_2}$};
\node[black] at (-2.5,6.5) {$\vct{k}^{(2)}\in\R^{d_2}$};
\node[black] at (-2,-2.7) {$\vct{k}^{(2)}$};
\node[black] at (5.5,1.8) {$y=\func{\vct{x}}=\vct{h}^{(D)}=\vct{h}^{(3)}$};
\node[black] at (2.5,3.5) {$\vct{k}^{(3)}\in\R^{d_3}$};
\node[black] at (-6.8,-8) {\Large{$\text{set}_2(p_2)$}};
%\node[black] at (-10.35,-1.25) {\small{$=\bar{\vct{h}}^{(3)}$}};
\end{tikzpicture}
\caption{Tree visualization of a non-overlapping CNN model. The path in bold corresponds to the path vector $\ib=(d_1,1,1,1)$ from Definition \ref{pathvect}. For this path the kernel path gain is equal $k_{\ib}=\vct{k}_{d_1}^{(1)}\vct{k}_{1}^{(2)}\vct{k}_{1}^{(3)}$ and the activation path gain is equal to $\sigmap_\ib(\x)=\phi_1'(\bar{h}_{1}^{(1)})\phi_2'(\bar{h}_{1}^{(2)})\phi_3'(\bar{h}_{1}^{(3)})$. The set $\text{set}_2(p_2)$ (offsprings of the last hidden node in layer two) is outlined.}
\label{NCNNtree}
\end{figure}
\vspace{-0.2cm}
\begin{definition}[Path vector]\label{pathvect} A vector $\ib\in\R^{D+1}$ is a path vector if its zeroth coordinate satisfies $1\leq \ib_0\leq p$ and for all $D-1\geq j\geq 0$, $\ib_{j+1}$ obeys $\ib_{j+1}=\left\lceil{\ib_{j}}/{d_j}\right\rceil$. This implies $1\leq \ib_j\leq p_j$ and $\ib_D=1$. We note that in the tree visualization a path vector corresponds to a path connecting a leaf (input feature) to the root of the tree (output). We use $\Ic$ to denote the set of all path vectors and note that $|\Ic|=p$. We also define $\ib(i)\in\Ic$ be the vector whose zeroth entry is $\ib_0=i$. Stated differently $\ib(i)$ is the path connecting the input feature $i$ to the output. Given a path $\ib$ and a $p$ dimensional vector $\vb$, we define $\vb_{\ib}:=\vb_{\ib_0}$. A sample path vector is depicted in bold in Figure \ref{NCNNtree} which corresponds to $\ib=(d_1,1,1,1)$.
%Let $\hib$ be the vector obtained by $\hib_\el=\modu(\ib_\el,d_\el)$ for $1\leq \el\leq D$. 
\end{definition}
\vspace{-0.2cm}
\begin{definition}[Kernel and activation path gains]\label{g path} Consider a CNN model of the form \eqref{CNNform} with input $\vct{x}$ and inputs of hidden units given by $\{\xbr^{(\el)}\}_{\el=1}^D$. To any path vector $\vct{i}$ connecting an input feature to the output we associate two path gains: a kernel path gain denoted by $\lan_{\ib}$ and activation path gain denoted by $\sigmap_\ib(\x)$ defined as
\[
\lan_{\ib}=\prod_{\el=1}^{D} \lay{\el}_{\modu(\ib_{\el-1},d_\el)}\quad\text{and}\quad\sigmap_\ib(\x)=\prod_{\el=1}^{D}\sigmap_\el(\xbr^{(\el)}_{\ib_\el}),
\]
where $\modu(a,b)$ denotes the remainder of dividing integer $a$ by $b$. In words the kernel path gain is multiplication of the kernel weights along the path and the activation path gain is the multiplication of the derivatives of the activations evaluated at the hidden units along the path. For the path depicted in Figure \ref{NCNNtree} in bold the kernel path gain is equal $k_{\ib}=\vct{k}_{d_1}^{(1)}\vct{k}_{1}^{(2)}\vct{k}_{1}^{(3)}$ and the activation path gain is equal to $\sigmap_\ib(\x)=\phi_1'(\bar{h}_{1}^{(1)})\phi_2'(\bar{h}_{1}^{(2)})\phi_3'(\bar{h}_{1}^{(3)})$.
\end{definition}
\begin{definition}[CNN offsprings] Consider a CNN model of the form \eqref{CNNform} with input $\vct{x}$ and inputs of hidden units given by $\{\xbr^{(\el)}\}_{\el=1}^D$. We will associate a set $\text{set}_\el(i)\subset \{1,\dots,p\}$ to the $i$th hidden unit of layer $\ell$ defined as $\text{set}_\el(i)=\{(i-1)r_\el+1,(i-1)r_\el+2,\dots,ir_\el\}$ where $r_\el=p/p_\el$. By construction, this corresponds to the set of entries of the input data $\x$ that $\xbr^{(\el)}_i(\x)$ is dependent on. In the tree analogy $\text{set}_\el(i)$ are the leaves of the tree connected to hidden unit $i$ in the $\ell$th layer i.e.~the set of offsprings of this hidden node. We depict $\text{set}_2(p_2)$ which are the offsprings of the last hidden node in layer two in Figure \ref{NCNNtree}.
\end{definition}
%Denote $\funcp{\x}=\frac{\pa \func{\x}}{\pa \x}\in\R^p$ and its $\ib_0$th entry by $\funcp{\x,\ib}$.
We now will rewrite the population tensor in a form such that it is easier to see why it can be well approximated by a rank one tensor. Note that since the tensorization operation is linear the population tensor is equal to
\begin{align}
\label{teniden1}
\mtx{T}=\E[\func{\vct{x}}\mtx{X}]=\E[\func{\vct{x}}\mathcal{T}(\vct{x})]=\mathcal{T}\left(\E[\func{\vct{x}}\vct{x}]\right).
\end{align}
 Define the vector $\fcp$ to be the population gradient vector i.e.~$\fcp=\E[\nabla \func{\x}]$ and note that Stein's lemma combined with \eqref{teniden1} implies that
\begin{align}
\label{stgrad}
\mtx{T}=\mathcal{T}\left(\E[\func{\vct{x}}\vct{x}]\right)=\mathcal{T}\left(\E[\nabla\func{\vct{x}}]\right)=\mathcal{T}\left(\fcp\right).
\end{align}
Also note that 
\[
%\funcp{\x,\ib}
\frac{\pa\func{\x}}{\pa \x_{i}}=\lan_{\ib(i)}\sigmap_{\ib(i)}(\x).\label{ith entry}
\]
 Thus we have
\begin{align}
\fcp_i=\E\bigg[\frac{\pa\func{\x}}{\pa \x_{i}}\bigg]=\lan_{\ib(i)}\E[\sigmap_{\ib(i)}(\x)].\label{gradent}
\end{align}
Define a vector $\vct{k}\in\R^p$ such that $\vct{k}_i=\lan_{\ib(i)}$. Since $\lan_{\ib(i)}$ consists of the product of the kernel values across the path $\ib(i)$ it is easy to see that the tensor $\mtx{K}:=\mathcal{T}(\vct{k})$ is equal to
\begin{align}
\label{kouter}
\mtx{K}=\bt_{\el=1}^D\lay{\el}.
\end{align}
Similarly define the vector $\vct{v}\in\R^p$ such that $\vct{v}_i=E[\sigmap_{\ib(i)}(\x)]$ and define the corresponding tensor $\mtx{V}=\mathcal{T}(\vct{v})$. Therefore, \eqref{gradent} can be rewritten in the vector form $\fcp=\vct{k}\odot \vct{v}$ where $\vct{a}\odot\vct{b}$ denotes entry-wise (Hadamard) product between two vectors/matrices/tensors $\vct{a}$ and $\vct{b}$ of the same size. Thus using \eqref{stgrad} the population tensor can alternatively be written as
\begin{align*}
\mtx{T}=\mathcal{T}\left(\fcp\right)=\mtx{K}\odot \mtx{V}=\left(\bt_{\el=1}^D\lay{\el}\right)\odot\mtx{V}.
\end{align*}
Therefore, the population tensor $\mtx{T}$ is the outer product of the convolutional kernels whose entries are masked with another tensor $\mtx{V}$. If the entries of the tensor $\mtx{V}$ were all the same the population tensor would be exactly rank one with the factors revealing the convolutional kernel. One natural choice for approximating the population tensor with a rank one matrix is to replace the masking tensor $\mtx{V}$ with a scalar. That is, use the approximation $\mtx{T}\approx c \bt_{\el=1}^D\lay{\el}$. Recall that $\lcnn:=\cgain\bt_{\el=1}^D\lay{\el}$ is exactly such an approximation with $c$ set to $\cgain$ given by 
\[
\cgain=\prod_{\el=1}^{D}\E_{\x\sim\Nn(0,\Iden)}[\sigmap_\el(\xbr^{(\el)}_{\ib_\el})].
\]%\footnote{This claim is straightforward to show for non-overlapping CNN by applying a standard induction argument over layers.}
To characterize the quality of this approximation note that
\begin{align*}
\tsn{\mtx{T}-\lcnn}\overset{(a)}{\le}& \fronorm{\mtx{T}-\lcnn},\\
\overset{(b)}{=}&\fronorm{\mtx{K}\odot\left(\mtx{V}-\cgain\right)},\\
\overset{(c)}{=}&\twonorm{\vct{k}\odot\left(\vct{v}-\cgain\right)},\\
\overset{(d)}{\le}& \twonorm{\vct{k}}\infnorm{\vct{v}-\cgain},\\
\overset{(e)}{=}&\lipp\infnorm{\vct{v}-\cgain}.
\end{align*}
Here, $(a)$ follows from the fact for a tensor, its spectral norm is smaller than its Frobenius norm, (b) from the definitions of $\mtx{T}$ and $\lcnn$, (c) from the fact that $\mtx{K}=\mathcal{T}(\vct{k})$ and $\mtx{V}=\mathcal{T}(\vct{v})$, (d) from the fact that $\twonorm{\vct{a}\odot \vct{v}}\le \twonorm{\vct{a}}\infnorm{\vct{b}}$, and (e) from the fact that the Euclidean norm of the kronecker product of of vectors is equal to the product of the Euclidean norm of the indivisual vectors. As a result of the latter inequality to prove Theorem \ref{thm pop} it suffices to show that 
\begin{align}
\label{impinter}
\infnorm{\vct{v}-\cgain}\le \sqrt{8\pi}\mu\smo\cdot\underset{\ell}{\sup} \prod_{i=1}^\ell \twonorm{\vct{k}^{(i)}}\cdot \frac{D}{\sqrt{\underset{ \ell }{\min}\text{ }d_\ell}}.
\end{align}
In the next lemma we prove a stronger statement.
\begin{lemma}\label{infnormbnd} Assume the activations are $S$-smooth. Also consider a vector $\vct{v}\in\R^p$ with entries $\vct{v}_i=E_{\x\sim\Nn(0,\Iden)}[\sigmap_{\ib(i)}(\x)]$ and $\cgain=\prod_{\el=1}^{D}\E_{\x\sim\Nn(0,\Iden)}[\sigmap_\el(\xbr^{(\el)}_{\ib_\el})]$ we have
\begin{align*}
\abs{\vct{v}_i-\cgain}\le \kappa_{i}:=\sqrt{8\pi}S\sum_{\el=1}^D\abs{\lay{\el}_{\modu(\ib_{\el-1},d_\el)}}\prod_{i=1}^{\el-1}\tn{\lay{i}}.
\end{align*}
Here, $\vct{i}$ is the vector path that starts at input feature $i$.
\end{lemma}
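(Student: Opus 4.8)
The plan is to observe that $\vct{v}_i$ and $\cgain$ differ only through the order in which expectation and product are taken: writing $Y_\el:=\sigmap_\el(\xbr^{(\el)}_{\ib_\el})$ for the activation derivatives along the path $\ib=\ib(i)$, we have $\vct{v}_i=\E\big[\prod_{\el=1}^D Y_\el\big]$ whereas $\cgain=\prod_{\el=1}^D\E[Y_\el]$. Since the activations are $1$-Lipschitz we have $|Y_\el|\le 1$ and $|\E Y_\el|\le 1$, so the two quantities agree up to the failure of the product to decouple. The structural fact that drives everything is that the path pre-activations form a Markov chain: decomposing $\xbr^{(\el)}_{\ib_\el}=a_\el\,\h^{(\el-1)}_{\ib_{\el-1}}+R_\el$, with $a_\el:=\lay{\el}_{\modu(\ib_{\el-1},d_\el)}$ the on-path kernel weight and $R_\el$ the contribution of the off-path children of node $\ib_\el$, the non-overlapping structure guarantees that $R_\el$ depends only on input coordinates outside $\text{set}_{\el-1}(\ib_{\el-1})$, hence that $R_\el$ is independent of $(Y_1,\dots,Y_{\el-1})$.

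First I would expand the difference by a telescoping (hybrid) identity, peeling off one factor at a time from the output end:
\[
\vct{v}_i-\cgain=\sum_{j=1}^D\Big(\prod_{\el>j}\E[Y_\el]\Big)\,\mathrm{Cov}\Big(Y_j,\prod_{\el<j}Y_\el\Big).
\]
Using $\big|\prod_{\el>j}\E[Y_\el]\big|\le 1$ reduces the claim to the per-layer covariance bound $\big|\mathrm{Cov}(Y_j,\prod_{\el<j}Y_\el)\big|\le\sqrt{8\pi}\,S\,|a_j|\prod_{i=1}^{j-1}\tn{\lay{i}}$; summing over $j$ then yields $\kappa_i$ (the $j=1$ term even vanishes, so the bound is slightly loose).

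To bound each covariance I would condition on the lower path. Since $R_j$ is independent of $(Y_1,\dots,Y_{j-1})$ and $\prod_{\el<j}Y_\el$ is measurable with respect to them, the tower property gives $\E[Y_j\mid\text{lower path}]=\Psi(\h^{(j-1)}_{\ib_{j-1}})$ where $\Psi(h):=\E_{R_j}[\sigmap_j(a_j h+R_j)]$, so that $\mathrm{Cov}(Y_j,\prod_{\el<j}Y_\el)=\mathrm{Cov}\big(\Psi(\h^{(j-1)}_{\ib_{j-1}}),\prod_{\el<j}Y_\el\big)$. This is exactly where smoothness enters: $\Psi'(h)=a_j\E_{R_j}[\phi_j''(a_jh+R_j)]$ together with $|\phi_j''|\le S$ shows that $\Psi$ is $|a_j|S$-Lipschitz. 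Composing with the $1$-Lipschitz activation $\phi_{j-1}$ and noting that $\x\mapsto\xbr^{(j-1)}_{\ib_{j-1}}$ is $\prod_{i=1}^{j-1}\tn{\lay{i}}$-Lipschitz (each kernel matrix has operator norm $\tn{\lay{i}}$ and each activation is $1$-Lipschitz), the scalar $\Psi(\h^{(j-1)}_{\ib_{j-1}})$ is a $\big(|a_j|S\prod_{i=1}^{j-1}\tn{\lay{i}}\big)$-Lipschitz function of the Gaussian input $\x$.

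Finally I would close with a covariance-versus-deviation estimate. Writing $G:=\prod_{\el<j}Y_\el$ with $\|G\|_\infty\le 1$, we have $|\mathrm{Cov}(\Psi,G)|\le\|G-\E G\|_\infty\,\E|\Psi-\E\Psi|\le 2\,\E|\Psi-\E\Psi|$, and Gaussian Lipschitz concentration $\Pro(|f-\E f|\ge t)\le 2e^{-t^2/2}$ for a $1$-Lipschitz $f$ gives the mean-absolute-deviation bound $\E|f-\E f|\le\int_0^\infty 2e^{-t^2/2}\,dt=\sqrt{2\pi}$; scaling by the Lipschitz constant above yields $\E|\Psi-\E\Psi|\le\sqrt{2\pi}\,|a_j|S\prod_{i=1}^{j-1}\tn{\lay{i}}$, and hence $|\mathrm{Cov}|\le 2\sqrt{2\pi}\,|a_j|S\prod_{i=1}^{j-1}\tn{\lay{i}}=\sqrt{8\pi}\,S\,|a_j|\prod_{i=1}^{j-1}\tn{\lay{i}}$, exactly the claimed constant. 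The hard part, and the step requiring the most care, is the structural Markov/conditional-independence claim: verifying through the disjointness of offspring sets that $R_\el$ is independent of the lower path and, consequently, that $\E[Y_j\mid\text{lower path}]$ collapses to a function of the single scalar $\h^{(j-1)}_{\ib_{j-1}}$, which is precisely what converts the covariance into one against a controllably-Lipschitz statistic.
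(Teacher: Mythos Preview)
Your proof is correct and follows essentially the same approach as the paper: the same telescoping identity $\vct{v}_i-\cgain=\sum_j\big(\prod_{\el>j}\E[Y_\el]\big)\mathrm{Cov}\big(Y_j,\prod_{\el<j}Y_\el\big)$, the same decomposition $\xbr^{(j)}_{\ib_j}=a_j\,\hh^{(j-1)}_{\ib_{j-1}}+R_j$ with $R_j$ independent of the lower path, and the same Gaussian--Lipschitz mean-deviation bound producing the $\sqrt{8\pi}$ constant. The only cosmetic difference is that you condition out $R_j$ to collapse $Y_j$ to the $|a_j|S$-Lipschitz statistic $\Psi(\hh^{(j-1)}_{\ib_{j-1}})$, whereas the paper packages the identical independence structure through an auxiliary lemma bounding $|\E[f(X+Y)Z]-\E[f(X+Y)]\E[Z]|\le 2S\,\E|\zm{Y}|$ (with $X=R_j$, $Y=a_j\hh^{(j-1)}_{\ib_{j-1}}$, $Z=\prod_{\el<j}Y_\el$, $f=\sigmap_j$).
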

Before proving this lemma let us explain how \eqref{impinter} follows from this lemma. To show this we use the kernel diffuseness assumption introduced in Definition \ref{assume diff}. This definition implies that $\abs{\lay{\el}_{\modu(\ib_{\el-1},d_\el)}}\le\infnorm{\vct{k}^{(\ell)}}\le \frac{\mu}{\sqrt{d_\ell}}\twonorm{\vct{k}^{(\ell)}}$. Thus we have
\begin{align*}
\kappa_i:=&\sqrt{8\pi}S\sum_{\el=1}^D|\lay{\el}_{\modu(\ib_{\el-1},d_\el)}|\prod_{i=1}^{\el-1}\tn{\lay{i}},\\
\le&\sqrt{8\pi}\mu S\sum_{\el=1}^D\frac{1}{\sqrt{d_\ell}}\twonorm{\vct{k}^{(\ell)}}\prod_{i=1}^{\el-1}\tn{\lay{i}},\\
=&\sqrt{8\pi}\mu S\sum_{\el=1}^D\frac{1}{\sqrt{d_\ell}}\prod_{i=1}^{\el}\tn{\lay{i}},\\
\le&\sqrt{8\pi}\mu D S\cdot \underset{\ell}{\sup} \frac{\prod_{i=1}^{\el}\tn{\lay{i}}}{\sqrt{d_\ell}},\\
\le&\sqrt{8\pi}\mu\smo\cdot\underset{\ell}{\sup} \prod_{i=1}^\ell \twonorm{\vct{k}^{(i)}}\cdot \frac{D}{\sqrt{\underset{ \ell }{\min}\text{ }d_\ell}}.
\end{align*}
This completes the proof of Theorem \ref{thm pop}. All that remains is to prove Lemma \ref{infnormbnd} which is the subject of the next section.
\subsubsection{Proof of Lemma \ref{infnormbnd}}
To bound the difference between $\vct{v}_i$ and $\cgain$ consider the path $\ib=\ib(i)$ and define the variables $\{a_i\}_{i=0}^D$ as
\[
a_i=\E_{\x\sim\Nn(0,\Iden)}[\prod_{\el=1}^{i}\sigmap_\el(\xbr^{(\el)}_{\ib_\el})]\prod_{\el=i+1}^{D}\E_{\x\sim\Nn(0,\Iden)}[\sigmap_\el(\xbr^{(\el)}_{\ib_\el})].
\]
Note that $a_D=\vct{v}_i$ and $a_0=\cgain$. To bound the difference $\vct{v}_i-\cgain=a_D-a_0$ we use a telescopic sum
\begin{align}
|a_D-a_0|\leq \sum_{\el=0}^{D-1} |a_{\el+1}-a_\el|.\label{all upper}
\end{align}
We thus focus on bounding each of the summands $|a_\el-a_{\el-1}|$. Setting $\gamma_\el=\prod_{i=\el}^{D}\E_{\x\sim\Nn(0,\Iden)}[\sigmap_i(\xbr^{(i)}_{\ib_i})]$, this can be written as
\[
a_\el-a_{\el-1}=\E_{\x\sim\Nn(0,\Iden)}[(\sigmap_\el(\xbr^{(\el)}_{\ib_\el})-\E[\sigmap_\el(\xbr^{(\el)}_{\ib_\el})])\prod_{i=1}^{\el-1}\sigmap_\el(\xbr^{(\el)}_{\ib_\el})]\gamma_{\el+1}.
\]
Using $|\gamma_\el|\leq 1$ (which follows from the assumption that the activations are $1$-Lipschitz), it suffices to bound 
\begin{align}
\label{funkyinter}
\E_{\x\sim\Nn(0,\Iden)}[(\sigmap_\el(\xbr^{(\el)}_{\ib_\el})-\E[\sigmap_\el(\xbr^{(\el)}_{\ib_\el})])\theta_{\el-1}]\quad\text{where}\quad\theta_{\el}=\prod_{i=1}^{\el}\sigmap_i(\xbr^{(i)}_{\ib_i}). 
\end{align}
To  this aim we state two useful lemmas whose proofs are deferred to Sections \ref{xyzpf} and \ref{det funcpf}.
\begin{lemma} \label{xyz}Let $X,Y,Z$ be random variables where $X$ is independent of $Z$. Let $f$ be an $L$-Lipschitz function. Then
\begin{align}
|\E[f(X+Y)Z]-\E[f(X+Y)]\E[Z]|\leq  L\E[|\zm{Y}|(|Z|+|\E [Z]|)].\label{der bound}
\end{align}
Furthermore if $|Z|\leq1$, then 
\begin{align}
|\E[f(X+Y)Z]-\E[f(X+Y)]\E[Z]|\leq 2L\E[|\zm{Y}|].\label{der bound2}
\end{align}
\end{lemma}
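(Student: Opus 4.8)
The plan is to exploit the independence of $X$ and $Z$ by comparing $f(X+Y)$ against the reference value $f(X+\E[Y])$, which is a function of $X$ alone and is therefore independent of $Z$. First I would set $\mu=\E[Y]$ and observe that since $f(X+\mu)$ depends only on $X$, independence delivers the exact identity $\E[f(X+\mu)Z]=\E[f(X+\mu)]\E[Z]$. This is the single place where the independence hypothesis enters, and spotting this decoupling is the crux of the whole argument.

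Next I would add and subtract $f(X+\mu)$ inside the two expectations on the left-hand side. Writing $g:=f(X+Y)-f(X+\mu)$, the reference identity cancels every term involving $f(X+\mu)$ exactly, collapsing the left-hand side to
\[
\E[f(X+Y)Z]-\E[f(X+Y)]\E[Z]=\E[gZ]-\E[g]\E[Z].
\]
From here the triangle inequality gives $|\E[gZ]-\E[g]\E[Z]|\le \E[|g||Z|]+\E[|g|]\,|\E[Z]|=\E[|g|(|Z|+|\E[Z]|)]$, where I absorb the constant $|\E[Z]|$ back inside the expectation in the last step.

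The final step is to control $|g|$ via the Lipschitz bound: $|g|=|f(X+Y)-f(X+\mu)|\le L|Y-\mu|=L|\zm{Y}|$. Substituting this pointwise bound yields the first claimed inequality \eqref{der bound}. For the refinement under the assumption $|Z|\le 1$, I would note that then $|\E[Z]|\le\E[|Z|]\le 1$, so $|Z|+|\E[Z]|\le 2$ holds pointwise, and the factor $2L\E[|\zm{Y}|]$ in \eqref{der bound2} follows immediately.

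I expect no serious obstacle once the decoupling idea is in hand; the one subtlety worth flagging is that the correct reference point is $f$ evaluated at the \emph{mean} $\mu=\E[Y]$ (rather than, say, dropping $Y$ entirely or using some other anchor), since precisely this choice makes the deviation $|Y-\mu|=|\zm{Y}|$ appear after applying the Lipschitz estimate — which is exactly the quantity the downstream application in \eqref{funkyinter} needs.
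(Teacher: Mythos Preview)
Your proposal is correct and follows essentially the same route as the paper: compare $f(X+Y)$ to the reference $f(X+\E[Y])$, use independence of $X$ and $Z$ to cancel the reference terms exactly, and then bound the residual $g=f(X+Y)-f(X+\E[Y])$ via the Lipschitz estimate $|g|\le L|\zm{Y}|$. The only cosmetic difference is that you package the algebra via the single symbol $g$, whereas the paper expands the add-and-subtract line by line; the resulting identity $\E[gZ]-\E[g]\E[Z]$ and the subsequent bounds are identical.
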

\begin{lemma} \label{det func}$\xbr^{(\el)}_i(\x)$ (and $\hh^{(\el)}_i(\x)$) is a deterministic function of the entries of $\x$ indexed by $\text{set}_\el(i)$. In other words, there exists a function $f$ such that $\xbr^{(\el)}_i(\x)=f(\x_{\text{set}_\el(i)})$.
\end{lemma}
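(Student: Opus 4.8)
The plan is to prove the statement by induction on the layer index $\el$, exploiting the recursive structure of the non-overlapping convolution together with the nested definition of the offspring sets $\text{set}_\el(i)$. The base case is $\el=0$: here $\hh^{(0)}=\x$ and $r_0=p/p_0=1$, so that $\text{set}_0(i)=\{i\}$, and indeed $\hh^{(0)}_i=\x_i$ is a function of $\x_{\text{set}_0(i)}$ alone. This anchors the induction at the input layer.

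For the inductive step I would assume that, for every index $j$, the entry $\hh^{(\el-1)}_j$ is a deterministic function of $\x_{\text{set}_{\el-1}(j)}$ only. By the definition of the non-overlapping convolution, $\xbr^{(\el)}_i=\langle\lay{\el},\hh^{(\el-1)}[i]\rangle$, so $\xbr^{(\el)}_i$ depends on $\hh^{(\el-1)}$ only through the patch of indices $j\in\{(i-1)d_\el+1,\dots,id_\el\}$. Composing with the inductive hypothesis, $\xbr^{(\el)}_i$ is therefore a function of the entries of $\x$ indexed by $\bigcup_{j=(i-1)d_\el+1}^{id_\el}\text{set}_{\el-1}(j)$, which reduces the claim to identifying this union.

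The crux is to verify that this union is exactly $\text{set}_\el(i)$. Using $r_\el=d_\el\,r_{\el-1}$ (which follows from $p_\el=p_{\el-1}/d_\el$), the offspring sets $\text{set}_{\el-1}(j)=\{(j-1)r_{\el-1}+1,\dots,jr_{\el-1}\}$ are consecutive length-$r_{\el-1}$ blocks; taking their union over the contiguous range of $d_\el$ indices produces the single block $\{(i-1)r_\el+1,\dots,ir_\el\}=\text{set}_\el(i)$. Finally, since $\hh^{(\el)}_i=\phi_\el(\xbr^{(\el)}_i)$ applies a scalar map entry-wise, $\hh^{(\el)}_i$ inherits the same dependence, which closes the induction for both $\xbr^{(\el)}_i$ and $\hh^{(\el)}_i$. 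The argument is entirely structural, so I anticipate no genuine obstacle beyond carefully aligning the convolution patch indices with the recursion $r_\el=d_\el\,r_{\el-1}$ that defines the offspring sets.
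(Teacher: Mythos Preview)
Your proposal is correct and follows essentially the same inductive argument as the paper: both proceed by induction on the layer index, use the non-overlapping convolution to express $\xbr^{(\el)}_i$ in terms of the patch $\{\hh^{(\el-1)}_{(i-1)d_\el+j}\}_{j=1}^{d_\el}$, and invoke the identity $\text{set}_\el(i)=\bigcup_{j=1}^{d_\el}\text{set}_{\el-1}((i-1)d_\el+j)$ to close the induction. The only cosmetic difference is that you anchor the base case at $\el=0$ (with $\hh^{(0)}=\x$) whereas the paper starts at $\el=1$; your explicit verification of $r_\el=d_\el\,r_{\el-1}$ to justify the union identity is slightly more detailed than the paper, which simply asserts it.
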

With these lemmas in-hand we return to bounding \eqref{funkyinter}. To this aim we decompose $\xbr^{(\el)}_{\ib_\el}$ as follows
\[
\xbr^{(\el)}_{\ib_\el}=\sum_{i=1}^{d_{\el}}\lay{\el}_i\hh^{(\el-1)}_{d_{\el}(\ib_\el-1)+i}=\lay{\el}_{\modu(\ib_{\el-1},d_\el)}\hh^{(\el-1)}_{\ib_{\el-1}}+\rest,
\]
where the $\rest$ term is the contribution of the entries of $\hh^{(\el-1)}$ other than $\ib_{\el-1}$. By the non-overlapping assumption, $\rest$ is independent of $\theta_{\el-1}$ as well as $\hh^{(\el-1)}_{\ib_{\el-1}}$ (see Lemma \ref{det func}). In particular, $\hh^{(\el-1)}_{\ib_{\el-1}}$ and $\theta_{\el-1}$ is a function of $\x_{\text{set}_{\el-1}(\ib_{\el-1})}$ whereas $\rest$ is a function of the entries over the complement $\text{set}_{\el}(\ib_\el)-\text{set}_{\el-1}(\ib_{\el-1})$. With these observations, applying Lemma \ref{xyz} with $f=\sigmap_\el,X=\rest, Y=\lay{\el}_{\modu(\ib_{\el-1},d_\el)}\hh^{(\el-1)}_{\ib_{\el-1}},Z=\theta_{\el-1}$ and using the fact that $|\theta_{\el-1}|\leq 1$ which holds due to $1$-Lipschitzness of $\sigma_\el$'s, we conclude that
\[
|\E_{\x\sim\Nn(0,\Iden)}[(\sigmap_\el(\xbr^{(\el)}_{\ib_\el})-\E[\sigmap_\el(\xbr^{(\el)}_{\ib_\el})])\theta_{\el-1}]|\leq 2\smo\E|\zm{Y}|.
\]
Here, $\smo$ is the smoothness of $\sigma_\el$ and Lipschitz constant of $\sigmap_\el$. To conclude, we need to assess the $\E|\zm{Y}|$ term. Now note that starting from $\x$, each entry of $\hh^{(\el-1)}$ is obtained by applying a sequence of inner products with $\{\lay{i}\}_{i=1}^{\el-1}$ and activations $\sigma_\el(\cdot)$, which implies $\hh^{(\el-1)}_{\ib_{\el-1}}$ is a $\prod_{i=1}^{\el-1}\tn{\lay{i}}$-Lipschitz function of $\text{set}_{\el-1}(\ib_{\el-1})$. This implies $Y$ is a Lipschitz function of a Gaussian vector with Lipschitz constant $L_Y=|\lay{\el}_{\modu(\ib_{\el-1},d_\el)}|\prod_{i=1}^{\el-1}\tn{\lay{i}}$. Hence, $\zm{Y}$ obeys the tail bound
\[
\Pro(|\zm{Y}|\geq t)\leq 2\exp(-\frac{t^2}{2L_Y^2}).
\]
Using a standard integration by parts argument the latter implies that
\[
\E|\zm{Y}|\leq \sqrt{2\pi} L_Y.
\]
Thus, 
\begin{align*}
\abs{a_\el-a_{\el-1}}\le|\lay{\el}_{\modu(\ib_{\el-1},d_\el)}|\prod_{i=1}^{\el-1}\tn{\lay{i}}.
\end{align*}
concluding the upper-bound on each summand of \eqref{all upper}. Combining such upper bounds \eqref{all upper} implies
\[
|a_D-a_0|\leq \sqrt{8\pi}S\sum_{\el=1}^D|\lay{\el}_{\modu(\ib_{\el-1},d_\el)}|\prod_{i=1}^{\el-1}\tn{\lay{i}}:=\kappa_{\ib}.
\]
This concludes the proof of Lemma \ref{infnormbnd}.
\paragraph{Proof of Lemma \ref{xyz}}
\label{xyzpf}
Using the independence of $X,Z$, we can write 
\begin{align*}%-\E[\sigmap(X+Y)]Z
\E[f(X+Y)Z]&=\E[f(X+\E[Y])Z]+\E[(f(X+Y)-f(X+\E[Y]))Z]\\
&=\E[f(X+\E[Y])]\E[Z]+\E[(f(X+Y)-f(X+\E[Y]))Z]\\
&=\E[f(X+Y)]\E[Z]+\E[f(X+\E[Y])-f(X+Y)]\E[Z]+\E[(f(X+Y)-f(X+\E[Y]))Z].
\end{align*}
This implies 
\[
\E[f(X+Y)Z]-\E[f(X+Y)]\E[Z]=\E[f(X+\E[Y])-f(X+Y)]\E[Z]+\E[(f(X+Y)-f(X+\E[Y]))Z].
\]
Now, using Lipschitzness of $f$, we deterministically have that $|f(X+\E[Y])-f(X+Y)|\leq L|Y-\E[Y]|=L|\zm{Y}|$. Similarly, $|(f(X+Y)-f(X+\E[Y]))Z|\leq L|\zm{Y}Z|$. Taking absolute values we arrive at
\[
|\E[f(X+\E[Y])-f(X+Y)]\E[Z]+\E[(f(X+Y)-f(X+\E[Y]))Z]|\leq  L\E[|\zm{Y}|(|Z|+|\E [Z]|)].
\]
This immediately implies \eqref{der bound}. If $|Z|\leq 1$ almost surely, we have $\E[|\zm{Y}Z|]\leq\E[|\zm{Y}|]$ and $|\E [Z]|\leq 1$ which yields the $2L\E[|\zm{Y}|]$ upper bound.
\paragraph{Proof of Lemma \ref{det func}}
\label{det funcpf}
Informally, this lemma is obvious via the tree visualization. To formally prove this lemma we use  an induction argument. For $\xbr^{(1)}$ the result is trivial because $\xbr^{(1)}_i=\li\lay{1},\x(i)\ri$ which is a weighted sum of entries corresponding to $\text{set}_1(i)$. Suppose the claim holds for all layers less than or equal to $\el-1$ and $\xbr^{(\el-1)}_i=f_{\el-1}(\x_{\text{set}_{\el-1}(i)})$. For layer $\el$, we can use the fact that $\text{set}_{\el}(i)=\bigcup_{j=1}^{d_\el}\text{set}_{\el-1}((i-1)d_\el+j)$ to conclude that
\begin{align*}
\xbr^{(\el)}_i&=\sum_{j=1}^{d_\el} \lay{\el}_j\sigma_{\el-1}(\xbr^{(\el-1)}_{(i-1)d_\el+j})\\
&=\sum_{j=1}^{d_\el} \lay{\el}_j \sigma_{\el-1}(f_{\el-1}(\x_{\text{set}_{\el-1}((i-1)d_\el+j)})):=f_{\el}(\x_{\text{set}_{\el}(i)}).
\end{align*}
The latter is clearly a deterministic function of $\x_{\text{set}_{\el}(i)}$. Also it is independent of entry $i$ because it simply chunks the vector $\x_{\text{set}_{\el}(i)}$ into $d_\el$ sub-vectors and returns a sum of weighted functions of these sub-vectors. Here, the weights are the entries of $\lay{\el}$ and the functions are given by $\sigma_{\el-1}(f_{\el-1}(\cdot))$ (also note that the activation output is simply $\hh^{(\el)}_i=\sigma_\el(f_{\el}(\x_{\text{set}_{\el}(i)}))$).
\subsection{Proofs for learning the convolutional kernels (Proof of Theorem \ref{main thm})}
The first part of the theorem follows trivially by combining Theorems \ref{finite sample} and \ref{thm pop}. To translate a bound on the tensor spectral norm of $\mtx{T}_n-\lcnn$ to a bound on learning the kernels, requires a perturbation argument for tensor decompositions. This is the subject of the next lemma.
\begin{lemma} \label{nntd robust}Let $\mtx{L}=\gamma \robt{\vb}$ be a rank one tensor with $\{\vb_i\}_{i=1}^D$ vectors of unit norm. Also assume $\Eb$ is a perturbation tensor obeying $\tsn{\Eb}\leq \delta$. Set
\begin{align}
\label{deeptdalgtemp}
\vct{u}_1,\vct{u}_2,\ldots,\vct{u}_D=\underset{\bar{\vct{u}}_1,\bar{\vct{u}}_2,\ldots,\bar{\vct{u}}_D}{\arg\max} \li\mtx{L},\robt{\bar{\vct{u}}}\ri\quad\text{subject to}\quad \twonorm{\bar{\vct{u}}_1}=\twonorm{\bar{\vct{u}}_2}=\ldots=\twonorm{\bar{\vct{u}}_D}=1.
\end{align}
Then we have
\[
\prod_{i=1}^D|\ub_i^*\vb_i|\geq 1-2\delta/\gamma.
\]
\end{lemma}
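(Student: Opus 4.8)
The plan is a short sandwiching argument that rests on two elementary facts about rank-one tensors: the inner product of two rank-one tensors factors as $\li \robt{\vb}, \robt{\ub}\ri = \prod_{\ell=1}^D \li \vb_\ell, \ub_\ell\ri$, and, by the variational definition of the tensor spectral norm, every unit-norm rank-one tensor $\mtx{V}\in\Ro$ obeys $|\li \mtx{V}, \Eb\ri|\le \tsn{\Eb}\le\delta$. In the intended application the maximization \eqref{deeptdalgtemp} is carried out against the observed (perturbed) tensor $\mtx{M}:=\mtx{L}+\Eb$ (this is the role $\mtx{T}_n=\lcnn+(\mtx{T}_n-\lcnn)$ plays later), so $\ub_1,\dots,\ub_D$ are the unit factors maximizing $\li\mtx{M},\robt{\bar{\ub}}\ri$. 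The idea is to bound the optimal value $\li\mtx{M},\robt{\ub}\ri$ both from below and from above, thereby pinning down $\prod_{i=1}^D \li \vb_i,\ub_i\ri$.

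First I would lower-bound the optimum by evaluating the objective at the feasible ground-truth point $\robt{\vb}$. Since each $\vb_\ell$ has unit norm, $\robt{\vb}\in\Ro$ and $\li \gamma\robt{\vb},\robt{\vb}\ri=\gamma\prod_{\ell=1}^D\twonorm{\vb_\ell}^2=\gamma$, whence $\li\mtx{M},\robt{\vb}\ri=\gamma+\li\Eb,\robt{\vb}\ri\ge\gamma-\delta$. By optimality of $\{\ub_\ell\}$ this gives $\li\mtx{M},\robt{\ub}\ri\ge\li\mtx{M},\robt{\vb}\ri\ge\gamma-\delta$.

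Next I would upper-bound the same quantity by splitting off the perturbation. Using the multiplicativity of the rank-one inner product, $\li\mtx{L},\robt{\ub}\ri=\gamma\prod_{i=1}^D\li\vb_i,\ub_i\ri$, while $\li\Eb,\robt{\ub}\ri\le\tsn{\Eb}\le\delta$ because $\robt{\ub}\in\Ro$. Hence $\li\mtx{M},\robt{\ub}\ri\le\gamma\prod_{i=1}^D\li\vb_i,\ub_i\ri+\delta$. Combining the two bounds yields $\gamma\prod_{i=1}^D\li\vb_i,\ub_i\ri+\delta\ge\gamma-\delta$, i.e. $\prod_{i=1}^D\li\vb_i,\ub_i\ri\ge 1-2\delta/\gamma$, and since $\prod_i\li\vb_i,\ub_i\ri\le\prod_i|\li\vb_i,\ub_i\ri|=\prod_i|\ub_i^*\vb_i|$ the stated conclusion follows.

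I do not anticipate any real obstacle: the argument uses only linearity, the factorization of rank-one inner products, and the definition of $\tsn{\cdot}$, with no concentration or chaining required. The only points needing mild care are the sign bookkeeping in the final line (the target $\prod_i|\ub_i^*\vb_i|$ dominates the signed product, so the inequality points the right way) and the observation that the bound is informative only when $\gamma>2\delta$, which is precisely the regime secured by Theorems \ref{finite sample} and \ref{thm pop} when this lemma is invoked in the proof of Theorem \ref{main thm}.
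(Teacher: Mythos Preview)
Your proposal is correct and follows essentially the same sandwiching argument as the paper's proof: both evaluate the perturbed objective at $\robt{\vb}$ for the lower bound $\gamma-\delta$, and both split $\li\mtx{L}+\Eb,\robt{\ub}\ri$ into the rank-one piece and the perturbation for the upper bound. The only cosmetic difference is that the paper passes to absolute values immediately via $\li\mtx{L}+\Eb,\robt{\ub}\ri\le|\li\mtx{L},\robt{\ub}\ri|+|\li\Eb,\robt{\ub}\ri|$, landing directly on $\gamma\prod_i|\li\ub_i,\vb_i\ri|+\delta$, whereas you keep the signed product and take absolute values at the end; both routes are valid and yield the same inequality. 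You are also right that the displayed optimization in the lemma statement should be over $\mtx{L}+\Eb$ rather than $\mtx{L}$---the paper's own proof uses exactly that reading.
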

The proof of Theorem \ref{main thm} is complete by applying Lemma \ref{nntd robust} above with $\vct{v}_\ell=\vct{k}^{(\ell)}$, $\vct{u}_\ell=\hat{\vct{k}}^{(\ell)}$, $\gamma=\cgain$ and $\mtx{E}=\mtx{T}_n-\lcnn$. All that remains is to prove Lemma \ref{nntd robust} which is the subject of the next section.

\subsubsection{Proof of Lemma \ref{nntd robust}}
To prove this lemma first note that for any two rank one tensors we have 
\[
\li\robt{\ub},\robt{\vb}\ri=\prod_{i=1}^D \li\ub_i,\vb_i\ri.
\]
Using this equality together with the fact that the vectors $\{\vct{u}_\ell\}_{\ell=1}^D$ are a maximizer for \eqref{deeptdalgtemp} we conclude that
\begin{align}
\label{innnnlem1}
\li\mtx{L}+\Eb,\robt{\vb}\ri&\leq \li\mtx{L}+\Eb,\robt{\ub}\ri,\nonumber\\
&\leq \abs{\li\mtx{L},\robt{\ub}\ri}+\abs{\li\Eb,\robt{\ub}\ri},\nonumber\\
&\leq \gamma\prod_{i=1}^D |\li\ub_i,\vb_i\ri|+\delta.
\end{align}
Furthermore, note that
\begin{align}
\label{innnnlem2}
\li\mtx{L}+\Eb,\robt{\vb}\ri=\gamma+\li\Eb,\robt{\vb}\ri\geq \gamma-\delta.
\end{align}
Combining \eqref{innnnlem1} and \eqref{innnnlem2} we conclude that
\[
\gamma\prod_{i=1}^D |\li\ub_i,\vb_i\ri|+\delta\geq \gamma-\delta,
\]
concluding the proof.
\subsection{Generalization bounds for CERM}\label{sec cerm}
In this section we prove a generalization bound for Centered Empirical Risk Minimization (CERM) \eqref{cerm}. The following theorem shows that using a finite sample size $n$, CERM is guaranteed to choose a function close to population's minimizer. For the sake of this section $\|f\|_{L_\infty}$ will be the Lipschitz constant of a function.
\begin{theorem} \label{cerm thm}Let $\Fc$ be a class of functions $f:\R^p\rightarrow \R$. Suppose $\sup_{f\in \Fc}\|f\|_{L_\infty}\leq R$. Let $\{(\x_i,y_i)\}_{i=1}^n\sim(\x,y)$ be i.i.d. data points where $\x\sim\Nn(0,\Iden_p)$ and $y$ is so that $y-\E[y]$ is $K$ subgaussian. Suppose $\Fc$ has $\|\cdot\|_{L_\infty}$, $\delta$-covering bound obeying $\log N_\delta\leq s\log \frac{C}{\delta}$ for some constants $s\geq 1,C\geq R$. Given $\eps\leq\KB= K+R$, suppose $n\geq \order{\max\{\eps^{-1},\eps^{-2}\}\KB^2s\log \frac{C'p\KB}{\eps}} $ for some $C'>0$. Then the CERM output \eqref{cerm} obeys
\begin{align}
\E[\left(\hat{f}(\x)-y-\E[(\hat{f}(\x)-y)]\right)^2]\leq \min_{f\in \Fc}\E[\left(f(\x)-y-\E[(f(\x)-y)]\right)^2]+\eps.\label{cerm eq}
\end{align}
with probability $1-\exp(-\order{n})-4n\exp(-p)$.
\end{theorem}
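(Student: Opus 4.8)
The plan is to run a uniform-convergence / oracle-inequality argument, tailored to the fact that the \emph{centered} risk is shift-invariant in $f$, which is exactly what makes the Lipschitz-seminorm covering number the right complexity. First I would reformulate the two objectives in \eqref{cerm}. Writing $r_f=y-f(\x)$, the population and empirical centered risks are the variances $\Lc(f)=\Var(r_f)$ and $\hat\Lc(f)=\frac1n\sum_i(r_{f,i}-\bar{r}_f)^2$. Expanding the square using $r_{f,i}-\bar{r}_f=(y_i-\bar y)-(f(\x_i)-\bar f)$ gives $\hat\Lc(f)=\widehat\Var(y)-2\widehat{\mathrm{Cov}}(y,f)+\widehat\Var(f)$ and likewise $\Lc(f)=\Var(y)-2\mathrm{Cov}(y,f)+\Var(f)$. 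Since the $\Var(y)$ and $\widehat\Var(y)$ terms do not depend on $f$, minimizing $\hat\Lc$ is the same as minimizing $\hat Q(f):=\widehat\Var(f)-2\widehat{\mathrm{Cov}}(y,f)$, and the excess risk is $\Lc(\hat f)-\Lc(f^\star)=Q(\hat f)-Q(f^\star)$ with $f^\star=\arg\min_{\Fc}Q$. A one-line oracle inequality using optimality of $\hat f$ then reduces the claim to controlling the empirical process $Z(f):=\hat Q(f)-Q(f)$ uniformly over $\Fc$, together with a single pointwise Bernstein estimate at $f^\star$.

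The heart of the argument is a chaining bound on $\sup_{f\in\Fc}|Z(f)|$, carried out exactly as in the proof of Lemma \ref{thm dev}. The key point is that the increments linearize: dropping the $O(1/n)$ empirical-centering corrections, $Z(f)-Z(g)$ is an average of i.i.d.\ terms $\zeta_i=w_i\,\xi_i$ with $w_i=(f-g)(\x_i)-\E[f-g]$ and $\xi_i=(f(\x_i)-\E f)+(g(\x_i)-\E g)-2(y_i-\E y)$. Because $f-g$ is $\|f-g\|_{L_\infty}$-Lipschitz and $\x_i$ is Gaussian, $\tsub{w_i}\lesssim\|f-g\|_{L_\infty}$, while $\tsub{\xi_i}\lesssim R+K=\KB$; by Lemma \ref{exp zm mult} their centered product obeys $\te{\zm{\zeta_i}}\lesssim\KB\|f-g\|_{L_\infty}$, so $Z(f)-Z(g)$ has the mixed sub-gaussian/sub-exponential tail
\[
\Pro\big(|Z(f)-Z(g)|\ge t\big)\le 2\exp\Big(-c\,n\min\Big(\tfrac{t^2}{\KB^2\|f-g\|_{L_\infty}^2},\tfrac{t}{\KB\|f-g\|_{L_\infty}}\Big)\Big).
\]
This is precisely the hypothesis of Dirksen's generic-chaining theorem (Theorem $3.5$ of \cite{dirksen2013tail}) with metrics $d_2(f,g)=\KB\|f-g\|_{L_\infty}/\sqrt{cn}$ and $d_1(f,g)=\KB\|f-g\|_{L_\infty}/(cn)$, so $\sup_f|Z(f)|$ is governed by $\gamma_2(\Fc,d_2)+\gamma_1(\Fc,d_1)$ plus radius terms. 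Feeding the covering bound $\log N_\delta\le s\log(C/\delta)$ into Dudley's entropy integral (Lemma \ref{dudley lem}) bounds these by multiples of $\KB\sqrt{s}/\sqrt n$ and $\KB s/n$ (up to the logarithmic/scale factors), and the change of variables $t\mapsto\KB\max(\sqrt{nu},u)$ converts the tail into the $\max(\eps^{-1},\eps^{-2})$ scaling of the statement; the restriction $\eps\le\KB$ keeps us in the regime where the $\gamma_2$ (sub-gaussian) term dominates, producing the stated $n\gtrsim\max(\eps^{-1},\eps^{-2})\KB^2 s\log(\cdot)$.

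Two technical points absorb most of the care. First, the loss is \emph{quadratic} in $f$, so to justify the increment computation and to dispose of the discarded empirical-mean terms $\bar f,\bar y$, I would condition on the good event $\mathcal G=\{\|\x_i\|\lesssim\sqrt p\ \text{for all }i\le n\}$, which holds with probability at least $1-4n\,e^{-p}$ by Gaussian norm concentration and contributes exactly the $4n\exp(-p)$ in the failure bound. On $\mathcal G$, using shift-invariance to anchor $f(0)=0$, every $f(\x_i)$ is bounded by $\lesssim R\sqrt p$, so the quadratic terms are genuinely Lipschitz with a controlled constant and the $\tfrac1n$ centering corrections are negligible. Second, this boundedness forces the discretization scale to be polynomially small, $\delta\asymp\eps/\mathrm{poly}(p,\KB)$, which is what manufactures the $\log\frac{C'p\KB}{\eps}$ factor. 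Assembling the uniform bound with the pointwise Bernstein estimate at $f^\star$ through the oracle inequality yields \eqref{cerm eq} with failure probability $\exp(-\Omega(n))+4n\,e^{-p}$.

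I expect the main obstacle to be precisely the interplay between the non-Lipschitz (quadratic) dependence on $f$ and the need for a clean sub-exponential increment bound. The linearization $\zeta_i=w_i\xi_i$ is what rescues the chaining — it turns the quadratic functional into a product of two sub-gaussians whose increment width is \emph{linear} in $\|f-g\|_{L_\infty}$ — but it must be combined carefully with the boundedness afforded by $\mathcal G$ so that the leftover empirical-mean terms and the tails of $\xi_i$ do not corrupt the two metrics fed into Dirksen's theorem. Getting the bookkeeping of the $\KB$-dependence to land on $\KB^2$ (rather than a looser power coming from the diameter of $\Fc$) is the delicate part of the final assembly.
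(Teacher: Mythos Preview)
Your approach is correct in outline and would prove the theorem, but it is a genuinely different and more elaborate route than the paper's. The paper does \emph{not} chain. It fixes a single-scale $\delta$-cover $\Fc_\delta$, writes $E(f)=\hat\Lc(f)-\Lc(f)=T_1+T_2$ with $T_1=\frac1n\sum_i\zm{\zm{y_i-f(\x_i)}^2}$ and $T_2=-\frac{1}{n^2}\big(\sum_i\zm{y_i-f(\x_i)}\big)^2$, applies a pointwise sub-exponential Bernstein bound to $T_1$ and a sub-exponential tail to $T_2$ at each cover element, and then simply \emph{union-bounds} over the $\exp\big(s\log(C/\delta)\big)$ elements; the sample size assumption is chosen precisely so that the exponent absorbs $\log|\Fc_\delta|$ and leaves a residual $\exp(-\Omega(n))$. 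The passage from the cover to all of $\Fc$ is then a deterministic perturbation on the good event $\{\tn{\x_i}\lesssim\sqrt{p},\ |\zm{y_i}|\lesssim K\sqrt{p}\ \forall i\}$, which gives $|E(f)-E(f_\delta)|\lesssim \KB\delta p$ and forces $\delta\asymp\eps/(p\KB)$, producing the $\log\frac{C'p\KB}{\eps}$ factor. The oracle inequality is applied only at the very end.

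Your chaining argument linearizes the increment $Z(f)-Z(g)$ into a product of two sub-gaussians and invokes Dirksen's mixed-tail theorem; this is the machinery used in Lemma~\ref{thm dev}, but it is overkill here because the metric entropy $\log N_\delta\le s\log(C/\delta)$ is already of ``parametric'' type, so one scale plus a union bound already matches what chaining would give. Indeed, you still need the good event $\mathcal G$ and a finest discretization scale $\delta\asymp\eps/\mathrm{poly}(p,\KB)$ to dispose of the non-i.i.d.\ empirical-mean corrections---at which point the chaining buys nothing over the paper's direct cover-plus-union-bound. Both approaches land on the same sample complexity and failure probability; the paper's is shorter and avoids the bookkeeping you flag as the main obstacle.
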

\begin{proof} Consider the centered empirical loss that can alternatively be written in the form 
\begin{align}
E(f)&=\frac{1}{n}\sum_{i=1}^n \zm{y_i-f(\x_i)}^2-\frac{1}{n^2}(\sum_{i=1}^n\zm{ y_i-f(\x_i)})^2-\E[\zm{{f}(\x)-y}^2].\label{erm loss}\nonumber\\
&=\frac{1}{n}\sum_{i=1}^n \zm{\zm{y_i-f(\x_i)}^2}-\frac{1}{n^2}(\sum_{i=1}^n\zm{ y_i-f(\x_i)})^2\nonumber\\
&:=T_1+T_2
\end{align} 
To prove the theorem, we will simply bound the supremum $\sup_{f\in\Fc}|E(f)|\leq \sup_{f\in \Fc}|T_1+T_2|$. Pick a $\delta$ covering $\Fc_\delta$ of $\Fc$ with size $s\log \frac{C}{\delta}$ where $\delta$ will be determined later in this proof. We first bound $E(f)$ for all $f\in \Fc_\delta$. Given a fixed $f$, observe that $\te{\zm{\zm{y_i-f(\x_i)}^2}}\leq \order{\KB}^2=\order{K+R}^2$ which follows from $\tsub{\zm{y_i-f(\x_i)}}\leq \order{\KB}=\order{K+R}$. Applying subexponential concentration, since $T_1$ is sum of i.i.d.~subexponentials, we have
\begin{align}
\Pro(|T_1|\geq  \eps)\leq \exp(-\order{n\min\{\eps^2/\KB^2,\eps/\KB\}}).\label{t1 bound}
\end{align}
Next, since $\tsub{\zm{ y_i-f(\x_i)}}\leq \order{\KB}$, we can conclude that $\tsub{\frac{1}{n}\sum_{i=1}^n\zm{ y_i-f(\x_i)}}\leq \order{\KB}/\sqrt{n}\implies \te{T_2}\leq \order{\KB}^2/n$. Using \eqref{t1 bound} for $T_1$ and the subexponential tail bound for $T_2$ holds when $\eps\leq \KB$, and assuming the number of samples $n$ obeys $n\geq \order{\max\{\eps^{-1},\eps^{-2}\}\KB^2s\log \frac{C}{\delta}}$, then for all cover elements
\[
|T_1|+|T_2|\leq 2\eps.
\]
holds with probability at least $1-\exp(-\order{n})$. To conclude the proof, we need to move from the cover $\Fc_\delta$ to $\Fc$. Pick $f\in\Fc$ and its $\delta$ neighbor $f_\delta\in\Fc_\delta$. Utilizing the deterministic relation $|\zm{X}|\leq |X|+|\E[X]|$ and using the fact that $f_\delta$ is in a $\delta$ neighborhood of $f$, we arrive at the following bounds
%We have that 
%\[
%|\zm{f(\x)-y}^2-\zm{f_\delta(\x)-y}^2|\leq |\zm{f(\x)-f_\delta(\x)}\zm{f(\x)+f_\delta(\x)-2y}|\leq 4\eps_0 R\tn{\x}^2.
%\]
\begin{align}
&|\zm{f(\x)-f_\delta(\x)}|\leq  \delta (\tn{\x}+\E[\tn{\x}]).\label{rel delta}\\
&|\zm{f(\x)+f_\delta(\x)-2y}|\leq 2R(\tn{\x}+\E[\tn{\x}])+2K|\zm{y}|.
\end{align}
Next observe that, with probability at least $1-4n\exp(-p)$, all $\x_i,y_i$ obey $\tn{\x_i}\leq \order{\sqrt{p}},|\zm{y_i}|\leq \order{K\sqrt{p}}$. Combining this with \eqref{rel delta}, we conclude that for all $1\leq i\leq n$
\begin{align}
|\zm{f(\x_i)-y_i}^2-\zm{f_\delta(\x_i)-y_i}^2|\leq \order{\KB \delta p}.\label{eqeq1}
\end{align}
Expanding the square differences in the same way, an identical argument shows the following two deviation bounds,% on the $T_2$ term
\begin{align}
&|\E[\zm{f(\x_i)-y_i}^2-\zm{f_\delta(\x_i)-y_i}^2]|\leq \order{\KB \delta p},\label{eqeq2}\\
&\frac{1}{n^2}|(\sum_{i=1}^n\zm{ y_i-f(\x_i)})^2-(\sum_{i=1}^n\zm{ y_i-f_\delta(\x_i)})^2|\leq \order{\KB \delta p}\nn.
\end{align}
Combining these three inequalities (\eqref{eqeq1} and \eqref{eqeq2}) and substituting them in \eqref{erm loss}, we conclude that for all neighbors $f_\delta,f$,
\[
|E(f)-E(f_\delta)|\leq \order{\KB \delta p}.
\]
Next we set $\delta=c{\eps/(p\KB)}$ for a sufficiently small constant $c>0$, to find that with probability at least $1-\exp(-{n})$, $\sup_{f\in\Fc} |E(f)|\leq \eps$ holds as long as the number of samples $n$ obeys $n\geq \order{\max\{\eps^{-1},\eps^{-2}\}\KB^2s\log \frac{Cp\KB}{c\eps}}$. We define $\Lc_{erm}(f)=\frac{1}{n}\sum_{i=1}^n (y_i-f(\x_i)-\ravg{f})^2$ and $\Lc_{pop}(f)=\E[\zm{f(\x)-y}^2]$. We also denote the CERM minimizer $f_{erm}=\arg\min_{f\in\Fc} \Lc(f)$ and population minimizer $f_{pop}=\min_{f\in \Fc}\Lc_{pop}(f)$. Inequality \eqref{cerm eq} follows from the facts that we simultaneously have $|E(f_{erm})|\leq \order{\eps}$ and $|E(f_{pop})|\leq \order{\eps}$ which implies that
\begin{align*}\Lc_{pop}(f_{erm})\leq \Lc_{erm}(f_{erm})+\order{\eps}\leq \Lc_{erm}(f_{pop})+\order{\eps}\leq\Lc_{pop}(f_{pop})+\order{\eps},
\end{align*}
concluding the proof.
%using the fact that both $f_{erm}$ and the population minimizer $f_{pop}$ obeys .
\end{proof}

\subsection{Proof of Theorem \ref{ssa}}
In this section we will show how Theorem \ref{ssa} follows from Theorem \ref{cerm thm}. To this aim we need to show that $\Fc_{\hat{\lan},B}$ has a small Lipscshitz covering number. We construct the following cover $\Fc'$ for the set $\Fc_{\hat{\lan},B}$. Let $B'=B^{1/D}$. Pick a $\delta\leq B'$ $\ell_2$ cover $\Cc$ of the interval $[-B',B']$ which has size $2B'/\delta$. Let $\Cc_i$ be identical copies of $\Cc$. We set
\[
\Fc'=\{\func{\beta_\el \lah{\el}}\bgl \beta_\el\in \Cc_\el,~1\leq\el\leq D\}.
\]
In words, we construct CNNs by picking numbers from the cartesian product $\Cc_1\times \dots \Cc_D$ and scaling $\{\lah{\el}\}_{\el=1}^D$ with them. We now argue that $\Fc'$ provides a cover of $\Fc$. Given $f\in\Fc$ with scalings $\beta_\el$, there exists $f'\in\Fc'$ which uses scalings $\beta_\el'$ such that $|\beta_\el-\beta_\el'|\leq\delta$. Now, let $f_\el$ be the function with scalings $\beta'_i$ until $i=\el$ and $\beta_i$ for $i>\el$. Note that $f_0=f$, $f_D=f'$. With this, we write
\[
\|f-f'\|_{L_\infty}\leq\sum_{i=1}^D \|f_{i+1}-f_i\|_{L_\infty}.
\]
Observe that $f_{i-1}$ and $f_{i}$ have equal layers except the $i$th layer. Let $g_1$ be the function of the first $i-1$ layers and $g_2$ be the function of layers $i+1$ to $D$. We have that $f_{i+1}(\x)-f_i(\x)=g_2(\phi(\Kb_i(g_1(\x))))-g_2(\phi(\Kb_i'(g_1(\x))))$ where $\Kb_i,\Kb_i'$ differ in the $i$th layer kernels of $f$ and $f'$ created from $\beta_i\lah{i}$ and $\beta_i'\lah{i}$ respectively. Also, observe that $g_1$ is $B'^{i-1}$ Lipschitz and $g_2(\phi(\cdot))$ is $B'^{D-i}$ Lipschitz functions. Hence,
\begin{align}
|g_2(\phi(\Kb_i(g_1(\x))))-g_2(\phi(\Kb_i'(g_1(\x))))|\leq B'^{D-i}|\Kb_i(g_1(\x))-\Kb_i'(g_1(\x))|\leq B'^{D-i}\delta B'^{i-1}\leq \delta B'^{D-1}.
\end{align}
Summing over all $i$, this implies that $\|f-f'\|_{L_\infty}\leq D\delta B'^{D-1}$. Recalling $|\Fc'|\leq (2B'/\delta)^D$ and setting $\delta=\eps/(DB'^{D-1})$, the $\eps$ covering number of $\Fc_{\hat{\lan},B}$ is $N_\eps\leq (2DB'^D/\eps)^D=(2DB/\eps)^D$ which implies $\log N_\eps=D\log (\frac{2DB}{\eps})$. Now, since all kernels have Euclidean norm bounded by $B'$, we have $\|\func{}\|_{L_\infty}\leq B$ and $\|f\|_{L_\infty}\leq B$ for all $f\in \Fc$. This also implies $\tsub{\zm{\func{\x}}}=\order{B}$. Hence, we can apply Theorem \ref{cerm thm} to conclude the proof of Theorem \ref{ssa}.% in the regime $\eps\leq $
%Pick a $\delta$ cover $\Cc'$ of the interval $[0,R]$ of size $R/\delta$. Create a cover $\Cc$ by picking all scalings from $\Cc'$ and all $2^D$ sign configurations for $\pm\lah{\el}$. $|\Cc|\leq 2^DR/\delta$. For any $f\in \Fc$, pick $f'\in \Cc$ such that $f'$ has the same sign pattern and its scaling is $\delta $ close. Then, $f'-f$ is just a scaled version of $f$ and $\delta \lipp$ Lipschitz. Hence, $\Cc$ is a $\delta$ cover of $\Fc$ with respect to $\|\cdot\|_{L_\infty}$. Next, observe that $\func{\cdot}$ is $\lipp$ Lipschitz. Now, applying Theorem \ref{cerm thm}, we find that if $n\geq \order{\max\{\eps^{-1},\eps^{-2}\}(\lipp+R)^2D\log (\lipp Rp/\eps)}$, \eqref{sign scale eq} holds.
\section*{Acknowledgements}
M.S. is supported by the Air Force Office of Scientific Research Young Investigator Program (AFOSR-YIP) under award number FA9550-18-1-0078, a Google Faculty Research Award and a grant by the Northrop Grumman Cybersecurity Research Consortium.

\small{
\bibliographystyle{plain}
\bibliography{Bibfiles}
}

%\appendix
\end{document}